\definecolor{crimson}{RGB}{165,28,48}
\definecolor{codegreen}{rgb}{0,0.6,0}
\definecolor{codegray}{rgb}{0.5,0.5,0.5}
\definecolor{codepurple}{rgb}{0.58,0,0.82}
\definecolor{backcolour}{rgb}{0.95,0.95,0.92}
\lstdefinestyle{mystyle}{
    backgroundcolor=\color{backcolour},   
    commentstyle=\color{codegreen},
    keywordstyle=\color{magenta},
    numberstyle=\tiny\color{codegray},
    stringstyle=\color{codepurple},
    basicstyle=\ttfamily\footnotesize,
    breakatwhitespace=false,         
    breaklines=true,                 
    captionpos=b,                    
    keepspaces=true,                 
    numbers=left,                    
    numbersep=5pt,                  
    showspaces=false,                
    showstringspaces=false,
    showtabs=false,                  
    tabsize=2
}
\theoremstyle{plain}
\newtheorem{theorem}{Theorem}[section]
\newtheorem{proposition}[theorem]{Proposition}
\newtheorem{lemma}[theorem]{Lemma}
\newtheorem{corollary}[theorem]{Corollary}
\theoremstyle{definition}
\newtheorem{definition}[theorem]{Definition}
\theoremstyle{remark}
\def\w{\bm w}
\def\x{\bm x}
\def\M{\bm M}
\def\H{\bm H}
\def\K{\bm K}
\def\Sh{\Sh}
\def\y{\bm y}
\def\a{\bm a}
\def\A{\bm A}
\def\B{\bm B}
\def\k{\bm k}
\def\O{\bm O}
\def\X{\bm X}
\def\Z{\bm Z}
\def\S{\bm \Sigma}
\def\df{\mathrm{df}}
\def\Sh{\hat {\bm \Sigma}}
\def\e{\bm \epsilon}
\def\a{\bm \alpha}
\def\Kh{\hat {\bm K}}
\DeclareMathOperator{\cov}{cov}
\DeclareMathOperator{\Tr}{Tr}
\DeclareMathOperator*{\argmin}{arg\,min}
\begin{document}

\title{Risk and cross validation in ridge regression with correlated samples}

\author{Alexander Atanasov}
\altaffiliation{A.A. and J.A.Z.-V. contributed equally to this work.}
\affiliation{Department of Physics, Harvard University, Cambridge, MA}
\affiliation{Center for Brain Science, Harvard University, Cambridge, MA}

\author{Jacob A. Zavatone-Veth}
\altaffiliation{A.A. and J.A.Z.-V. contributed equally to this work.}
\email{jzavatoneveth@fas.harvard.edu}
\affiliation{Center for Brain Science, Harvard University, Cambridge, MA}
\affiliation{Society of Fellows, Harvard University, Cambridge, MA}

\author{Cengiz Pehlevan}
\email{cpehlevan@seas.harvard.edu}
\affiliation{John A. Paulson School of Engineering and Applied Sciences, Harvard University, Cambridge, MA}
\affiliation{Center for Brain Science, Harvard University, Cambridge, MA}
\affiliation{Kempner Institute for the Study of Natural and Artificial Intelligence, Harvard University, Cambridge, MA}

\date{\today}

\begin{abstract}
    Recent years have seen substantial advances in our understanding of high-dimensional ridge regression, but existing theories assume that training examples are independent. By leveraging techniques from random matrix theory and free probability, we provide sharp asymptotics for the in- and out-of-sample risks of ridge regression when the data points have arbitrary correlations. We demonstrate that in this setting, the generalized cross validation estimator (GCV) fails to correctly predict the out-of-sample risk. However, in the case where the noise residuals have the same correlations as the data points, one can modify the GCV to yield an efficiently-computable unbiased estimator that concentrates in the high-dimensional limit, which we dub CorrGCV. We further extend our asymptotic analysis to the case where the test point has nontrivial correlations with the training set, a setting often encountered in time series forecasting. Assuming knowledge of the correlation structure of the time series, this again yields an extension of the GCV estimator, and sharply characterizes the degree to which such test points yield an overly optimistic prediction of long-time risk. We validate the predictions of our theory across a variety of high dimensional data.
\end{abstract}

\maketitle

\section{Introduction}

Statistics classically assumes that one has access to independent and identically distributed (i.i.d.) samples. However, this fundamental assumption is often violated when one considers data sampled from a time series---\textit{e.g.}, in the case of financial, climate, or neuroscience data \cite{bouchaud2003risk,mudelsee2014climate,williams2021statistical}---rendering results obtained under the i.i.d. assumption inapplicable. In particular, estimators of the out-of-sample prediction risk based on cross-validation assume independence \cite{golub1979generalized,craven1978smoothing,bates2024cv,xu2012leavesubjectout,gu2005optimal,xu2018optimal,xu2019distributed}. To enable accurate prediction of risk for regression from timeseries data, it is imperative that correlations be taken into account. 

In the paradigmatic setting of ridge regression, a significant body of recent research has aimed to provide sharp asymptotic characterizations of the out-of-sample risk when the in-sample risk is estimated using i.i.d. samples \cite{hastie2022surprises,dobriban2018prediction,advani2016convex,canatar2021spectral,loureiro2021learning,atanasov2024scaling,mel2021anisotropic,mel2021theory,bordelon2020spectrum,jacot2020kernel,gerace2020generalisation}. The crucial feature of these high-dimensional asymptotics is that they allow both the dimensionality of the covariates and the number of examples to be large.\footnote{In contrast, classical asymptotics assume that the dimension is small relative to the number of examples \cite{hastie2009elements}.} This body of work reveals two broad principles: First, ridge regression has a spectral bias towards learning functions aligned with eigenfunctions of the feature covariance \cite{canatar2021spectral,bordelon2020spectrum,mel2021anisotropic,mel2021theory}. Second, ridge regression displays \emph{Gaussian universality} in high dimension, \textit{i.e.}, the out-of-sample risk for a given dataset will under general conditions be asymptotically identical to that for a Gaussian dataset with matched first and second moments \cite{hastie2022surprises,montanari2022universality,hu2022universality}. Importantly, nearly all of these works assume the training examples are i.i.d.. A rare exception is a recent paper by \citet{bigot2024variance_profile}, who allow for \emph{independent but non-identically distributed Gaussian data} with per-example modulation of total variance.

These sharp asymptotics are \emph{omniscient} risk estimates: they assume one has access to the true joint distribution of covariates and labels from which the training set is sampled. If one wants to perform hyperparameter tuning in practice---for ridge regression this of course means tuning the ridge parameter---a non-omniscient estimate is required. A standard approach to this problem is the method of generalized cross-validation, which dates back at least to the 1970s \cite{golub1979generalized,craven1978smoothing,bates2024cv}. The GCV estimates the out-of-sample risk by applying a multiplicative correction to the in-sample risk, which itself can be estimated from the data. Recent works have shown that the GCV is asymptotically exact in high dimensions: the limiting GCV estimate coincides with the omniscient asymptotic \cite{jacot2020kernel, hastie2022surprises, atanasov2024scaling}. Though classical cross-validation and the GCV estimator assume i.i.d. training data points, several works have aimed to extend these methods to cases in which there are correlations \cite{altman1990smoothing,opsomer2001nonparametric,carmack2012gccv,rabinowicz2022cross,lukas2010robust,wang1998smoothing}. However, most of these works focus on correlations only in the label noise, and none of them show that their estimators are asymptotically exact in high dimensions. Indeed, we will show that none of the previously-proposed corrections to the GCV are asymptotically exact in high dimensions, and require further modification to accurately predict out-of-sample risk.

Here, we fill this gap in understanding by providing a detailed asymptotic characterization of ridge regression with correlated samples. We first show that even under relatively mild sample-sample correlations previously-obtained omniscient risk asymptotics assuming i.i.d. data are not predictive. Correspondingly, the ordinary GCV estimator fails to accurately estimate the out-of-sample risk. We then compute sharp high-dimensional asymptotics for the out-of-sample risk when the training examples are drawn from a general matrix Gaussian with anisotropic correlations across both features and samples. When the test point is uncorrelated with the training data and the label noise has the same correlation structure as the covariates, we show that there exists a corrected GCV estimator. We term this estimator the \textbf{CorrGCV}. Unlike previous attempts to correct the original GCV, CorrGCV is asymptotically exact (Figure \ref{fig:motivation}).\footnote{Code to compute the CorrGCV and reproduce all of our experiments is publicly available on GitHub at \href{https://github.com/Pehlevan-Group/S_transform}{https://github.com/Pehlevan-Group/S\_transform}.}

In deriving the CorrGCV, we uncover an interesting duality between train-test covariate shift and a mismatch in the covariate and noise correlations. Finally, we extend all of these results to the case when the test point is correlated with the training points. Focusing on the setting of time series regression, this gives us a sharp characterization of how accuracy depends on prediction horizon, and makes precise the notion that testing on near-horizon data gives an overly optimistic picture of long-term forecast accuracy. In all, our results both advance the theoretical understanding of ridge regression in high dimensions and provide application to time series data. 

\begin{figure}
    \centering
    \includegraphics[width=0.405\linewidth]{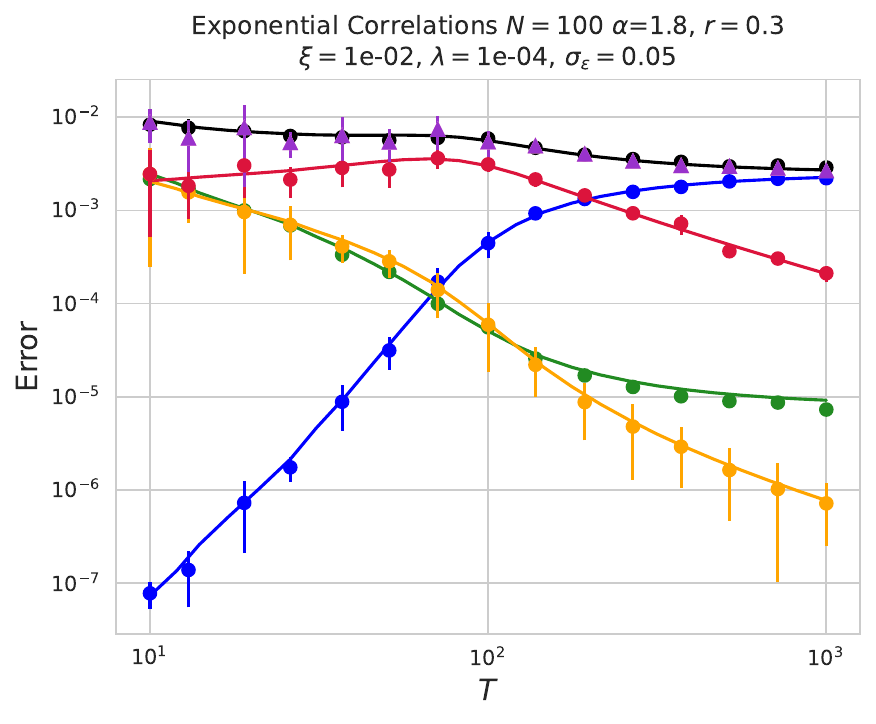}
    \hfill
    \includegraphics[width=0.5\linewidth]{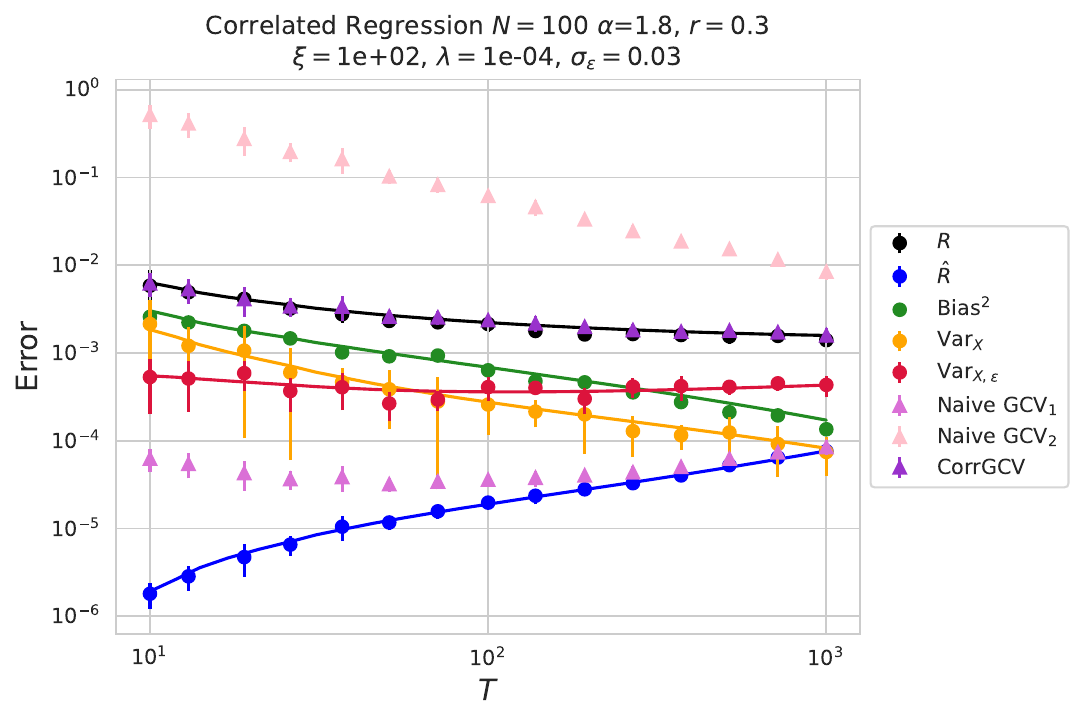}

    \caption{Empirical risk $\hat R$, out-of-sample risk $R$, and fine-grained bias-variance decompositions for ridge regression with structured features and correlated examples. Theory is plotted in solid lines. Experiments with error bars over 10 dataset repetitions are plotted as markers. The data points are exponentially correlated as $\mathbb E[\x_t \cdot \x_s] \propto e^{-|t - s|/\xi}$. Left: Weak correlations, $\xi = 10^{-2}$. Here, the generalized cross validation method (orchid) as well as its other proposed corrections in the presence of correlations (pink, purple) all agree and are overlaid. Right: Strong correlations, $\xi = 10^{2}$. Here, we see that the naive estimates of the GCV proposed in prior works fail in this setting. They either underestimate (purple) or overestimate (pink) the out-of-sample risk. We define the naive GCVs in the text, and connect them with prior proposals in Appendix \ref{app:previous_estimators}. By contrast, our proposed estimator, CorrGCV, correctly predicts the out-of-sample risk in all settings.}
    \label{fig:motivation}
\end{figure}

\section{Setup and notation}

We begin by briefly introducing the setup of our work, and fixing notation. We consider ridge regression on a dataset $\mathcal D = \{\x_t, y_t\}_{t=1}^T$ of $T$ data points, with $N$-dimensional covariates $\x_t \in \mathbb R^N$ and scalar labels $y_{t}$. We minimize the mean-squared error over this dataset, with a ridge penalty: 
\begin{equation}
    L(\w) = \frac{1}{T} \sum_{t=1}^T (y_t - \x_t^\top \w)^2 + \lambda \Vert\w\Vert^2.
\end{equation}
We write $\hat \w = \mathrm{argmin}_{\w} L(\w)$. As long as $\lambda > 0$ the solution is unique. We will consider both the underparameterized case of classical statistics  where $T > N$ as well as the modern overparameterized setting where $N > T$ \cite{hastie2022surprises}. We define the overparameterization ratio $q \equiv N/T$. All of our results will hold exactly in the limit of $N, T \to \infty$ with $q$ fixed. 

We now state our statistical assumptions on the data. Defining the design matrix $\X \in \mathbb{R}^{T \times N}$ such that $\X_{t i} = [\x_t]_i$ and collecting the labels into a vector $\y \in \mathbb R^T$, we assume the labels are generated from a deterministic linear ``teacher'' $\bar \w \in \mathbb{S}^{N-1}$ plus noise $\e \in \mathbb{R}^{T}$ as:
\begin{equation}
    \y = \X \bar{\w} + \e. 
\end{equation}
We assume that $\X$ is Gaussian, with zero mean and
\begin{align}  \label{eqn:gen_model}
    \mathbb E[x_{i, t} x_{j, s}] = \Sigma_{ij} K_{t s}
\end{align}
for a feature-feature covariance $\S \in \mathbb{R}^{N \times N}$ and a sample-sample correlation $\K \in \mathbb{R}^{T \times T}$. We have the representation $\X = \K^{1/2} \Z \S^{1/2}$ for $\Z \in \mathbb R^{T \times N}$ a matrix with i.i.d. standard Gaussian elements $Z_{ti} \sim \mathcal{N}(0, 1)$, where $\S^{1/2}$ denotes the principal square root of the positive-definite symmetric matrix $\S$. We assume that the noise $\e$ is independent of $\X$, and is Gaussian with mean zero and covariance 
\begin{equation}
    \mathbb E[\epsilon_{t} \epsilon_{s}] = \sigma_{\epsilon}^{2} K'_{t s}.
\end{equation}
Without loss of generality, we will always assume the normalization $\frac{1}{T} \Tr(\K) = 1$, as the overall scale can be absorbed into $\S$. We will often assume that the data are statistically stationary, in which case $K_{tt} = 1$ for all $t$; we will explicitly highlight when we impose this condition. Similarly, we normalize $\frac{1}{T} \Tr(\K') = 1$, as the scale here can be absorbed into $\sigma_{\epsilon}$. 

We contrast this with prior treatments of linear regression in the proportional limit where $\K, \K'$ were chosen to be the identity matrix \cite{hastie2022surprises,dobriban2018prediction,advani2016convex,canatar2021spectral,loureiro2021learning,atanasov2024scaling,mel2021anisotropic,mel2021theory,bordelon2020spectrum,jacot2020kernel,gerace2020generalisation}, and where the original GCV estimator \cite{golub1979generalized} applies. In Section \ref{sec:matched_corr} we consider the case where $\K = \K'$. There, we show that the GCV estimator \cite{golub1979generalized} has a natural analogue. In Section \ref{sec:mismatched_corr} we consider the more general case and show that there is an obstruction to a GCV estimator.

We define the \textbf{empirical covariance} $\Sh \equiv \frac{1}{T} \X^\top \X$ and \textbf{kernel Gram matrix} $\Kh \equiv \frac{1}{T} \X \X^\top$. Writing $\y \in \mathbb R^T$ as the vector of training labels and $\e \in \mathbb R^T$ as the vector of label noises, we have:
\begin{equation}\label{eq:w_estimator}
    \hat \w = (\Sh + \lambda)^{-1} \frac{\X^\top \y}{T} = \Sh (\Sh + \lambda)^{-1} \bar \w + (\Sh + \lambda)^{-1} \frac{\X^\top \e}{T}.
\end{equation}
We also let $\hat \y = \X \hat \w$ be the vector of predictions on the training set. We are interested in analytically characterizing the in-sample and out-of-sample risks. The in-sample risk is defined as:
\begin{equation}
    \hat R_{in}(\hat \w) \equiv \frac{1}{T} \Vert\y - \hat \y\Vert^2.
\end{equation}
For the out-of-sample risk, we consider a held-out test point $\x$ and label noise $\epsilon$ drawn from the same marginal distribution as a single example $\x_t, \epsilon_t$, discarding the time-dependent scale factor $K_{tt}$ in the marginal: 
\begin{equation}
    R_{out}(\hat \w) \equiv  \mathbb E_{\x, \epsilon} (\x^\top \bar \w  + \epsilon - \x^\top \hat \w )^2 = \underbrace{(\bar \w - \hat \w)^\top \S  (\bar \w - \hat \w)}_{R_g} + \sigma_\epsilon^2.
\end{equation}
Here we have identified the generalization error $R_g$ as the excess risk, \textit{i.e.}, $R_{out}$ minus the Bayes error $\sigma_\epsilon^2$. We study the case where the test point and label noise are drawn independently of the training set in Section \ref{sec:uncorr_test}. We study the more general case where $\x, \epsilon$ have nontrivial correlations with the training set in Section \ref{sec:corr_test}.

One could instead consider an estimator based on a weighted loss, \textit{i.e.}, $\hat{\w}_{\M} = \argmin_{\w} L_{\M}(\w)$ for
\begin{align}
    L_{\M}(\w) = \frac{1}{T} (\X\w-\y)^{\top}\M (\X\w-\y) + \lambda |\w|^2
\end{align}
for some positive-definite matrix $\M$ \cite{bouchaud2003risk,hastie2009elements}. Indeed, the Bayesian minimum mean squared error estimator is equivalent to minimizing this loss with $\M = (\K')^{-1}$ (Appendix \ref{app:weighted_risk}). However, under our assumptions on the data this is equivalent to considering an isotropic loss under the mapping $\K \gets \M^{1/2} \K \M^{1/2}$ and $\K' \gets \M^{1/2} \K' \M^{1/2}$. As a result, our asymptotics for $R_{g}$ apply immediately to general choices of $\M$.


\section{Deterministic equivalences}\label{sec:deteq}

We now introduce the key tools of our analysis: deterministic equivalents for the sample covariance. We first review standard aspects of the free probability approach to random matrices, and then state the required deterministic equivalents. Our presentation of free probability follows \citet{potters2020first}. 

\subsection{Weak deterministic equivalents}

We define the first and second \textbf{degrees of freedom} of a matrix $\A \in \mathbb R^{N \times N}$ as
\begin{equation}\label{eq:df_defn}
    \begin{split}
        \df_{\A}^1(\lambda) &= \frac{1}{N} \mathrm{Tr} [\A (\A + \lambda)^{-1}], 
        \\
        \df_{\A}^2(\lambda) &= \frac{1}{N} \mathrm{Tr} [\A^2 (\A + \lambda)^{-2}].
    \end{split}
\end{equation}
When it is clear from context, we will write these as $\df_1, \df_2$. The $S$-transform $S(\df)$ is a formal function of a variable $\df$ defined as
\begin{equation}\label{eq:S_defn}
    S_{\A}(\df) \equiv \frac{1-\df}{\df\, \df^{-1}_{\A}(\df)}.
\end{equation}
Here $\df^{-1}_{\A}$ is the functional inverse of the equation $\df = \df_{\A}^1(\lambda)$, \textit{i.e.}, $\df^{-1}_{\A}(\df_{\A}^1(\lambda)) = \lambda$. We will use the symbol $\df$ to highlight its role as a formal variable in the $S$-transform, whereas we will call $\df_1$ the actual value of the degrees of freedom for a given matrix $\A$ at a ridge $\lambda$. Our definition of $S_{\A}$ differs by a sign from the common definition of the $S$ transform in terms of the $t$-function $t_{\A}(\lambda) = -\df_{\A}^1(-\lambda)$. \eqref{eq:S_defn} implies that:
\begin{equation}
    \df_{\A}^1(\lambda) = \frac{1}{1 +  \lambda S_{\A}(\df_{\A}^1(\lambda))}.
\end{equation}
The $S$-transform plays a crucial role in high-dimensional random matrix theory and free probability because for two symmetric matrices $\A, \B$ that are \textbf{free} of one another, taking the symmetrized product $\A*\B \equiv \A^{1/2} \B \A^{1/2}$ with $\A^{1/2}$ the principal matrix square root yields:
\begin{equation}
    S_{\A * \B}(\df) = S_{\A}(\df) S_{\B}(\df).
\end{equation}
This implies the following \textbf{subordination relation}:
\begin{equation}\label{eq:weak_det_equiv_1}
    \df^1_{\A * \B}(\lambda) =  \frac{1}{1 + \lambda S_{\A * \B}(\df^1_{\A \B}(\lambda))  }
    = \frac{1}{1 + \lambda S_{\A }(\df^1_{\A \B}(\lambda)) S_{\B}(\df^1_{\A \B}(\lambda)) } 
    = \df^1_{\A} ( \lambda S_{\B}(\df^1_{\A \B}(\lambda))). 
\end{equation}
We define what it means for two random variables to be free, and give a longer discussion on subordination relations and $R$ and $S$ transforms in Appendix \ref{app:free}. Often, one views $\B$ as a source of multiplicative noise, and $\A$ as deterministic. Then $\A *\B$ is a random matrix. The above equation thus relates the degrees of freedom of a random matrix to the degrees of freedom of a deterministic one. For this reason it is known as (weak) \textbf{deterministic equivalence}. 

We now specialize to the Wishart matrices $\Sh = \frac{1}{T} \X^{\top} \X$ and $\Kh = \frac{1}{T} \X \X^{\top}$ we consider here, though a subset of the subsequent results extend to more general ensembles (see Appendix \ref{app:free} and \cite{potters2020first,atanasov2024scaling}). Define the shorthand $\df_1 \equiv \df^1_{\S}(\kappa)$, $\tilde \df_1 \equiv \df^1_{\K}(\tilde \kappa)$, $\df_2 \equiv \df^2_{\S}(\kappa)$, and $\tilde \df_2 \equiv \df^2_{\K}(\tilde \kappa)$. Then, we define $\kappa$ and $\tilde \kappa$ via:
\begin{equation}\label{eq:kappa_corr_defns}
    \kappa = \lambda S(\df_1), \quad \tilde \kappa = \lambda \tilde S(\tilde \df_1).
\end{equation}
As the multiplicative noise is a \emph{structured} Wishart matrix, the $S$-transforms appearing in \eqref{eq:kappa_corr_defns} are
\begin{equation}\label{eq:S_corr_defns}
\begin{split}
    S(\df_1) &= S_{\frac1T \Z^\top \Z}(\df_1)  S_{\K}\left(\frac{N}{T} \df_1\right), \\ \tilde S(\tilde \df_1) &= S_{\frac1T \Z \Z^\top}(\tilde \df_1)  S_{\S}\left(\frac{T}{N} \tilde \df_1 \right).
\end{split}
\end{equation}
For uncorrelated data ($\K = \mathbf{I}$), $S_{\K} = 1$. Weak deterministic equivalents for Wishart matrices with general correlation structure have long been a subject of study in random matrix theory due to their applications to covariance matrix estimation \cite{burda2005moments,burda2005density,burda2022cleaning,potters2020first}.

\subsection{One-point strong deterministic equivalents}

The equivalence \eqref{eq:weak_det_equiv_1} extends to a class of `equalities' of matrices known as strong deterministic equivalents: 
\begin{definition}[Strong deterministic equivalence]
    For two sequences of (possibly random) matrices $\A$ and $\B$ indexed by their common size $N$, we say that $\A$ and $\B$ are \textbf{deterministically equivalent} and write $\A \simeq \B$ if $\Tr(\A\M)/\Tr(\B\M) \to 1$ in probability as $N \to \infty$ for any sequence of test matrices $\M$ of bounded spectral norm.\footnote{Our convention here follows \citet{bach2024high}.}
\end{definition}
For the Wishart matrices $\Sh$ and $\Kh$, we have: 
\begin{lemma}\label{lem:onept} Define $\kappa$ and $\tilde{\kappa}$ as in \eqref{eq:kappa_corr_defns}. Then,
    \begin{equation} \label{eq:strong_det_equiv_AB}
    \begin{split}
        \Sh (\Sh + \lambda)^{-1} &\simeq \S (\S + \kappa)^{-1}, \\ \Kh (\Kh + \lambda)^{-1} &\simeq \K (\K + \tilde \kappa)^{-1} . 
    \end{split}
    \end{equation}
\end{lemma} %
\begin{proof}
    We derive \eqref{eq:strong_det_equiv_AB} using a diagrammatic argument in Appendix \ref{sec:1pt_deriv}; see \citet{potters2020first,bun2016rotation,atanasov2024scaling} for alternative proofs.
\end{proof}
Using the one-point deterministic equivalents, we see that $\kappa$ and $\tilde{\kappa}$ are related by the identity
\begin{equation}
    q \df_1 \equiv q\df_{\S}^1(\kappa) \simeq q\df_{\Sh}^1(\lambda) = \df_{\Kh}^1(\lambda) \simeq  \df_{\K}(\tilde \kappa)  \equiv  \tilde \df_1
\end{equation}
In addition, we show in Appendix \ref{app:derivations} that $\kappa$ and $\tilde{\kappa}$ satisfy the \emph{duality relation}: 
\begin{align}
    \frac{\kappa \tilde{\kappa}}{\lambda} = \frac{1}{\tilde{\df}_{1}} .
\end{align}
In physical terms, we can interpret \eqref{eq:strong_det_equiv_AB} as a renormalization effect: the effect of the random fluctuations in $\B$ can be absorbed into a \textbf{renormalized ridge} $\kappa$ \cite{atanasov2024scaling}. This renormalization generates implicit regularization. Even in the limit of zero regularization, one can have $\lim_{\lambda \to 0} \kappa > 0$ (Appendix \ref{app:ridgeless}). 

\subsection{Two-point strong deterministic equivalents}

As \eqref{eq:strong_det_equiv_AB} involves a trace against a single test matrix, we refer to it as a `one-point' equivalent. We will also require `two-point' equivalents involving pairs of resolvents: 
\begin{lemma}\label{lem:twopt1}
    Let $\S'$ be an $N \times N$ test matrix. Then,
    \begin{equation}
         (\Sh + \lambda)^{-1} \S' (\Sh + \lambda)^{-1} \simeq  S^2 (\S+ \kappa)^{-1} \S' (\S+ \kappa)^{-1} + S^2 (\S+ \kappa)^{-2} \S \frac{\gamma_{\S, \S'}}{1-\gamma}.    
    \end{equation}
    where we define
    \begin{align}
        \gamma \equiv \frac{\df_2}{\df_1} \frac{\tilde \df_2}{\tilde \df_1},
        \quad \mathrm{and} \quad 
        \gamma_{\S, \S'} \equiv  \frac{\df_{\S, \S'}^2  \df^2_{\K}}{\df_1 \tilde \df_1}
    \end{align}
    for
    \begin{equation}
        \df_{\S, \S'}^2 \equiv \frac{1}{N} \Tr\left[ \S \S' (\S + \kappa)^{-2} \right] .
    \end{equation}
\end{lemma}
\begin{proof}
    See Appendix \ref{sec:2pt_deriv}. 
\end{proof}
When $\K = \mathbf{I}$, this recovers two-point equivalents proved in previous works \cite{bach2024high,patil2024asymptotically}. The first term is a ``disconnected'' component coming from separately averaging the two resolvents, while the second term is a ``connected'' component coming from averaging them together. These two contributions correspond to the bias-variance decomposition of the estimator over the training data. An analogous deterministic equivalent holds for $\K'$; we state this explicitly in Appendix \ref{app:derivations}. 

We also derive a final required two-point equivalent: 
\begin{lemma}\label{lem:twopt2}
    Let $\K'$ be a $T \times T$ test matrix. Then,
    \begin{equation}
        \frac{1}{T} (\Sh + \lambda)^{-1} \X^\top \K' \X (\Sh + \lambda)^{-1} \simeq \lambda^{2} S^2 \tilde S^2 (\S + \kappa)^{-2} \S \frac{\df_{\K, \K'}^2}{1-\gamma} ,
    \end{equation}
    where we let $\df_{\K, \K'}^2 \equiv \frac{1}{T} \Tr\left[ \K \K' (\K + \tilde{\kappa})^{-2} \right]$.    
\end{lemma}
\begin{proof}
    See Appendix \ref{sec:2pt_deriv}. 
\end{proof}

\section{ Predicting an uncorrelated test set}\label{sec:uncorr_test}

\subsection{Warm-up: Linear regression without correlations}\label{sec:warm_up}

We begin by reviewing the known results in the case where the data points are assumed to be drawn i.i.d. from a distribution with covariance $\S$. This is case of $\K = \mathbf I$ introduced above. In this case, we require only $\kappa$ and not $\tilde{\kappa}$, which satisfies the simplified equation
\begin{equation}\label{eq:kappa_defn_uncorrelated}
    \kappa = \lambda S_{\frac{1}{T} \Z^\top \Z}(\df_1) = \frac{\lambda}{1 - q \df_1},
\end{equation}
where $\df_1 \equiv \df^1_{\S}(\kappa) \simeq \df^1_{\Sh}(\lambda)$. Here we have used that the $S$-transform of a Wishart matrix is $S_{\frac{1}{T} \Z^\top \Z} = (1-q \df_1)^{-1}$. At this point, we note that depending on whether one picks $\df_1 = \df^1_{\S}(\kappa)$ or $\df_1 = \df^1_{\Sh}(\lambda)$, this either gives a self-consistent equation given omniscient knowledge of $\S$ or a way to estimate $\kappa$ from the data alone, namely by computing $\df_{\Sh}(\lambda)$. Then, one has the following deterministic equivalents: 
\begin{theorem}\label{thm:linreg}
    For uncorrelated data ($\K = \mathbf{I}$), one has 
    \begin{equation}
        R_g \simeq \frac{\kappa^2}{1-\gamma} \bar \w^\top \S (\S + \kappa)^{-2} \bar \w + \frac{\gamma}{1-\gamma} \sigma_\epsilon^2,
    \end{equation}
    where $\gamma = q \df_2$, and
    \begin{equation}
        \hat R_{in} \simeq \frac{\lambda^2}{1-\gamma} \bar \w^\top \S (\S + \kappa)^{-2} \bar \w + \frac{\lambda^2}{\kappa^2} \frac{1}{1-\gamma} \sigma_\epsilon^2.
    \end{equation}
\end{theorem}
\begin{proof}
    This result is well-known \cite{hastie2022surprises,atanasov2024scaling,bordelon2020spectrum,canatar2021spectral,dobriban2018prediction,loureiro2021learning}. It follows as a special case of the proof in Appendix \ref{app:corr_sample_uncorr_test}.
\end{proof}
This yields the \textbf{generalized-cross-validation estimator} or \textbf{GCV} \cite{golub1979generalized,craven1978smoothing}:
\begin{equation}
    \boxed{R_{out} \simeq  \frac{\kappa^2}{\lambda^2} \hat R_{in} = S^2 \hat R_{in}. }
\end{equation}
Since $S$ depends only on $\df_1$, which can be estimated solely from the data, the right hand side of this equation yields a way to estimate the out of sample error from the training error alone.  The GCV estimator has also appeared in recent literature as the \textbf{kernel alignment risk estimator} or KARE \cite{jacot2020kernel}. The fact that the GCV estimator is directly related to the $S$-transform in the free probability regime $N,T\to\infty$ was first pointed out in our previous work \cite{atanasov2024scaling}.

\subsection{Correlated data with identically correlated noise}\label{sec:matched_corr}

We now state the results for general correlated data with matched correlations between the covariates and noise ($\K = \K'$). Using the deterministic equivalents introduced in Section \ref{sec:deteq}, we obtain:
\begin{theorem}\label{thm:matchedk}
    Assume $\K = \K'$. Then, we have 
    \begin{equation}\label{eq:correlated_risk}
        R_g \simeq \frac{\kappa^2}{1-\gamma} \bar \w^\top \S (\S + \kappa)^{-2} \bar \w + \frac{\gamma}{1-\gamma} \sigma_\epsilon^2, 
    \end{equation}
    where
    \begin{equation}
        \gamma \equiv \frac{\df_2}{\df_1} \frac{\tilde \df_2}{\tilde \df_1}.
    \end{equation}
    Similarly, we obtain:
    \begin{equation}
            \hat R_{in} \simeq \frac{\tilde \df_1 - \tilde \df_2}{S \tilde \df_1} \frac{\kappa^2}{1-\gamma}  \bar \w^\top \S (\S + \kappa)^{-2} \bar \w +   \frac{\tilde \df_1 - \tilde \df_2}{S \tilde \df_1} \frac{1}{1 - \gamma } \sigma_{\epsilon}^{2}.
    \end{equation}
\end{theorem}%
\begin{proof}
    See Appendix \ref{app:corr_sample_uncorr_test}. 
\end{proof}
This yields an extension of the GCV estimator to correlated data. We call this the \textbf{CorrGCV} for \textbf{correlated generalized cross-validation}:
\begin{equation}
    \boxed{R_{out} =  S(\df_1) \frac{\tilde \df_1}{\tilde \df_1 - \tilde \df_2} \hat R_{in}. }
\end{equation}
This estimator is unbiased and asymptotically exact in the proportional limit of $T, N \to \infty$. Moreover, it concentrates over draws of the dataset for $T$ sufficiently large. 

\begin{figure}
    \centering
    \includegraphics[width=0.5\linewidth]{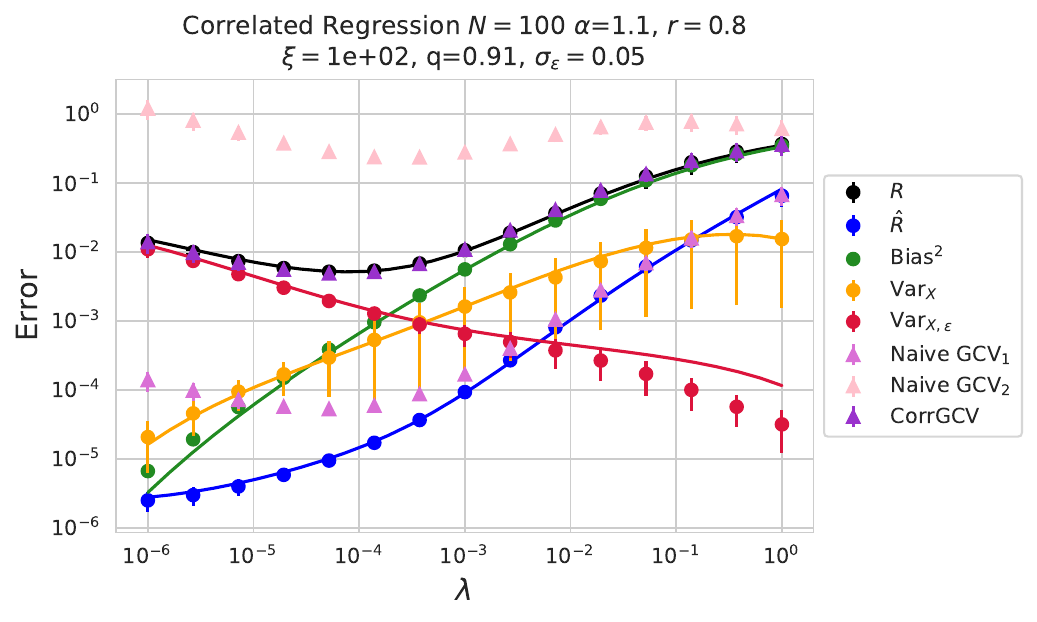}
    \caption{Estimating the optimal ridge parameter for exponential correlations using the CorrGCV. The setup here is as in Figure \ref{fig:motivation}. We see that only the CorrGCV accurately predicts the out-of-sample risk, and thus is the only estimator that allows one to correctly pinpoint the optimal ridge parameter $\lambda$. }
    \label{fig:lambda_sweep}
\end{figure}

Like the ordinary GCV, the CorrGCV can be estimated from the training data alone. We give the explicit algorithm in Appendix \ref{app:algorithms}. In Figures \ref{fig:motivation}, \ref{fig:lambda_sweep}, \ref{fig:power_law}, \ref{fig:double_descent}, and \ref{fig:correlated_test} and Appendices \ref{app:previous_estimators} and \ref{app:further_expts} we compare this estimator with other previously proposed estimators of out-of-sample risk in correlated data. In particular, we compare against the ordinary GCV
\begin{align}
    \textrm{Na\"ive GCV}_{1} = \frac{1}{(1-q\df_{1})^2},
\end{align}
which does not take into account the sample-sample correlations, and the estimator proposed by \citet{altman1990smoothing},
\begin{align}
    \textrm{Na\"ive GCV}_{2} = S^2 = (\kappa/\lambda)^2,
\end{align}
which was designed to account for sample-sample correlations in the label noise but not in the covariates. Further comparisons to the estimator of \citet{carmack2012gccv} are given in Appendix \ref{app:previous_estimators}. When the samples are correlated, only the CorrGCV accurately predicts the out-of-sample risk. As a result, it is the only correction to the GCV that allows accurate tuning of the ridge parameter $\lambda$ (Figure \ref{fig:lambda_sweep}).

An important contribution from the theory of high-dimensional regression with uncorrelated samples is a characterization of how power-law decays in covariance eigenspectra---as occur for a variety of real data---give rise to power-law decays in out-of-sample risk $R_g$ as a function of sample size $T$ \cite{caponnetto2007optimal, spigler2020asymptotic, bordelon2020spectrum, cui2021generalization,atanasov2024scaling}. Thus, it is natural to ask whether correlations between samples can alter these scaling laws. We show in Appendix \ref{app:scaling} that the correlation structure of the stationary processes we consider cannot change the decay rates. Concretely, assume that $\S$ has eigenvalues $\lambda_k \sim k^{-\alpha}$ for an exponent $\alpha$ known as the \textbf{capacity}. Further, the signal $\bar \w$ has that $\lambda_k \bar w_k^2 \sim k^{-(2 \alpha r + 1)}$ for an exponent $r$ known as the \textbf{source}. For correlated data, the scaling is unchanged from prior predictions of optimal rates in \cite{caponnetto2007optimal, spigler2020asymptotic, bordelon2020spectrum, cui2021generalization}, namely that $R_g \sim T^{-2 \alpha \mathrm{min}(r,1)}$. We illustrate this phenomenon in Figure \ref{fig:power_law}.

\subsection{Mismatched correlations and OOD generalization}\label{sec:mismatched_corr}

We generalize the above result to a setting where $\e$ does not have the same correlation structure as $\K$. That is, $\mathbb E [\epsilon_t \epsilon_s] = \sigma_\epsilon^2 K'_{t s}$. In fact, we can consider an even more general case: namely when the covariance of the test point $\S'$ is also different from the covariance of the training set $\S$. This is the case of out-of-distribution generalization under covariate shift. These two different mismatches exhibit a surprising duality. In general, we have
\begin{theorem}\label{thm:mismatchedk}
    Consider a covariate-shifted setting with test covariance $\S'$ not necessarily equal to $\S$, and noise covariance $\K'$ that may not match $\K$. Then, we have
    \begin{equation}\label{eq:OOD_risk_main}
        R_g \simeq \underbrace{\kappa^2 \bar \w^{\top} (\S+\kappa)^{-1} \S' (\S+\kappa)^{-1} \bar \w }_{\mathrm{Bias}^2} +  \underbrace{\kappa^2  \frac{\gamma_{\S, \S'}}{1 - \gamma}  \bar \w^{\top} \S (\S+\kappa)^{-2} \bar \w}_{\mathrm{Var}_{\X}} + \underbrace{\frac{\gamma_{\S, \S', \K, \K'}}{1-\gamma} \sigma_\epsilon^2 }_{\mathrm{Var}_{\X \e}}.
    \end{equation}
    Here, we have highlighted how the risk naturally splits into three terms given by respective bias and variance components resulting from the covariates $\X$ and the noise $\e$, as outlined in Appendix \ref{app:BV} \cite{atanasov2024scaling}. We have also defined:
    \begin{equation}
        \gamma_{\S, \S'} \equiv  \frac{\df_{\S, \S'}^2  \df^2_{\K}}{\df_1 \tilde \df_1}, \quad \gamma_{\S, \S', \K, \K'} \equiv \frac{\df_{\S, \S'}^2 \df_{\K, \K'}^2}{\df_1 \tilde \df_1},
    \end{equation}
    \begin{equation}
    \begin{split}
        \df_{\S, \S'}^2 &\equiv \frac{1}{N} \Tr\left[ \S \S' (\S + \kappa)^{-2} \right], \\ \df_{\K, \K'}^2 &\equiv \frac{1}{T} \Tr\left[ \K \K' (\K + \tilde{\kappa})^{-2} \right].
    \end{split}
    \end{equation}
    Similarly, for the training error, we have
    \begin{equation}
        \hat{R}_{in} \simeq \frac{\tilde \df_1 - \tilde \df_2}{S \tilde \df_1} \frac{\kappa^2}{1-\gamma}  \bar{\w}^\top \S (\S + \kappa)^{-2} \bar \w 
        + \sigma_\epsilon^2  \tilde \kappa  \left[\frac{1}{T} \Tr \K' (\K + \tilde \kappa)^{-1} - \frac{\df_1  - \df_2 }{1 - \gamma } \frac{\df^2_{\K, \K'}}{\df_1 }  \right].
    \end{equation}
\end{theorem}
\begin{proof}
    See Appendix \ref{app:corr_sample_uncorr_test}.
\end{proof}
We validate these formulae across several experiments (Figure \ref{fig:double_descent}, Appendix \ref{app:further_expts}). As a special case, setting $\K = \mathbf{I}_{T}$ recovers previously-obtained asymptotics for the risk of ridge regression under covariate shift \cite{patil2024ood,canatar2021out,tripuraneni2021covariate}. In general, the training error is proportional to the generalization error only in the setting where $\S = \S'$ and $\K = \K'$. Thus, it is only in this setting that a GCV exists.

\begin{figure}[t]
    \subfigure[Exponential Correlations]{
    \includegraphics[height=2in]{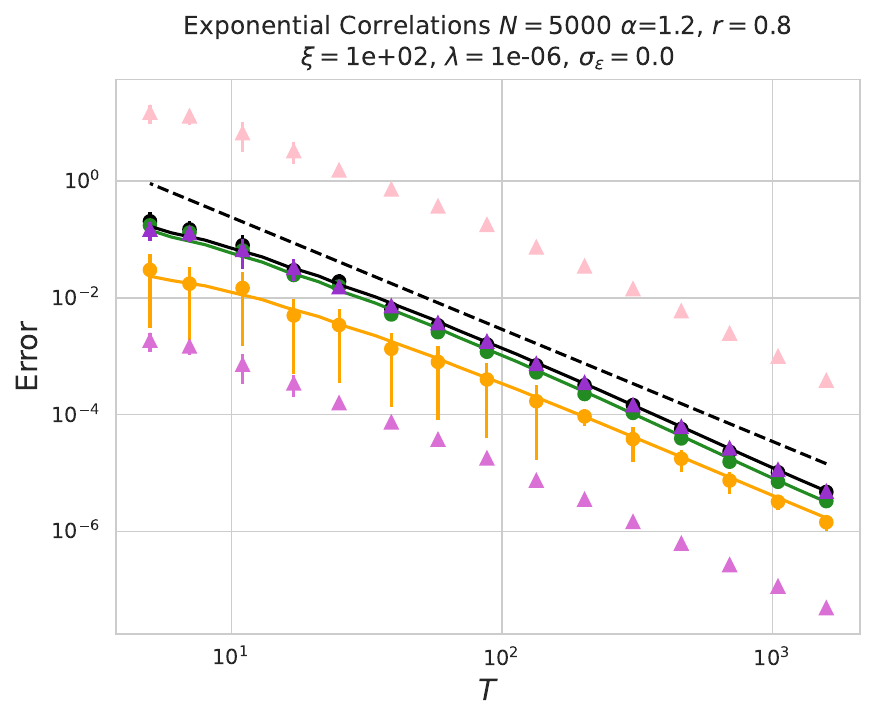}
    }
    \subfigure[Power Law Correlations]{
    \includegraphics[height=2in]{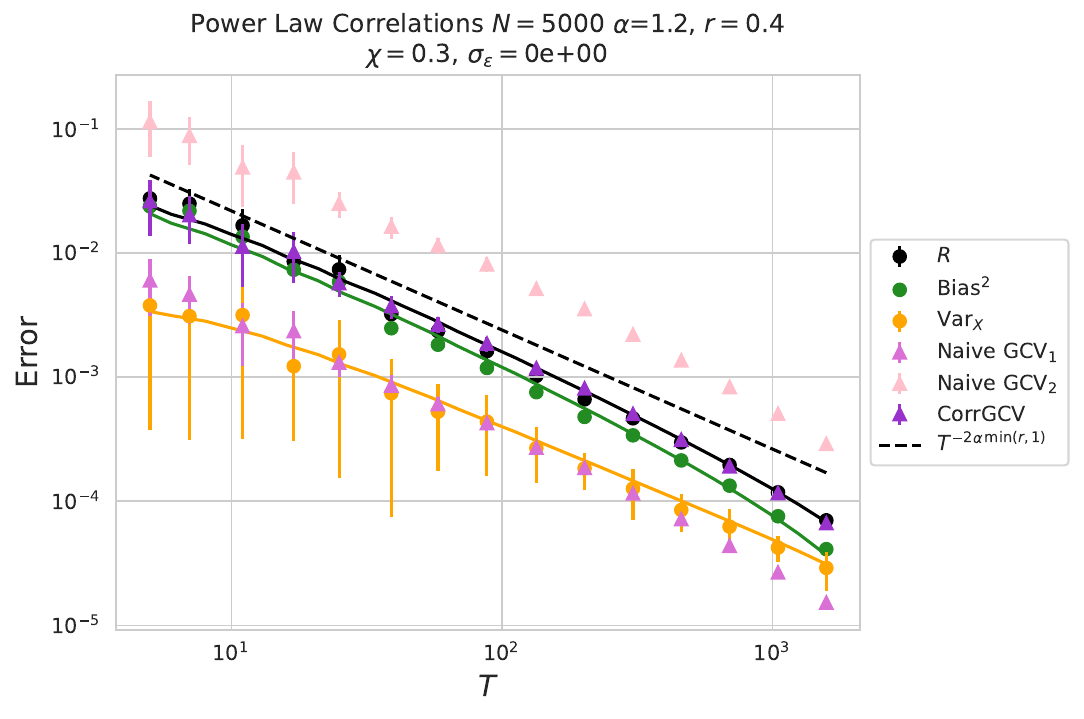}
    }
    \caption{Power law scalings for data with a) exponential correlations with $\xi = 10^{2}$ and b) power law correlations $\mathbb E \x_{t}^{\top} \x_{t+\tau} \propto \tau^{-\chi}$ with $\chi = 0.3$. In both cases, the correlations of the data do not affect the scaling of the generalization error as a function of $T$, which generally goes as $T^{-2 \alpha \min(r,1)}$, as derived in prior works. Although other estimators correctly predict the rate of decay, only the CorrGCV correctly recovers the exact risk.}
    \label{fig:power_law}
\end{figure}

\subsection{Effect on double descent}

\begin{figure}[t!]
    \centering
    \subfigure[]{
    \includegraphics[width=0.29\linewidth]{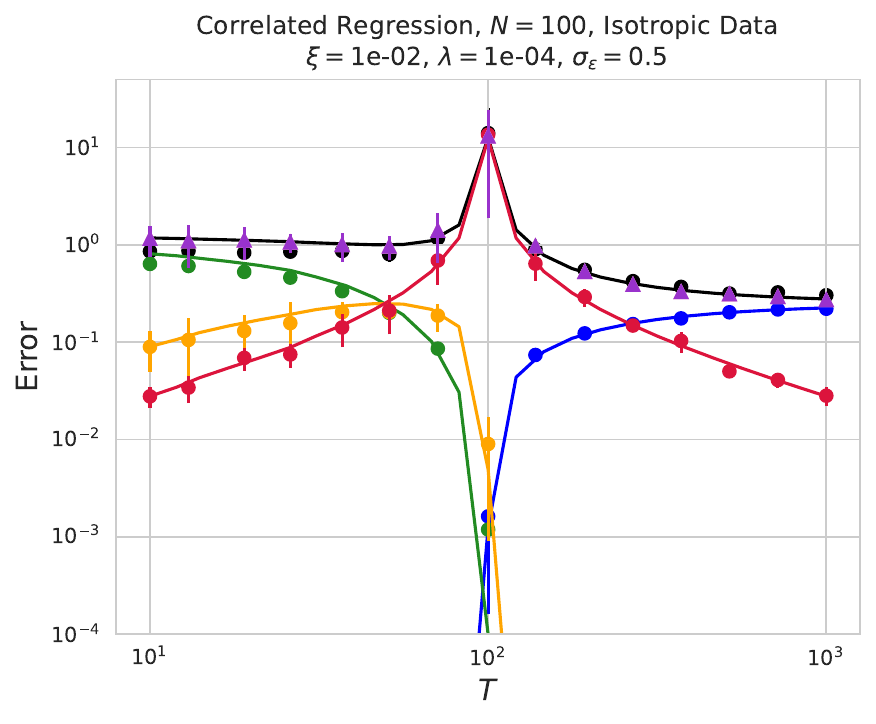}
    }
    \hfill
    \subfigure[]{\label{fig:double_descent_corr_matched}
    \includegraphics[width=0.29\linewidth]{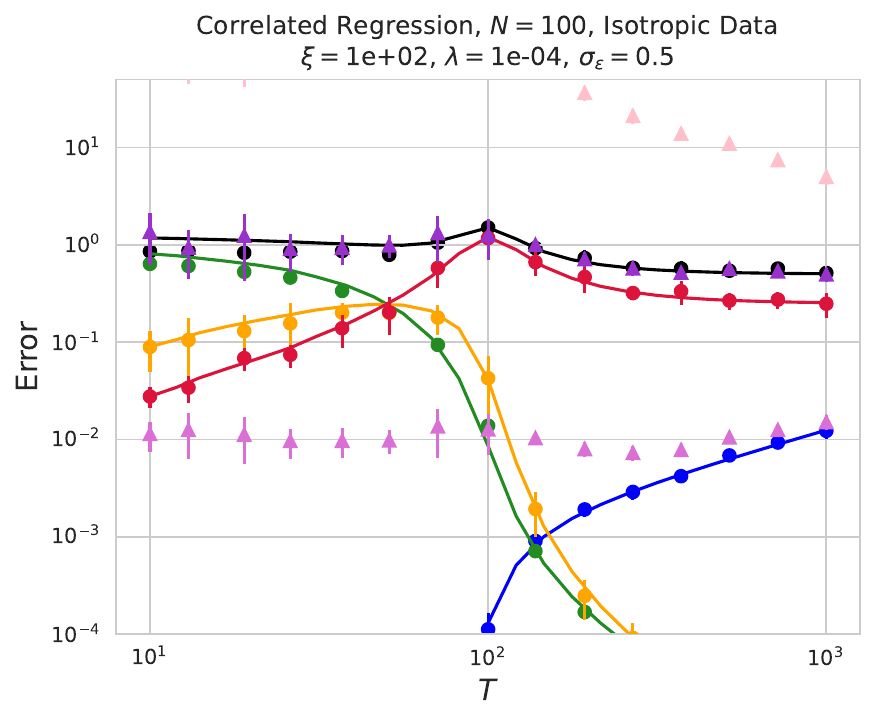}
    }
    \hfill
    \subfigure[]{\label{fig:double_descent_corr_mismatched}
    \includegraphics[width=0.36\linewidth]{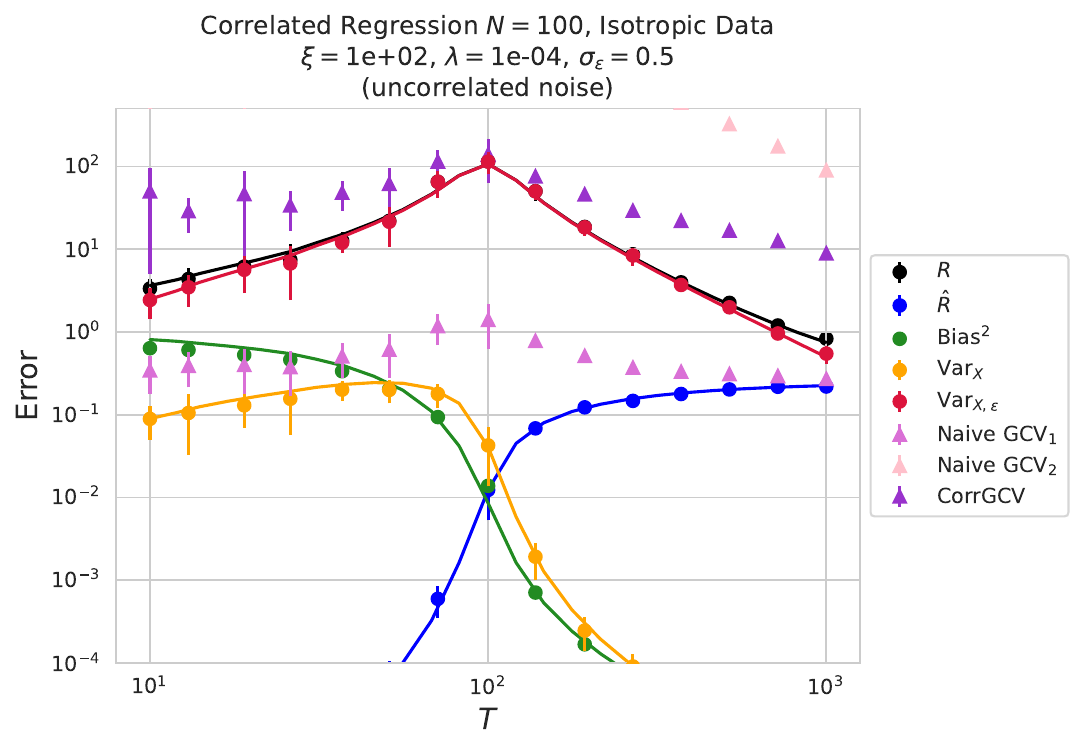}
    }
    \caption{Precise asymptotics for double-descent in linear regression with unstructured data across various correlations. We choose an exponential correlation with correlation length $\xi$ and vary $\xi$. a) Weakly correlated data and noise, giving rise to the traditional double descent curve as analyzed in \cite{advani2020high,hastie2022surprises}. All GCV-related estimators agree and correctly estimate the out-of-sample risk. b) Strongly correlated data with matched noise correlations. The double descent peak is \textit{mollified}. c) Strongly correlated data but uncorrelated noise. The double descent peak is \textit{exacerbated}. This mismatch in correlations violates the assumptions of the CorrGCV, and thus no GCV can asymptotically match it without knowledge of the noise level $\sigma_\epsilon$. Across all settings the theory curves (solid lines) find excellent agreement with the experiments (solid markers with error bars over 10 different datasets). }
    \label{fig:double_descent}
\end{figure}

It is interesting to consider how correlation structure affects double descent. From Equations \eqref{eq:kappa_corr_defns}, \eqref{eq:S_corr_defns} one has in the structured case that: 
\begin{equation}
    \kappa = \frac{\lambda}{1- q \df_1} S_{\K}(q \df_1);
\end{equation}
compare with \eqref{eq:kappa_defn_uncorrelated} in the uncorrelated setting. In the ridgeless limit in the overparameterized regime $q>1$, $\kappa$ will be such that $q \df^1_{\S}(\kappa) = 1$, which is unchanged from the uncorrelated setting (Appendix \ref{app:ridgeless}). The double descent behavior in that case is unchanged by correlations. At finite ridge, we see that training on correlated data is similar to replacing $\lambda \to \lambda S_{\K}(q \df_1)$ with uncorrelated data. Near the double descent peak, this expands to leading order in $q-1$ as $\lambda S_{\K}(1)$. In Appendix \ref{app:ridgeless}, we show that $S_{\K} \geq 1$ pointwise. Thus, correlations will enhance the ridge near the interpolation threshold, meaning that the double-descent peak should be mollified. For strong exponential correlations $K_{ts} = e^{-|t-s|/\xi}$, $\lambda S_{\K}(1)$ is approximately $\lambda \xi$. This leads to the less sharp double descent peak in Figure \ref{fig:double_descent_corr_matched}. Thus, correlations mollify the double descent phenomenon. We do not say that they regularize the effect, because the risk still explodes in the ridgeless $q \to 1$ limit. 



Noise correlations affect the generalization error only if the covariates are also correlated. If the covariates are uncorrelated, namely $\K= \mathbf{I}_{T}$ then $\K'$ enters the risk \eqref{eq:OOD_risk_main} only through $\frac{1}{T} \Tr \K'$, which is $1$ by definition. Conversely, we can consider the case where the covariates are correlated but the noise is uncorrelated, \emph{i.e.}, $\K' = \mathbf{I}_{T}$. In the overparameterized regime, in the ridgeless limit this yields an error equal to that with uncorrelated data multiplied by $\frac{1}{T} \Tr(\K^{-1})$, which is greater than one so long as $\K \neq \mathbf{I}_{T}$ (Appendix \ref{app:ridgeless}). 
This implies that having uncorrelated noise is generally worse than having noise with matched correlations. We illustrate this behavior in Figure \ref{fig:double_descent_corr_mismatched}, where this effect is visible as a strong magnification of double-descent.

\section{Algorithmic implementation}\label{sec:alg}

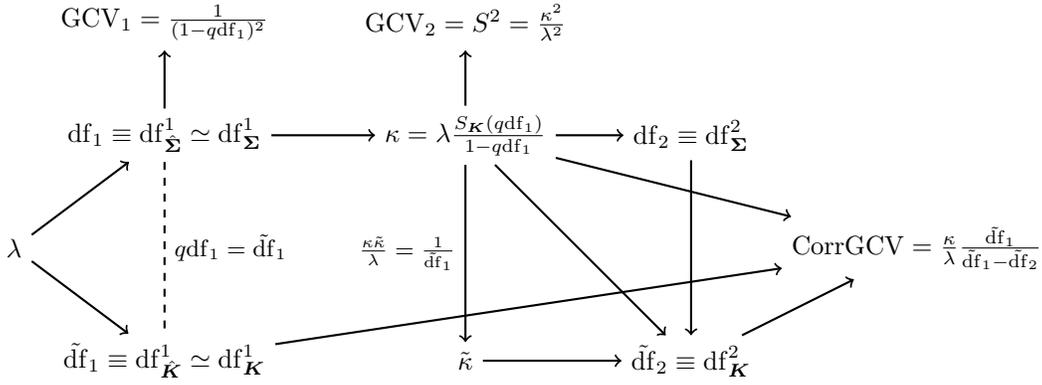
\begin{figure}[t!]
    \centering
    \begin{center}
   \begin{tikzpicture}[
    node distance=3cm,
    auto,
    thick,
    label/.style={font=\small}
]

\node (lambda) at (0,0) {$\lambda$};
\node (df1Sigma) at (2,1.5) {$\df_1 \equiv \df^1_{\Sh} \simeq \df^1_{\S}$};
\node (df1K) at (2,-1.5) {$\tilde \df_1 \equiv \df^1_{\hat {\bm K}} \simeq \df^1_{\K}$};
\node (kappa) at (6,1.5) {$\kappa = \lambda \frac{S_{\K}(q \df_1)}{1 - q \df_1}$};
\node (tildekappa) at (6,-1.5) {$\tilde \kappa$};
\node (df2Sigma) at (9,1.5) {$\df_2 \equiv \df^2_{\S} $};
\node (df2K) at (9,-1.5) {$\tilde \df_2 \equiv \df^2_{\K}$};
\node (CorrGCV) at (12,0) {$\mathrm{CorrGCV} = \frac{\kappa}{\lambda} \frac{\tilde \df_1}{\tilde \df_1 - \tilde \df_2}$};
\node (GCV1) at (2, 3) {$\mathrm{GCV}_1 = \frac{1}{(1-q \df_1)^2}$};
\node (GCV2) at (6, 3) {$\mathrm{GCV}_2 = S^2  = \frac{\kappa^2}{\lambda^2}$};

\draw[->] (lambda) -- (df1Sigma);
\draw[->] (lambda) -- (df1K);
\draw[->] (df1Sigma) -- (kappa);
\draw[->] (kappa) -- (df2Sigma);
\draw[->] (tildekappa) -- (df2K);
\draw[->] (kappa) -- (df2K);
\draw[->] (df2Sigma) -- (df2K);
\draw[->] (df2K) -- (CorrGCV);
\draw[->] (df1Sigma) -- (GCV1);
\draw[->] (kappa) -- (GCV2);
\draw[->] (kappa) -- (CorrGCV);
\draw[->] (df1K) -- (CorrGCV);

\draw[dashed] (df1Sigma) -- (df1K) node[midway, right, label=right:{$q\df^1_{\Sh} = \df^1_{\hat \K}$}] {$q\df_1 = \tilde \df_1$};

\draw[->] (kappa) -- (tildekappa) node[midway, left, label=right:{$q\df^1_{\hat \Sigma} = \df^1_{\hat K}$}] {$\frac{\kappa \tilde \kappa}{\lambda} = \frac{1}{\tilde \df_1}$};

\end{tikzpicture}

\end{center}
    \caption{A graphical representation of the program needed to obtain the CorrGCV empirically from a given dataset. The asymmetry of the diagram arises from the fact that we estimate $\kappa$ first rather than $\tilde \kappa$. This is because it is more reasonable to assume a good estimate of the correlations $\K$, which often have properties such as stationarity that improve the estimation process, compared to estimating $\S$. As a result, it is easier to use either an exact form or a differentiable interpolation of $S_{\K}(\df)$ rather than $S_{\S}$.  }
    \label{fig:CorrGCV_circuit}
\end{figure}

In this section, we give an algorithm to compute the CorrGCV given knowledge of the function $S_{\K}(q \df_1)$. We discuss how to estimate $S_{\K}$ from data in Appendix \ref{app:estimate_SK}. The diagram for this calculation is given in Figure \ref{fig:CorrGCV_circuit}. In what follows, we will use the notation $b \leftarrow a$ to indicate variable assignment of $b$ given that $a$ has been computed. We also use $a \equiv b$ to highlight that the variable $a$ is shorthand for $b$. This allows one to easily track the causal chain of how the relevant variables are estimated.

First, given $\Sh$ alone, we can calculate $\kappa$ as long as the functional form of $S_{\K}$ is known. We obtain:
\begin{equation}
    \df_1 \leftarrow \df^1_{\Sh}(\lambda), \quad \kappa \leftarrow \lambda \frac{S_{\K}(q \df_1)}{1-q\df_1}.
\end{equation}
At this point, the ordinary GCV is directly given by $S^2 \hat R_{in}$. Given a functional form of $S_{\K}$, this gives $\kappa$ as a differentiable program of $\lambda$. Derivatives can be efficiently evaluated with autograd, for example using the JAX library \cite{jax2018github}. This will be important for the next steps.

Second, by applying the duality relation (see Appendix \ref{app:derivations}), we obtain estimates for $\tilde \df_1 \equiv \df^1_{\K}(\lambda)$ and  $\tilde \kappa$:
\begin{equation}
    \tilde \df_1 \leftarrow q \df_1, \quad  \tilde \kappa \leftarrow \frac{\lambda}{\kappa q \df_1}.
\end{equation}
Third, we leverage autograd to estimate $\df_2 \equiv \df^2_{\S}$ by applying another duality relationship:
\begin{equation}
    \df_2 \leftarrow \df_1 + \frac{\partial_{\lambda} \df_1 }{\partial_\lambda \log \kappa}.
\end{equation}
Fourth, by using this and applying autograd again, we obtain an estimate for $\tilde \df_2 \equiv \df^2_{\K}$:
\begin{equation}
    \tilde \df_2  \leftarrow \tilde \df_1 - q \frac{\partial_\lambda \log \kappa}{\partial_\lambda \log \tilde \kappa} (\df_1 - \df_2).
\end{equation}
Finally, from this we obtain the CorrGCV estimator:
\begin{equation}
    E_{CorrGCV} \leftarrow \hat R_{in} S \frac{\tilde \df_1 }{\tilde \df_1 - \tilde \df_2}.
\end{equation}

\section{Testing on correlated data}\label{sec:corr_test}

Finally, we consider the setting where the data point $\x$ on which we test on has a nontrivial correlation $\k$ with each $\x_t$. That is, $\mathbb E[\x_t\cdot  \x] \propto k_t$. For simplicity, we assume that the covariates and noise are identically correlated. Such a setting arises naturally in the case of forecasting time series, where one trains a model on a window of data and then aims to predict at a future time within the correlation time of the process. This correlation introduces a multiplicative correction to the asymptotic risk:
\begin{theorem}\label{thm:corrtest}
Denote by $R_{out}^{\k}$ the out-of-sample risk when the test point has correlation $[\k]_t$ with data point $\x_t$. In this case, the test point $\x$ is conditionally Gaussian with 
\begin{equation}
    \mathbb E[\x | \X] = \X^{\top} \bm{\alpha}, \quad \mathrm{Var}[\x | \X] = (1 - \rho ) \S, \quad 
\end{equation}
where $\bm{\alpha} = \K^{-1} \k$ and $\rho = \k^\top \K^{-1} \k$. Assume that $\e_t$ has the same correlation as $\x_t$, namely $\K$. Then,
\begin{equation}
\begin{aligned}
    R_{out}^{\k} &\simeq  R_{out}^{\k = 0} \left[1 -\rho  + \tilde \kappa^2 \bm \alpha^\top \K (\K + \tilde \kappa)^{-2} \bm \alpha \right].
\end{aligned}
\end{equation}
\end{theorem}
\begin{proof}
    See Appendix \ref{app:corr_test}. 
\end{proof}

We note that because $\tilde \kappa > 0$, the spectrum of $\tilde \kappa^2 (\K + \tilde \kappa)^{-2}$ is strictly bounded between zero and one. Thus, the last term in brackets is strictly bounded from above by $\rho$, meaning that $R_{out}^{\k}$ is always less than $R_{out}^{\k=0}$. This precisely quantifies the over-optimism of cross-validation when the held-out samples are nearby in time. 



The train error is not affected by the correlated test point. Consequently, this yields an extension of the CorrGCV:
\begin{equation}\label{eq:GCV_corr_test}
    \boxed{R_{out} \simeq \frac{S \tilde \df_1}{\tilde \df_1 - \tilde \df_2} \left[1 -\rho  + \tilde \kappa^2 \bm \alpha^\top \K (\K + \tilde \kappa)^{-2} \bm \alpha \right] \hat R_{in}.}
\end{equation}
In practice, we find that the simple approximation $R_{out}^{\k} \approx (1-\rho) R_{out}^{\k=0}$ works very well (Figure \ref{fig:correlated_test}).

\begin{figure}[t!]
    \centering
    \subfigure[Isotropic, Correlated Test]{
    \includegraphics[height=2in]{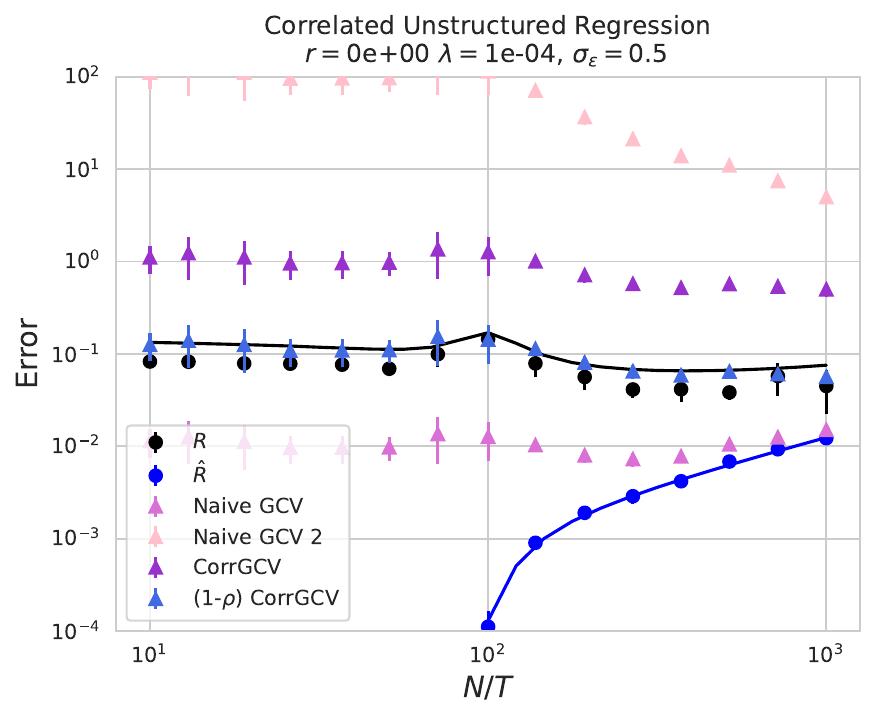}
    }
    \subfigure[Isotropic, Correlated Test]{
    \includegraphics[height=2in]{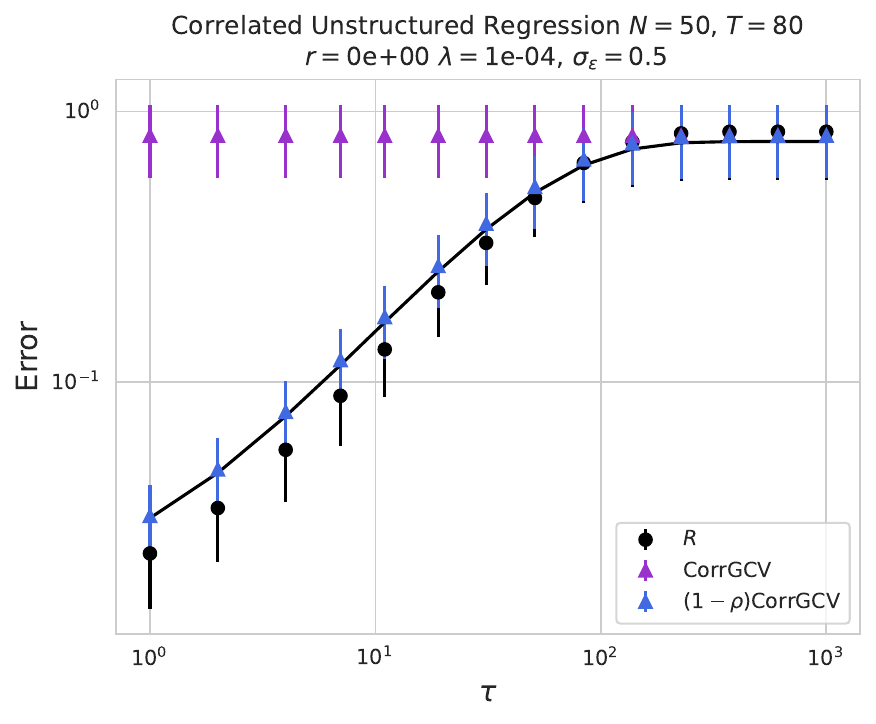}
    }
    \caption{Strongly exponentially correlated data as in Figure \ref{fig:motivation} b), with $\mathbb E \x_t \x_{t+\tau} \sim e^{-\tau/\xi}$, $\xi = 10^2$. a) Comparison of the out-of-sample risk for testing on a correlated point that is $\tau = 5$ time points in the future from the training data. We emphasize that the CorrGCV here is the CorrGCV for testing on an uncorrelated test point, which is pessimistic relative to the risk for the correlated point at $\tau=5$. b) Plot of the out-of-sample risk as a function of the horizon time that we test out. We see that testing on closer times is predictably more optimistic. We see that $(1-\rho)$ times the CorrGCV provides a good approximation of the optimism of testing on a correlated point. Given a knowledge of the correlation structure of $\K$, this can be efficiently computed for stationary time series data.}
    \label{fig:correlated_test}
\end{figure}

\section{Conclusion}

In this paper, we have provided a comprehensive characterization of the asymptotic risk of high-dimensional ridge regression with correlated Gaussian datapoints. We expect our asymptotics to extend to non-Gaussian data thanks to the universality of ridge regression \cite{hu2022universality,montanari2022universality,misiakiewicz2024non}. Our results show that previously-proposed extensions of the GCV estimator to non-i.i.d. data are asymptotically biased, and immediately give the formula for the corrected GCV. This correction factor requires one to estimate more fine-grained spectral statistics of the sample-sample covariance than the usual GCV. However, as we have shown in the figures and discussed in Appendix \ref{app:algorithms}, obtaining an excellent estimate appears relatively straightforward. 

Though our results are quite general, they are not without limitations, and there are several opportunities for further theoretical inquiry. First, our matrix-Gaussian model for the training data \eqref{eqn:gen_model} could be relaxed to allow for sample-dependent covariance between features. On the technical side, to our knowledge Gaussian universality of ridge regression for non-independent data has not been rigorously established, though we expect it to hold \cite{hu2022universality,montanari2022universality,misiakiewicz2024non}. One might also want to establish dimension-free deterministic equivalents with relative error bounds, as in \citet{misiakiewicz2024non}. However, these are largely technical, rather than conceptual, limitations. 

After the completion of this work, we became aware of the contemporaneous work of \citet{luo2024rotigcv}, who derived an asymptotically unbiased non-omniscient risk estimator under the assumption of right rotation-invariance. Concretely, they assume that the design matrix $\X$ satisfies $\X\O \overset{d}{=} \X$ for any orthogonal matrix $\O$. Their results thus allow for non-Gaussian data, but do not allow for correlations between features, and are therefore complementary to our own. 

We conclude by commenting briefly on applications of our results. First and foremost, we anticipate that our asymptotically exact correction to the GCV should be of some utility in timeseries regression settings, whether in finance \cite{bouchaud2003risk}, neuroscience \cite{williams2021statistical}, or elsewhere. Moreover, our results might be of use in the study of in-context learning in language models, where most theoretical investigations neglect for the sake of analytical convenience the rich correlations present in language \cite{lu2024icl}.

\section*{Author contributions} A.A. and J.A.Z.-V. conceived the project, performed research, and wrote the manuscript. C.P. supervised the project and edited the manuscript.

\section*{Acknowledgements}

AA is grateful to Holden Leslie-Bole and Jacob Prince for useful conversations. We also thank Benjamin Ruben for helpful comments on a previous version of this manuscript. JAZV was supported by the Office of the Director of the National Institutes of Health under Award Number DP5OD037354. The content is solely the responsibility of the authors and does not necessarily represent the official views of the National Institutes of Health. JAZV is further supported by a Junior Fellowship from the Harvard Society of Fellows. C.P. is supported by NSF grant DMS-2134157, NSF CAREER Award IIS-2239780, DARPA grant DIAL-FP-038, a Sloan Research Fellowship, and The William F. Milton Fund from Harvard University. This work has been made possible in part by a gift from the Chan Zuckerberg Initiative Foundation to establish the Kempner Institute for the Study of Natural and Artificial Intelligence.

\clearpage 

\appendix

\section{Algorithmic implementation of the CorrGCV}\label{app:algorithms}

\subsection{Code for implementation}
In Code Block \ref{code:alg} we give the code for implementing the CorrGCV estimator in JAX given a ridge $\lambda$, training error $\hat R_{in}$, overparameterization ratio $q$, empirical covariance matrix $\Sh$, and function $S_{\K}(\df)$ given by the $S$-transform of the correlation structure. 

\begin{figure}[t!]
\begin{lstlisting}[language=Python]
# R_in: in-sample risk
# lamb: ridge parameter
# q: N/T
# eigs_Sh: eigenvalues of empirical covariance Sigma hat
# S_K: sample-side S-transform S_K(df)

from jax import grad
import jax.numpy as jnp

df1_fn_Sh = lambda l: jnp.mean(eigs_Sh / (l + eigs_Sh))
d_df1_fn_Sh = grad(df1_fn_Sh)

df1_est = df1_fn_Sh(lamb)
df1_Kh_est = q * df1_est

# Naive GCV ignoring sample correlations
R_GCV1 = R_in / (1 - q * df1_est) ** 2

# Full S-transform factor for the empirical covariance
S_full_fn = lambda l: S_K(q * df1_fn_Sh(l)) / (1 - q * df1_fn_Sh(l))
S_est = S_full_fn(lamb)

# Altman / naive correlated-noise correction
R_GCV2 = R_in * S_est ** 2

# Renormalized ridges
kappa1_fn = lambda l: l * S_full_fn(l)
kappa2_fn = lambda l: l / (q * df1_fn_Sh(l) * kappa1_fn(l))

d_log_kappa1_fn = grad(lambda l: jnp.log(kappa1_fn(l)))
d_log_kappa2_fn = grad(lambda l: jnp.log(kappa2_fn(l)))

# Estimate df_2 for Sigma
df2_fn_Sh = lambda l: df1_fn_Sh(l) + d_df1_fn_Sh(l) / d_log_kappa1_fn(l)
df2_Sh_est = df2_fn_Sh(lamb)

# Estimate df_2 for K
df2_Kh_est = df1_Kh_est - q * (df1_est - df2_Sh_est) * (
    d_log_kappa1_fn(lamb) / d_log_kappa2_fn(lamb)
)

R_CorrGCV = R_in * S_est * df1_Kh_est / (df1_Kh_est - df2_Kh_est)
\end{lstlisting}
\caption{Python code for computing CorrGCV estimator given relevant parameters, assuming $S_{\K}$ is known.}
\label{code:alg}
\end{figure}

\subsection{Estimating the S-transform from data}\label{app:estimate_SK}

If the functional form of the correlations is not exactly known, there are several options. 

\begin{enumerate}
    \item From empirical data, one can fit a parametric form of $\K$ for which the $S$-transform $S_{\K}$ can be calculated exactly. We give some examples of this in Appendix \ref{app:S_explicit}; a practically-relevant example is an exponential $e^{-|t-t'|/\xi}$ \cite{potters2020first}. 
    
    \item For a time series with the assumption of stationarity, one can estimate the autocorrelation $A_{\tau} = K_{t,t+\tau}$ as a function of lag $\tau$ much more reliably than the full matrix $K_{t,t'}$, as it depends on only $\mathcal{O}(T)$ parameters rather than $\mathcal{O}(T^2)$. This, in turn, can enable more reliable parametric fitting. 
    
    Alternatively, one can use this to form a more reliable direct estimate for $\K$. Given such an estimate, which we will call $\check{\K}$ to distinguish it from the plug-in estimate $\Kh$, we can use a solver (such as a bisection method) to evaluate $\df_{\check{\K}}^{-1}(t)$, and hence estimate
    \begin{align}
        S_{\K}(t) \approx S_{\check{\K}}(t) = \frac{1-t}{t\, \df_{\check{\K}}^{-1}(t)}. 
    \end{align}
    This is the procedure we use for the power-law correlation experiments, where we know $\K$ exactly but cannot evaluate $S_{\K}$ analytically. 

    We can then evaluate $\kappa_1$ as follows:
    \begin{itemize}
        \item First, sample a fine grid of $\lambda$.
        \item From the data, we estimate  $\df_1 = \df^1_{\Sh}(\lambda)$ for each lambda, leading to a fine grid of $\df_1$ and $\tilde \df_1 = q \df_1$
        \item Given the bisection-based algorithm to evaluate $\df^{-1}_{\check{\K}}$. We can then evaluate $S_{\check{\K}}(\tilde \df)$ on the corresponding fine grid of $\tilde \df_1 = q \df_1$ using the approximation stated above, which gives us an estimate of $S_{\K}(\tilde \df)$
        \item Using the definition $\kappa = \frac{S_{\K}(q \df_1)}{1 - q \df_1}$, we obtain estimates of $\kappa$ as a function of $\lambda$. 
        \item We now apply an appropriate interpolation method that we can differentiate through and guarantee that as the grid gets finer, both the function and its derivatives converge to the true $\kappa$ and $\partial_\lambda \kappa$. 
    \end{itemize}
    The last bullet point requires further discussion. In practice, because $\kappa$ can vary over several orders of magnitude, and because estimating the CorrGCV requires logarithmic derivatives, it is often best to interpolate $\log \kappa$ against $\log \lambda$. A good interpolator compatible with autograd can be a Gaussian process with appropriately chosen covariance, a piece-wise smooth spline function, polynomial interpolator, or a wavelet based method that converges in the appropriate Sobolev norm. We used a simple degree 5 polynomial interpolation of $S_{\K}$ to generate the power law correlation learning curves in Figure \ref{fig:power_law}(b).

    \item 
    If one instead tries to estimate $S_{\K}$ from the empirical Gram matrix $\Kh = \frac{1}{T} \X\X^{\top}$, this matrix contains information both about the sample correlations $\K$ and the feature correlations $\S$. For general anisotropic $\S$, we must therefore deconvolve $S_{\S}$: 
    \begin{align}
        S_{\K}(t) = (q-t) \frac{S_{\Kh}(t)}{S_{\S}(t/q)}. 
    \end{align}
    If we can do so (for instance, if we know that we have isotropic features such that $S_{\S} = 1$), then we can proceed as before.    
    
\end{enumerate}

\section[Review of free probability]{Review of free probability}\label{app:free}

\subsection{Definition of Freedom}

We now define what it means for a set of $n$ random matrices $\{\A_i\}_{i=1}^n$ to be jointly (asymptotically) free. All of these matrices are $N \times N$ and we consider the limit of $N \to \infty$. This is the limit in which free probability theory applies for random matrices. There are several texts on this rich subject. See, for example \cite{mingo2017free, voiculescu1997free, nica2006lectures}.

We say that a polynomial $p(\A_i)$ has the mean zero property if 
\begin{align}
    \frac{1}{N} \Tr[ p(\A_i) ] \simeq 0.
\end{align}
Above, $\simeq 0$ means that this converges to zero in probability as $N \to \infty$.

Take a set of $m$ polynomials $\{p_{k}\}_{k=1}^{m}$, each with the mean zero property. Further, take a labeling $\{i_k\}_{k=1}^n$ with each $i_k \in \{1, \dots , n\}$ so that $i_{k} \neq i_{k+1}$ for all $k$. The $\{\A_i\}_{i=1}^n$ are jointly asymptotically free if and only if
\begin{equation}
    \frac{1}{N} \Tr \left[ p_1(\A_{i_1}) \cdots p_m(\A_{i_m})  \right] \simeq 0
\end{equation}
for any $m$, any labeling $\{i_k\}$, and any set of polynomials $\{p_k\}_{k=1}^{m}$ satisfying the properties above. 

For our purposes we only require that the $\frac{1}{T} \Z^\top \Z$ and $\frac{1}{T} \Z \Z^\top$ matrices are free of any fixed deterministic matrix, specifically $\S$ and $\K$.

\subsection{The R and S transforms}

We review here basic properties of the $R$ and $S$ transforms of free probability \cite{voiculescu1992free, voiculescu1997free}. The recent book by Potters and Bouchaud \cite{potters2020first} provides an accessible introduction to these techniques in the context of random matrix theory. 

Given a $N \times N$  symmetric random matrix $\A$, one considers the \textbf{resolvent}, also known as the \textbf{Stieltjes Transform}:
\begin{equation}
    g_{\A}(\lambda) \equiv \frac{1}{N} \mathrm{Tr}\left[(\A + \lambda)^{-1} \right].
\end{equation}
The eigenvalue density of $\A$, $\rho_{\A}$, can be obtained from the \textbf{inverse Stieltjes transform}:
\begin{equation}
    \rho_{\A}(\lambda) = \lim_{\epsilon \to 0} \frac{1}{\pi} \mathrm{Im}\, g_{\A}(-\lambda + i \epsilon).
\end{equation}
Similarly, one defines the degrees of freedom $\df^1_{\A}(\lambda)$ as 
\begin{equation}
    \df_{\A}^1(\lambda) = \frac{1}{N} \mathrm{Tr} [\A (\A + \lambda)^{-1}], \quad \df_{\A}^2(\lambda) = \frac{1}{N} \mathrm{Tr} [\A^2 (\A + \lambda)^{-2}].
\end{equation}
One has the relationship:
\begin{equation}
    g_{\A}(\lambda) = \frac{1 - \df^1_{\A}(\lambda)}{\lambda}.
\end{equation}

We let $g_{\A}^{-1}, \df_{\A}^{-1}$ denote the respective functional inverses. The $R$ and $S$ transforms are respectively functions of formal variables $g, \df$ given by:
\begin{equation}\label{eq:RS_defn}
    R_{\A}(g) \equiv g_{\A}^{-1}(g) - \frac{1}{g}, \quad S_{\A}(\df) \equiv \frac{1-\df}{\df\, \df^{-1}_{\A} (\df)}.
\end{equation}

They have the property that for any two matrices $\A, \B$ that are free of one another:
\begin{equation}
    R_{\A+\B}(g) = R_{\A}(g) + R_{\B}(g), \quad S_{\A  * \B}(\df) = S_{\A}(\df) S_{\B}(\df).
\end{equation}

From these two respective properties, together with the definitions \eqref{eq:RS_defn} we obtain the \textbf{subordination relations}, also known as (weak) deterministic equivalence:
\begin{equation}\label{eq:weak_det_equiv}
    g_{\A+\B}(\lambda) = g_{\A}(\lambda + R_{\B}), \quad \df_{\A * \B}(\lambda) = \df_{\A} (\lambda S_{\B}).
\end{equation}
Here, the additive and multiplicative renormalizations of $\lambda$, given by $R_{\B}$ and $S_{\B}$ respectively, can each be evaluated in two different ways. The first way is from the original noisy matrices $\A+\B$ and $\A * \B$. This is the analogue of the empirical estimate of $S$, $\kappa$ discussed in the text. 
\begin{equation}
    R_{\B} \equiv R_{\B}(g_{\A+\B}(\lambda)), \quad S_{\B} \equiv S_{\B}(\df^1_{\A*\B}(\lambda)).
\end{equation}
The second way is from the clean matrix $\A$ itself. This is what gives the omniscient estimate of the renormalized ridges. Writing $\kappa_{+}, \kappa_{*}$ for $\lambda + R_{\B}, \lambda S_{\B}$ respectively, we obtain the self-consistent equations:
\begin{equation}
    \kappa_+ = \lambda + R_{\B}(g_{\A}(\kappa_+)), \quad \kappa_* = \lambda S_{\B}(\df^1_{\A}(\kappa_*)).
\end{equation}

\subsection{Strong deterministic equivalence}

The deterministic equivalences in \eqref{eq:weak_det_equiv} extend to the matrices themselves. Taking $\A$ deterministic and $\B$ random and free of $\A$, we have:
\begin{equation}
    (\A + \B + \lambda)^{-1} \simeq (\A + \kappa_+)^{-1}, \quad \A * \B (\A * \B+  \lambda)^{-1} \simeq \A (\A + \kappa_*)^{-1}.
\end{equation}
Here, for two matrices, we use the relation $\simeq$ to denote that the traces of these quantities against any test matrix of bounded spectral norm will converge in probability to the same quantity as $N \to \infty$. 

The above two formulas can be derived using replica theory \cite{bun2016rotational, potters2020first}, from diagrammatics \cite{burda2011multiplication, atanasov2024scaling}, or from cavity arguments \cite{bach2024high}.

In this paper, we only require the properties of the $S$-transform. The above deterministic equivalences are called ``one point'' equivalences, as they only involve a single matrix inverse. We will derive two-point equivalents in the sequel.

\section{Deferred proofs of main results}\label{app:derivations}

\subsection{Warm-up: 1-point deterministic equivalents}\label{sec:1pt_deriv}

In this section, we prove Lemma \ref{lem:onept} using a diagrammatic argument based on \citet{atanasov2024scaling}, specialized to the specific case of Wishart noise matrices $\B$. The argument is diagrammatic, appealing to the fact that at large $N$, free random matrices obey non-crossing properties in their diagrammatics \cite{t1973planar}. The argument holds rigorously only in the $N \to \infty$ limit. However, it can rigorously be shown to be the leading term in an asymptotic series in $1/N$ by leveraging known properties of the higher-order corrections \cite{weingarten1978asymptotic}.

In what follows, for the sake of notational levity we absorb a factor of $1/\sqrt{T}$ into each insertion of $\Z$ or $\Z^\top$. Because each pair of $\Z, \Z^\top$ insertions will necessarily be averaged over via Wick contraction, this is the same as associating an factor of $1/T$ with each Wick contraction. 

\begin{equation}
    \begin{aligned}
    \mathbb E_{\Z} \Sh(\Sh + \lambda)^{-1} &= (-\lambda)^{-1}
    \begin{tikzpicture}[scale=0.6, baseline=-1.0ex]
    \def\spacing{3cm}
    \def\shift{0.1cm}
    \def\linewidth{0.5mm} 

  \draw[dashed] (1*\spacing +  \shift,0 ) arc[start angle=0, end angle=180, radius=\spacing/2+\shift];

  \foreach \x in {0}
    \filldraw[black] (\x*\spacing,0) circle (3pt) node[below] {$\Z^\top$};
\foreach \x in {1}
    \filldraw[black] (\x*\spacing,0) circle (3pt) node[below] {$\Z$};

  \foreach \x in {-0.8, 1}
    \draw[line width=\linewidth] (\x*\spacing,0) -- node[below] {$\S^{1/2}$} (\x*\spacing+0.8*\spacing,0);
  \foreach \x in {0}
    \draw[line width=\linewidth] (\x*\spacing,0) -- node[below] {$\K$} (\x*\spacing+\spacing,0);

  \node[below] at (-1*\spacing,0) {};
  \node[below] at (3*\spacing,0) {};
\end{tikzpicture}\\
&\quad + (-\lambda)^{-2}
    \begin{tikzpicture}[scale=0.6, baseline=-1.0ex]
    \def\spacing{3cm}
    \def\shift{0.1cm}
    \def\linewidth{0.5mm} 

   
  \draw[dashed] (1*\spacing +  \shift,0 ) arc[start angle=0, end angle=180, radius=\spacing/2+\shift];

   \draw[dashed] (3*\spacing +  \shift,0 ) arc[start angle=0, end angle=180, radius=\spacing/2+\shift];

  \foreach \x in {0,2}
    \filldraw[black] (\x*\spacing,0) circle (3pt) node[below] {$\Z^\top$};
\foreach \x in {1,3}
    \filldraw[black] (\x*\spacing,0) circle (3pt) node[below] {$\Z$};

\foreach \x in {-0.8, 3}
    \draw[line width=\linewidth] (\x*\spacing,0) -- node[below] {$\S^{1/2}$} (\x*\spacing+0.8*\spacing,0);
  \foreach \x in {1}
    \draw[line width=\linewidth] (\x*\spacing,0) -- node[below] {$\S$} (\x*\spacing+\spacing,0);
  \foreach \x in {0, 2}
    \draw[line width=\linewidth] (\x*\spacing,0) -- node[below] {$\K$} (\x*\spacing+\spacing,0);

  \node[below] at (-1*\spacing,0) {};
  \node[below] at (3*\spacing,0) {};
\end{tikzpicture}\\
&\quad + (-\lambda)^{-2} \begin{tikzpicture}[scale=0.5, baseline=-1.0ex]
    \def\spacing{3cm}
    \def\shift{0.1cm}
    \def\linewidth{0.5mm} 

   
  \draw[dashed] (2*\spacing +  \shift,0 ) arc[start angle=0, end angle=180, radius=\spacing/2+\shift];

   \draw[dashed] (3*\spacing +  \shift,0 ) arc[start angle=0, end angle=180, radius=3/2*\spacing+\shift];

  \foreach \x in {0,2}
    \filldraw[black] (\x*\spacing,0) circle (3pt) node[below] {$\Z^\top$};
\foreach \x in {1,3}
    \filldraw[black] (\x*\spacing,0) circle (3pt) node[below] {$\Z$};

    \foreach \x in {-0.8, 3}
    \draw[line width=\linewidth] (\x*\spacing,0) -- node[below] {$\S^{1/2}$} (\x*\spacing+0.8*\spacing,0);
  \foreach \x in {1}
    \draw[line width=\linewidth] (\x*\spacing,0) -- node[below] {$\S$} (\x*\spacing+\spacing,0);
  \foreach \x in {0, 2}
    \draw[line width=\linewidth] (\x*\spacing,0) -- node[below] {$\K$} (\x*\spacing+\spacing,0);

  \node[below] at (-1*\spacing,0) {};
  \node[below] at (3*\spacing,0) {};
\end{tikzpicture} + \dots
\vspace{-.2cm}        
    \end{aligned}
\end{equation}
Such a suppression of crossing diagrams is a defining characteristic of free probability. Because each loop contributes a trace, while each insertion of $\Z^\top \Z$ contributes a factor of $1/T$, in order for a diagram to give an order $1$ contribution we will need to have as many loops as there are pairs of $\Z^\top, \Z$. Consequently, crossing diagrams such as the following are suppressed as $N, T \to \infty$:
\begin{center}
\begin{tikzpicture}[scale=0.6, baseline=-1.0ex]
    \def\spacing{3cm}
    \def\shift{0.1cm}
    \def\linewidth{0.5mm} 

   
  \draw[dashed] (2*\spacing +  \shift,0 ) arc[start angle=0, end angle=180, radius=\spacing+\shift];

   \draw[dashed] (3*\spacing +  \shift,0 ) arc[start angle=0, end angle=180, radius=\spacing+\shift];

  \foreach \x in {0,2}
    \filldraw[black] (\x*\spacing,0) circle (3pt) node[below] {$\Z^\top$};
\foreach \x in {1,3}
    \filldraw[black] (\x*\spacing,0) circle (3pt) node[below] {$\Z$};

\foreach \x in {-0.8, 3}
    \draw[line width=\linewidth] (\x*\spacing,0) -- node[below] {$\S^{1/2}$} (\x*\spacing+0.8*\spacing,0);
  \foreach \x in {1}
    \draw[line width=\linewidth] (\x*\spacing,0) -- node[below] {$\S$} (\x*\spacing+\spacing,0);
  \foreach \x in {0, 2}
    \draw[line width=\linewidth] (\x*\spacing,0) -- node[below] {$\K$} (\x*\spacing+\spacing,0);

  \node[below] at (-1*\spacing,0) {};
  \node[below] at (3*\spacing,0) {};
\end{tikzpicture}    
\end{center}
Because crossing diagrams do not contribute, one can observe the following pattern. Any diagram that appears will be a link of averages from one $\Z^\top$ to some later $\Z$ that creates an arc. Beneath that arc, all averages can only be between the matrices within the arc, by non-crossing. As such, we can expand:

\begin{equation}
\begin{aligned}
\mathbb E \Sh (\Sh + \lambda)^{-1} &=  (-\lambda)^{-1}
\begin{tikzpicture}[scale=0.7, baseline=-1.0ex]
    \def\spacing{3cm}
    \def\shift{0.1cm}
    \def\linewidth{0.5mm} 

    \fill[gray!40] (1*\spacing,0) arc[start angle=0, end angle=180, radius=\spacing/2] -- cycle;
   
  \draw[dashed] (1*\spacing +  \shift,0 ) arc[start angle=0, end angle=180, radius=\spacing/2+\shift];

   \node at ($(0*\spacing,0)!0.5!(1*\spacing,0)+(0,\spacing/5)$) {$1/S$};
   
  \foreach \x in {0}
    \filldraw[black] (\x*\spacing,0) circle (3pt) node[below] {};
\foreach \x in {1}
    \filldraw[black] (\x*\spacing,0) circle (3pt) node[below] {};

  \foreach \x in {-1/2, 1}
    \draw[line width=\linewidth] (\x*\spacing,0) -- node[below] {$\S^{1/2}$} (\x*\spacing+\spacing/2,0);
  \foreach \x in {0}
    \draw[line width=\linewidth] (\x*\spacing,0) -- node[below] {} (\x*\spacing+\spacing,0);
\end{tikzpicture}    \\
&\quad + (-\lambda)^{-2} \begin{tikzpicture}[scale=0.7, baseline=-1.0ex]
    \def\spacing{3cm}
    \def\shift{0.1cm}
    \def\linewidth{0.5mm} 

    \fill[gray!40] (1*\spacing,0) arc[start angle=0, end angle=180, radius=\spacing/2] -- cycle;
   \fill[gray!40] (3*\spacing, 0) arc[start angle=0, end angle=180, radius=\spacing/2] -- cycle;
   
  \draw[dashed] (1*\spacing +  \shift,0 ) arc[start angle=0, end angle=180, radius=\spacing/2+\shift];

   \draw[dashed] (3*\spacing +  \shift,0 ) arc[start angle=0, end angle=180, radius=\spacing/2+\shift];
   
   \node at ($(0*\spacing,0)!0.5!(1*\spacing,0)+(0,\spacing/5)$) {$1/S$};
   \node at ($(2*\spacing,0)!0.5!(3*\spacing,0)+(0,\spacing/5)$) {$1/S$};
   
  \foreach \x in {0,2}
    \filldraw[black] (\x*\spacing,0) circle (3pt) node[below] {};
\foreach \x in {1,3}
    \filldraw[black] (\x*\spacing,0) circle (3pt) node[below] {};

      \foreach \x in {-1/2, 3}
    \draw[line width=\linewidth] (\x*\spacing,0) -- node[below] {$\S^{1/2}$} (\x*\spacing+\spacing/2,0);
  \foreach \x in {1}
    \draw[line width=\linewidth] (\x*\spacing,0) -- node[below] {$\S$} (\x*\spacing+\spacing,0);
  \foreach \x in {0, 2}
    \draw[line width=\linewidth] (\x*\spacing,0) -- node[below] {} (\x*\spacing+\spacing,0);

\end{tikzpicture}    
+ \dots
\end{aligned}
\end{equation}
The terms in the dashed lines are pre-emptively denoted by $1/S$. We make two observations.
\begin{itemize}
    \item Because the isotropic matrix $\Z$ is right-invariant to rotations in $\mathbb R^N$, $1/S$ must be a deterministic matrix that is invariant under rotation. The only such matrices are constants times the identity. 
    \item Because the isotropic matrix $\Z$ is also left-invariant to rotations in $\mathbb R^T$, $1/S$ is a rotationally invariant scalar functional of the product of matrices beneath the arc. The only such rotationally invariant scalar functional is any constant multiple of the trace. 
\end{itemize}

The fact that the shaded parts are scalars immediately implies we can resum this as:
\begin{equation}
    \mathbb E_{\Z} \Sh (\Sh + \lambda)^{-1} = \sum_{n=1}^\infty \S^{n} \frac{1}{(-\lambda S)^n} = \S (\S + \lambda S)^{-1}.
\end{equation}
An immediate consequence is:
\begin{equation}\label{eq:G_det_equiv}
     \mathbb E_{\Z}  (\Sh + \lambda)^{-1}  = \frac{1}{\lambda} \left[  1 - \mathbb E_{\Z} \Sh (\Sh + \lambda)^{-1} \right] = S (\S + \lambda S)^{-1}.
\end{equation}
It now remains to evaluate $S$. We have

\begin{equation}
\begin{aligned}
    \frac{1}{S} = &  
    \begin{tikzpicture}[baseline=-0.65ex, scale=0.8]
        \def\spacing{3cm}
        \def\shift{0.1cm}
        \def\linewidth{0.5mm} 
        
                
        \foreach \x in {0}
            \filldraw[black] (\x*\spacing,0) circle (3pt) node[below] {$\Z^\top$};
        \foreach \x in {1}
            \filldraw[black] (\x*\spacing,0) circle (3pt) node[below] {$\Z$};

        \foreach \x in {0}
            \draw[line width=\linewidth] (\x*\spacing,0) -- node[below] {$\bm K$} (\x*\spacing+\spacing,0);

        \draw[dashed] (\spacing + \shift,0 ) arc[start angle=0, end angle=180, radius=\spacing/2 + \shift];
    \end{tikzpicture}
    + 
    \begin{tikzpicture}[baseline=-0.65ex,scale=0.5]
        \def\spacing{3cm}
        \def\shift{0.1cm}
        \def
\linewidth{0.5mm} 
    \foreach \x in {0,2}
    \filldraw[black] (\x*\spacing,0) circle (3pt) node[below] {$\Z^\top$};
    
    \foreach \x in {1,3}
    \filldraw[black] (\x*\spacing,0) circle (3pt) node[below] {$\Z$};

    \foreach \x in {0,2}
        \draw[line width=\linewidth] (\x*\spacing,0) -- node[below] {$\K$} (\x*\spacing+\spacing,0);
    \foreach \x in {1}
        \draw[line width=\linewidth] (\x*\spacing,0) -- node[below] {$\S$} (\x*\spacing+\spacing,0);

    \draw[dashed] (3*\spacing + \shift,0) arc[start angle=0, end angle=180, radius=3/2*\spacing + \shift];
    \draw[dashed] (2*\spacing -  \shift,0 ) arc[start angle=0, end angle=180, radius=\spacing/2 - \shift];

\end{tikzpicture} + \dots
\end{aligned}
\end{equation}
We immediately recognize this as the trace of $\lambda \K (\hat \K + \lambda)^{-1}$. Because $\Z$ has entries going as $1/\sqrt{T}$ this implies:
\begin{equation}\label{eq:S_from_diagrams}
    \frac{1}{S} = \mathbb E_{\Z}\, \lambda \frac{1}{T} \mathrm{Tr}\left[ \K (\hat \K  + \lambda)^{-1} \right].
\end{equation}
Here, because the trace is self-averaging, we could drop the $\mathbb E_{\Z}$ and still keep this as an equality in probability as $T, N \to \infty$.

By a slight adjustment of the above diagrammatic argument, swapping the role of $\S$ and $\K$, we have an analogue of \eqref{eq:G_det_equiv}:
\begin{equation}\label{eq:S_tilde_from_diagrams}
    \mathbb E_{\Z} (\hat \K  + \lambda)^{-1}  = \tilde S (\K + \lambda \tilde S)^{-1}, \quad \frac{1}{\tilde S} = \mathbb E_{\Z} \,  \lambda  \frac{1}{T} \mathrm{Tr}\left[ \S (\Sh + \lambda)^{-1}\right].
\end{equation}
We now define 
\begin{equation}
    \kappa \equiv \lambda S, 
    \quad \tilde \kappa \equiv \lambda \tilde S,
    \quad  \df_1 \equiv \frac{1}{N} \mathrm{Tr}\left[ \S (\S + \kappa)^{-1}\right], 
    \quad \tilde \df_1 \equiv \frac{1}{T} \mathrm{Tr}\left[ \K (\K + \tilde \kappa)^{-1}\right].
\end{equation}
Note that at this stage we have shown the equivalence: 
\begin{equation}
    \tilde \df_1 \equiv \df^1_{\K}(\lambda \tilde S) \simeq \df^1_{\Kh}(\lambda)  = q \df^1_{\Sh}(\lambda) \simeq q \df^1_{\S}(\lambda S) \equiv q \df_1.
\end{equation}

Putting \eqref{eq:S_tilde_from_diagrams} back into \eqref{eq:S_from_diagrams} yields:
\begin{equation}
    \boxed{\frac{\lambda}{\kappa \tilde \kappa}  = \tilde \df_1.}
\end{equation}
We recognize as the \textbf{duality relation}. We can also get an explicit expression for $S$ in terms of $S_K$ and $\df$ by defining the formal function of $\df$
\begin{equation}
    S_{\A}(\df) \equiv \frac{1-\df}{\df \, \df^{-1}_{\A}(\df)}
\end{equation}
and plugging in $\A = \K$, $ \df = \df^1_{\K}(\tilde \kappa) = \tilde \df_1$ into this:
\begin{equation}
    \tilde \kappa \tilde \df_1 = \frac{1-\tilde{\df}_1}{S_{\K}(\tilde \df_1)}  .  
\end{equation}
Using $\tilde \df_1 = q \df_1$ this yields the desired relationship:
\begin{equation}
    \kappa = \lambda \frac{S_{\K}(q \df)}{1 - q \df}.
\end{equation}
Thus, given knowledge of $\lambda, \df_1$ and $S_{\K}$, one can calculate $\kappa$ exactly. For a discussion of how to obtain $S_{\K}$ from a given correlated dataset, see \ref{app:estimate_SK}.

\subsection{2-point deterministic equivalents}\label{sec:2pt_deriv}

In this section, we extend this diagrammatic technique to calculate the necessary 2-point deterministic equivalents stated in Lemmas \ref{lem:twopt1} and \ref{lem:twopt2}. Some of these were derived using leave-one-out (cavity) arguments in recent papers of Bach \cite{bach2024high} and \cite{patil2024asymptotically} in the case of $\K = \mathbf I$. The remaining results that we derive are to our knowledge novel. As we discuss in \citet{atanasov2025twopt}, they have useful applications to high-dimensional regression problems beyond just those studied in the context of this paper.

As in the prior section, we absorb a factor of $1/\sqrt{T}$ into each $\Z$ or $\Z^\top$ insertion, or equivalently associate a factor of $1/T$ with each Wick contraction.

We first prove Lemma \ref{lem:twopt1}, which concerns the evaluation of 
\begin{equation}
    (\Sh + \lambda)^{-1} \S' (\Sh + \lambda)^{-1} .
\end{equation}
for arbitrary matrix $\S'$ between the resolvents. In performing this average, we note that the types of diagrams that appear split into two classes. The first are those with no arcs over $\S'$:
\begin{equation}
   \begin{tikzpicture}[scale=0.6, baseline=-1.0ex]
    \def\spacing{3cm}
    \def\shift{0.1cm}
    \def\linewidth{0.5mm} 

   
  \draw[dashed] (1*\spacing +  \shift,0 ) arc[start angle=0, end angle=180, radius=\spacing/2+\shift];

   \draw[dashed] (3.6*\spacing +  \shift,0 ) arc[start angle=0, end angle=180, radius=\spacing/2+\shift];

  \foreach \x in {0,2.6}
    \filldraw[black] (\x*\spacing,0) circle (3pt) node[below] {$\Z^\top$};
\foreach \x in {1,3.6}
    \filldraw[black] (\x*\spacing,0) circle (3pt) node[below] {$\Z$};

    \filldraw[black] (1.8*\spacing,0) circle (3pt) node[below] {$\S'$};

\foreach \x in {-0.8, 3.6}
    \draw[line width=\linewidth] (\x*\spacing,0) -- node[below] {$\S^{1/2}$} (\x*\spacing+0.8*\spacing,0);
\foreach \x in {1, 1.8}
    \draw[line width=\linewidth] (\x*\spacing,0) -- node[below] {$\S^{1/2}$} (\x*\spacing+0.8*\spacing,0);
  
  \foreach \x in {0, 2.6}
    \draw[line width=\linewidth] (\x*\spacing,0) -- node[below] {$\K$} (\x*\spacing+\spacing,0);

  \node[below] at (-1*\spacing,0) {};
  \node[below] at (2.6*\spacing,0) {};
\end{tikzpicture}
\end{equation}

Because no arcs connect the resolvent on the left with the resolvent on the right, we can take the averages of the left and right resolvent separately, and obtain:
\begin{equation}
    S^2 (\S + \kappa)^{-1} \S' (\S + \kappa)^{-1}.
\end{equation}
The second class of term has arcs connecting the two resolvents on both sides. An example of such a diagram is:
\begin{equation}
   \begin{tikzpicture}[scale=0.6, baseline=-1.0ex]
    \def\spacing{3cm}
    \def\shift{0.1cm}
    \def\linewidth{0.5mm} 

   
  \draw[dashed] (2.6*\spacing +  \shift,0 ) arc[start angle=0, end angle=180, radius=0.8*\spacing+\shift];

   \draw[dashed] (3.6*\spacing +  \shift,0 ) arc[start angle=0, end angle=180, radius=1.8*\spacing+\shift];

  \foreach \x in {0,2.6}
    \filldraw[black] (\x*\spacing,0) circle (3pt) node[below] {$\Z^\top$};
\foreach \x in {1,3.6}
    \filldraw[black] (\x*\spacing,0) circle (3pt) node[below] {$\Z$};

    \filldraw[black] (1.8*\spacing,0) circle (3pt) node[below] {$\S'$};

\foreach \x in {-0.8, 3.6}
    \draw[line width=\linewidth] (\x*\spacing,0) -- node[below] {$\S^{1/2}$} (\x*\spacing+0.8*\spacing,0);
\foreach \x in {1, 1.8}
    \draw[line width=\linewidth] (\x*\spacing,0) -- node[below] {$\S^{1/2}$} (\x*\spacing+0.8*\spacing,0);
  
  \foreach \x in {0, 2.6}
    \draw[line width=\linewidth] (\x*\spacing,0) -- node[below] {$\K$} (\x*\spacing+\spacing,0);

  \node[below] at (-1*\spacing,0) {};
  \node[below] at (2.6*\spacing,0) {};
\end{tikzpicture}
\end{equation}

Any term in the second class will have some number of arcs over $\S'$. Note that this will necessarily be an even number $2n$, as they will alternate between averaging a $\Z^\top$ on the left with a $\Z$ on the right and vice versa. This will give $2n$ loops that are traced over. There will be $n$ loops involving $\K$ matrices, which we will call $\K$ loops, and $n$ loops involving $\S$ matrices which we will call $\S$ loops. The above term has $n=1$. An example of a $\S$ loop is:
\begin{equation}
   \begin{tikzpicture}[scale=0.6, baseline=-1.0ex]
    \def\spacing{3cm}
    \def\shift{0.1cm}
    \def\linewidth{0.5mm} 

   
  \draw[dashed] (2.6*\spacing +  \shift,0 ) arc[start angle=0, end angle=180, radius=0.6*\spacing+\shift];

   \draw[dashed] (4*\spacing +  \shift,0 ) arc[start angle=0, end angle=180, radius=2*\spacing+\shift];

  \foreach \x in {0,2.6}
    \filldraw[black] (\x*\spacing,0) circle (3pt) node[below] {$\Z^\top$};
\foreach \x in {1.4,4}
    \filldraw[black] (\x*\spacing,0) circle (3pt) node[below] {$\Z$};

  \foreach \x in {0, 2.6}
    \draw[line width=\linewidth] (\x*\spacing,0) -- node[below] {$\S + \dots$} (\x*\spacing+1.4*\spacing,0);

  \node[below] at (-1*\spacing,0) {};
  \node[below] at (3*\spacing,0) {};
\end{tikzpicture}
\end{equation}
Here, on each side we denote by $\S + \dots$ the series:
\begin{equation}
    \S + (-\lambda)^{-1} \S \Z^\top \Z \S + (-\lambda)^{-2}  \S \Z^\top \Z \S \Z^\top \Z \S + \dots = \lambda \S^{1/2} (\Sh + \lambda)^{-1} \S^{1/2}.
\end{equation}
We recognize this as a resolvent. Because we have explicitly accounted for arcs connecting terms in the left resolvent with the terms on the right resolvent, within each loop we can treat the left and right resolvent average separately. This average is given in \eqref{eq:G_det_equiv}, and similarly for $\K$. This yields
\begin{align}
    \S\text{-loop} &= \lambda^2 \frac{1}{T} \mathrm{Tr} \left[\S^{1/2} \mathbb E_{\Z} \left[(\Sh + \lambda)^{-1} \right] \S \mathbb E_{\Z} \left[(\Sh + \lambda)^{-1} \right]  \S^{1/2} \right] = q \kappa^2 \df_2,\\
    \K\text{-loop} &= \lambda^2 \frac{1}{T} \mathrm{Tr} \left[\K^{1/2} \mathbb E_{\Z} \left[(\Kh + \lambda)^{-1} \right] \K \mathbb E_{\Z} \left[(\Kh + \lambda)^{-1} \right]  \K^{1/2} \right] = \tilde \kappa^2 \tilde \df_2.
\end{align}
The innermost loop will have an insertion of $\S'$ in between the two resolvents, each of which is separately averaged, yielding:
\begin{equation}
    \frac{S^2}{T} \mathrm{Tr}[\S^{1/2} (\S + \kappa)^{-1} \S' (\S + \kappa)^{-1} \S^{1/2}] \simeq S^2 \frac{1}{T} \mathrm{Tr}[\S \S' (\S + \kappa)^{-2}].
\end{equation}
We adopt the shorthand 
\begin{equation}\label{eq:df2_shorthand}
\begin{split}
    \df_{\S, \S'}^2 &\equiv \frac{1}{N} \Tr\left[ \S \S' (\S + \kappa)^{-2} \right], \\ \df_{\K, \K'}^2 &\equiv \frac{1}{T} \Tr\left[ \K \K' (\K + \tilde{\kappa})^{-2} \right].
\end{split}
\end{equation}
to denote this term by $q S^2 \df^2_{\S, \S'}$. This must necessarily be followed by a $\K$ loop. By making use of the duality relationship, we can write this joint contribution as:
\begin{equation}
    \frac{\kappa^2}{\lambda^2} q \df_{\S, \S'}^2 \tilde \kappa^2 \tilde \df_2  = \frac{\df_{\S, \S'}^2 \tilde \df_2}{\df_1 \tilde \df_1} \equiv \gamma_{\S, \S'}.
\end{equation}

Between this innermost $\S'$ and $\K$ loop and the outside, there can be an arbitrary number of pairs of closed $\K$ and $\S$ loops in between. Again applying the duality relation, we see each pair contributes:
\begin{equation}
   \frac{1}{\lambda^2} q \kappa^2 \df_2 \tilde \kappa^2 \tilde \df_2  =   \frac{\df_2 \tilde \df_2}{\df_1 \tilde \df_1} \equiv \gamma
\end{equation}
Here we divide by $\lambda^2$ because each $\Z \Z^\top$ pair introduces a factor of $(-\lambda)^{-1}$. This gives an interpretation of $\gamma$ as the contribution of a pair of $\S$ and $\K$ loops.
It remains to sum over $n$ to get the final contribution from all of the loops:
\begin{equation}
    \gamma_{\S, \S'}\sum_{n=0}^\infty \gamma^n = \frac{\gamma_{\S, \S'}}{1-\gamma}.
\end{equation}
Finally, outside of the loops, we can perform the average of the left resolvent and the right resolvent separately. This gives the desired relation:
\begin{equation}\label{eq:strong_det_equiv_1}
\boxed{
    (\Sh + \lambda)^{-1} \S' (\Sh + \lambda)^{-1} \simeq  S^2 (\S+ \kappa)^{-1} \S' (\S+ \kappa)^{-1} + S^2 (\S+ \kappa)^{-2} \S \frac{\gamma_{\S, \S'}}{1-\gamma}.
    }
\end{equation}
When $\K = \mathbf{I}$, this recovers the earlier result of \cite{bach2024high}. By applying the same argument with minimal modifications, one obtains the deterministic equivalence for kernel resolvents, namely:
\begin{equation}\label{eq:strong_det_equiv_1_kernel}
\boxed{
    (\Kh + \lambda)^{-1} \K' (\Kh + \lambda)^{-1} \simeq  \tilde S^2 (\K+ \tilde \kappa)^{-1} \K' (\K+ \tilde \kappa)^{-1} + \tilde S^2 (\K+ \tilde \kappa)^{-2} \K \frac{\gamma_{\K, \K'}}{1-\gamma}.
    }
\end{equation}
Here $\gamma_{\K \K} \equiv \frac{\df^2_{\S} \df^2_{\K, \K'}}{\df_1 \tilde \df_1}$.

We next prove Lemma \ref{lem:twopt2}, which gives an equivalent for
\begin{equation}
(\Sh + \lambda)^{-1} \frac{1}{T} \X^\top \K' \X (\Sh + \lambda)^{-1} =   (\Sh + \lambda)^{-1} \S^{1/2} \Z^\top \K^{1/2} \,  \K' \, \K^{1/2} \Z \S^{1/2} (\Sh + \lambda)^{-1} .
\end{equation}
This time, because there are an odd number of $\Z$s on both the left and the right side of $\K'$ in all diagrams, the disconnected terms which average each side separately will vanish. We are left with just the connected term. 

Again, we sum over the $\K$ and $\S$ loops. The first term (which involves no $\S$ loops) is:
\begin{equation}
    \begin{tikzpicture}[baseline=-0.65ex,scale=0.5]
        \def\spacing{3cm}
        \def\shift{0.1cm}
        \def
\linewidth{0.5mm} 
\foreach \x in {1}
    \filldraw[black] (\x*\spacing,0) circle (3pt) node[below] {$\Z^\top$};
    
    \foreach \x in {3}
    \filldraw[black] (\x*\spacing,0) circle (3pt) node[below] {$\Z$};

    \foreach \x in {0,3}
        \draw[line width=\linewidth] (\x*\spacing,0) -- node[below] {$\S^{1/2}$} (\x*\spacing+\spacing,0);
    \foreach \x in {1}
        \draw[line width=\linewidth] (\x*\spacing,0) -- node[below] {$\K^{1/2} \K' \K^{1/2}$} (\x*\spacing+2*\spacing,0);

    \draw[dashed] (3*\spacing -  \shift,0 ) arc[start angle=0, end angle=180, radius=\spacing - \shift];

\end{tikzpicture}
\end{equation}

The innermost loop will be a $\K$ loop involving an insertion of $\K'$ in the middle. As in the calculation of the $\S'$ loop in the prior case, this contributes
\begin{equation}
     \tilde S^2 \frac{1}{T} \mathrm{Tr}[\K \K' (\K + \tilde \kappa)^{-2}] \equiv \tilde S^2 \df_{\K, \K'}^2.
\end{equation}
 Between this innermost loop and the outer resolvents, there must be an even number $2n \geq 0$ of alternating $\S, \K$ loops. The contributions of such loops are unchanged. Each pair gives a factor of $\gamma$. We thus get:
 \begin{equation}\label{eq:strong_det_equiv_2}
     \boxed{(\Sh + \lambda)^{-1} \frac{1}{T}  \X^\top \K' \X (\Sh + \lambda)^{-1} \simeq \lambda^{2} S^2 \tilde S^2 (\S + \kappa)^{-2} \S \frac{\df_{\K, \K'}^2}{1-\gamma}. }
 \end{equation}
 Contracting this with an arbitrary matrix $\S'$ and multiplying by $\frac{1}{T}$ gives the more symmetric form:
 \begin{equation}\label{eq:strong_det_equiv_2_symm}
     \frac{1}{T^2} \mathrm{Tr} \left[ \S' (\Sh + \lambda)^{-1} \X^\top \K' \X (\Sh + \lambda)^{-1}  \right] \simeq \frac{\df^2_{\S, \S'} \df^2_{\K, \K'}}{\df_1 \tilde \df_1} \frac{1}{1-\gamma}.
 \end{equation}
 We can further simplify this by adopting the shorthand $\gamma_{\S, \S', \K, \K'} = \frac{\df^2_{\S, \S'} \df^2_{\K, \K'}}{\df_1 \tilde \df_1}$.

 This result can alternatively be derived using a straightforward but tedious and significantly less conceptually illuminating argument based on a one-point deterministic equivalent for a `sourced' resolvent. Concretely, this argument starts by writing
 \begin{align}
     \frac{1}{T} (\Sh + \lambda)^{-1} \X^{\top} \K' \X (\Sh + \lambda)^{-1} = -\frac{\partial}{\partial J} [\S^{1/2} \Z^{\top} (\K + J \K^{1/2} \K' \K^{1/2}) \Z \S^{1/2} + \lambda]^{-1} \bigg|_{J=0},
 \end{align}
 and then proceeds by using a one-point deterministic equivalent before implicitly differentiating the resulting $S$-transforms. Then, various applications of the duality relation allow the result to be simplified into the form obtained using diagrammatics above.

\subsection{Duality relations and derivatives}\label{app:dualities}

Here, we explicitly state all duality relationships that one can derive. In the section on 1-point deterministic equivalents, we have proven that:
\begin{equation}\label{eq:duality0}
    q \df_1 \equiv q \df^1_{\S}(\kappa) \simeq q \df^1_{\Sh}(\lambda) = \df^1_{\Kh}(\lambda) \simeq \df^1_{\K}(\tilde \kappa) \equiv \tilde \df_1
\end{equation}
We have also seen the first duality relation, namely that:
\begin{equation}\label{eq:duality1}
    \frac{\kappa \tilde \kappa}{\lambda}  = \lambda S \tilde S = \frac{1}{q \df_1} = \frac{1}{\tilde \df_1}
\end{equation}
Logarithmic differentiation of this yields a second duality relation:
\begin{equation}\label{eq:duality2}
    \frac{d \log \kappa}{d \log \lambda} +  \frac{d \log \tilde \kappa}{d \log \lambda} = 1 + \frac{\df_1 - \df^2_{\Sh}(\lambda) }{\df_1}.
\end{equation}
Here we have used that
\begin{equation}\label{eq:ddf_reln}
    \frac{d \df_1}{d \lambda} = \frac{\df^2_{\Sh}(\lambda) - \df^1_{\Sh}(\lambda)}{\lambda}.
\end{equation}
We stress that unlike with $\df_1$, $\df^2_{\Sh}(\lambda) \neq \df^2_{\S}(\kappa) = \df_2$. Since the right hand side can be estimated from the data, this gives us a way to turn an estimate of $\frac{d \log \kappa}{d \log \lambda}$ into an estimate for $\frac{d \log \tilde \kappa}{d \log \lambda}$ or vice-versa.

Next, we relate $\df_2$ to $\tilde \df_2$. We will be explicit and write all $\df$s with subscripts to avoid confusion. We have, by differentiating \eqref{eq:duality0}:
\begin{equation}
    q \kappa \partial_\kappa  \df^1_{\S}(\kappa)
    = q \frac{\kappa}{\lambda} \frac{d \lambda}{d \kappa}  \lambda \partial_{\lambda} \df^1_{\Sh}(\lambda)  
    =  \frac{\kappa}{\lambda} \frac{d \lambda}{d \kappa}  \lambda  \partial_{\lambda}  \df^1_{\Kh}(\lambda) = \frac{\kappa}{\tilde \kappa} \frac{d \kappa}{d \tilde \kappa}   \tilde \kappa \partial_{\tilde \kappa} \df^1_{\K}(\tilde \kappa)
\end{equation}
Evaluating the left and right sides using \eqref{eq:ddf_reln} we get:
\begin{equation}\label{eq:delta_df_relation}
    q (\df^2_{\S}(\kappa) - \df^1_{\S}(\kappa)) = \frac{d \log \tilde \kappa}{d \log \kappa} ( \df^2_{\K}(\tilde \kappa) - \df^1_{\K}(\tilde \kappa))
\end{equation}
This yields:
\begin{equation}\label{eq:corrGCV_key}
    \tilde \df_2 \equiv \df^2_{\K}(\tilde \kappa) =  q \df^1_{\S} (\kappa)+ q  \frac{d \log \kappa}{d \log \tilde \kappa} (\df^2_{\S}(\kappa) - \df^1_{\S}(\kappa))
\end{equation}
Similarly one can write an estimate of $\df_2$ from just the data alone: 
\begin{equation}\label{eq:corrGCV_key2}
    \df_2 \equiv \df^2_{\S}(\kappa) = \df^1_{\Sh}(\lambda) + \frac{d \log \lambda}{d \log \kappa} (\df^2_{\Sh}(\lambda) - \df^1_{\Sh}(\lambda)) =  \df^1_{\Sh}(\lambda) + \frac{d \log \lambda}{d \log \kappa} \lambda \partial_\lambda \df^1_{\Sh}(\lambda).
\end{equation}
Plugging this back into \eqref{eq:corrGCV_key} gives an estimate of $\tilde \df_2$ from the data alone. This is crucial in allowing the CorrGCV to be efficiently computed. 


We now calculate the derivative of $\kappa, \tilde \kappa$ on $\lambda$. We have
\begin{equation}
\begin{aligned}
    \kappa = S \lambda \Rightarrow \frac{d \log \lambda}{d \log \kappa} &= 1 + \frac{d \log 1/S}{d \log \kappa}\\
    &= 1 + \frac{d \log 1/S}{d \log \df_1}  \frac{d \log \df_1}{d \log \kappa}\\
    &= 1 - \frac{d \log 1/S}{d \log \df_1} \frac{\df_1 - \df_2}{\df_1}.
\end{aligned}
\end{equation}
Using $\tilde \df_1 = q \df_1$ we write
\begin{equation}
\begin{aligned}
    S = \frac{1}{1- \tilde \df_1} S_{\K}(\tilde \df_1) = \frac{1}{\tilde \df_1 \df^{-1}_{\K}(\tilde \df_1)}.
\end{aligned}
\end{equation}
Differentiating this gives:
\begin{equation}
     \frac{d \log 1/S}{d \log \tilde \df_1} = 1 + \frac{\tilde \df_1}{\tilde \kappa \df_1'(\tilde \kappa)} = 1 - \frac{\tilde \df_1}{\tilde \df_1 -\tilde \df_2} = - \frac{\tilde \df_2}{\tilde \df_1 - \tilde \df_2}.
\end{equation}
All together this is:
\begin{equation}
\begin{aligned}
    \frac{d \log \lambda}{d \log \kappa} &= 1 + \frac{\tilde \df_2 }{\tilde \df_1 - \tilde \df_2} \frac{\df_1 - \df_2}{\df_1}\\
    &= \frac{\df_1 \tilde \df_1 - \df_2 \tilde \df_2}{\df_1 \tilde \df_1 - \df_1 \tilde \df_2}\\
    &= \frac{1 - \gamma}{1 - \frac{\tilde \df_2}{\tilde \df_1}}.
\end{aligned}
\end{equation}
This finally yields:
\begin{equation}\label{eq:dk1_dl}
    \frac{\partial \kappa}{\partial \lambda} = S \frac{1 - \frac{\tilde \df_2}{\tilde \df_1}}{1 - \gamma}.
\end{equation}
An analogous argument yields:
\begin{equation}\label{eq:dk2_dl}
    \frac{\partial \tilde \kappa}{\partial \lambda} = \tilde S \frac{1 - \frac{\df_2}{\df_1}}{1 - \gamma}.
\end{equation}

Note also that when $\bm K = \mathbf I$, $\tilde \df_2 = (\tilde \df_1)^2 = q^2 \df_1^2$, yielding
\begin{equation}
    \gamma \equiv \frac{\df_2 \tilde \df_2}{\df_1 \tilde \df_1} = q \df_2.
\end{equation}
This recovers the uncorrelated ridge regression setting.

\subsection{Correlated samples and noise, uncorrelated test point}\label{app:corr_sample_uncorr_test}

By applying the deterministic equivalences proved in the prior section, we can directly obtain an exact formula for the asymptotic form of the training and generalization error for a linear model trained on a correlated dataset and evaluated on an uncorrelated test point. In fact, we can do better. The deterministic equivalence \eqref{eq:strong_det_equiv_1} allows us to easily consider the case where the test point has a different covariance $\S'$ from the covariance $\S$ of the training set. This allows us to state the formula for generalization under covariate shift as well. In one fell swoop, we thus prove Theorems \ref{thm:linreg}, \ref{thm:matchedk}, and \ref{thm:mismatchedk} by directly proving Theorem \ref{thm:mismatchedk}, from which the other two results follow as special cases. 

We recall that
\begin{equation}
    \hat \w = \Sh (\Sh + \lambda)^{-1} \bar \w + (\Sh + \lambda)^{-1} \frac{\X^\top \e}{T}, \quad  \mathbb E_{\e} [\e \e^\top] = \sigma_\epsilon^2 \K' .
\end{equation}
We have that the generalization error (for a test point whose distribution has covariance $\S'$) is:
\begin{equation}
\begin{aligned}
    R_g &= (\bar \w - \hat \w)^\top \S' (\bar \w - \hat \w) \\
   &= \underbrace{\lambda^2 \bar \w^\top (\Sh + \lambda)^{-1} \S' (\Sh + \lambda)^{-1} \bar \w}_{\text{Signal}} + \underbrace{\sigma_\epsilon^2 \frac{1}{T^2} \Tr[\S' (\Sh + \lambda)^{-1} \X^\top \K' \X (\Sh + \lambda)^{-1}  ]}_{\text{Noise}}.
    \end{aligned}
\end{equation}
The Signal term is immediately obtained by applying the deterministic equivalence in \eqref{eq:strong_det_equiv_1}. 
\begin{equation}
\begin{aligned}
    \text{Signal} = \underbrace{ \kappa^2 \bar \w^\top (\S + \kappa)^{-1} \S' (\S + \kappa)^{-1} \bar \w}_{\text{Bias}^2} + \underbrace{\kappa^2 \bar\w^\top \S (\S + \kappa)^{-2} \bar \w \frac{\gamma_{\S, \S'}}{1-\gamma}}_{\text{Var}_{\X}}.
\end{aligned}
\end{equation}
Here we have explicitly delineated which parts of the signal term are due to the bias of the estimator, and which terms are $\mathrm{Var}_{\X}$. The latter can be removed by bagging the estimator over different datasets. Note that in diagrammatic language the bias term corresponds exactly to the disconnected averages of the left and right resolvents separately, which makes sense, as it is the generalization obtained by first averaging the predictor over different training sets before calculating the test risk. 

Similarly, by applying the deterministic equivalence \eqref{eq:strong_det_equiv_2_symm}, we obtain:
\begin{equation}
    \text{Noise} = \underbrace{\sigma_{\epsilon}^2 \frac{\gamma_{\S, \S', \K, \K'}}{1-\gamma}}_{\mathrm{Var}_{\X \e}}.
\end{equation}
This yields
\begin{equation}\label{eq:OOD_risk}
\begin{split}
    R_g &\simeq \underbrace{\kappa^2 \bar \w^{\top} (\S+\kappa)^{-1} \S' (\S+\kappa)^{-1} \bar \w }_{\mathrm{Bias}^2} +  \underbrace{\kappa^2  \frac{\gamma_{\S, \S'}}{1 - \gamma}  \bar \w^{\top} \S (\S+\kappa)^{-2} \bar \w}_{\mathrm{Var}_{\X}} + \underbrace{\frac{\gamma_{\S, \S', \K, \K'}}{1-\gamma} \sigma_\epsilon^2 }_{\mathrm{Var}_{\X \e}}.
\end{split}
\end{equation}
Specializing to the case of $\S = \S'$, $\K = \K'$ yields the in-distribution matched-noise-correlation setting.

We next treat the training error. We have
\begin{equation}
\begin{aligned}
    \hat R_{in} &= \frac{1}{T} |\y - \hat \y|^2 = \frac{1}{T} |\X \bar\w + \e - \X \hat \w|^2\\
    &= \frac{1}{T} |\lambda \X (\Sh + \lambda)^{-1} \bar \w|^2 +  \frac{1}{T} \left|\e - \X (\Sh + \lambda)^{-1} \frac{\X^\top \e}{T}\right|^2\\
    &=
    \underbrace{\lambda^2 \bar \w^\top \Sh (\Sh + \lambda)^{-2} \bar \w }_{\text{Signal}_{in}} +\underbrace{ \frac{\lambda^2}{T} \mathrm{Tr}\left[ \K' (\hat \K + \lambda)^{-2} \right] }_{\text{Noise}_{in}}.
\end{aligned}
\end{equation}
One can evaluate the signal term by recognizing it as a derivative and applying the chain rule:
\begin{equation}
\begin{aligned}
    \text{Signal}_{in}& = - \lambda^2 \frac{d}{d \lambda} \bar \w^\top \Sh (\Sh + \lambda)^{-1} \bar \w  \\
    &= \lambda^2 \frac{d \kappa}{d \lambda} \bar \w^\top \S (\S + \kappa)^{-2} \bar \w \\
    &= \lambda^2 S \frac{\tilde \df_1 - \tilde \df_2}{\tilde \df_1} \frac{1}{1-\gamma}  \bar \w^\top \S (\S + \kappa)^{-2} \bar \w.
\end{aligned}
\end{equation}
In the last line, we have inserted the form of  $\frac{d \kappa}{d \lambda}$ from \eqref{eq:dk1_dl}.

The noise term for generic $\K'$ is slightly more involved. We can write it as:
\begin{equation}
\begin{aligned}
    \text{Noise}_{in} &= - \sigma_\epsilon^2 \frac{\lambda^2}{T} \partial_{\lambda} \mathrm{Tr}\left[\K' (\Kh + \lambda)^{-1} \right]\\
    &= - \sigma_\epsilon^2\frac{\lambda^2}{T} \partial_{\lambda} \left[ \frac{\tilde \kappa}{\lambda} \mathrm{Tr}\left[\K' (\K + \tilde \kappa)^{-1} \right]\right]\\
    &= \sigma_\epsilon^2 \frac{\tilde \kappa}{T} \mathrm{Tr}\left[\K' (\K + \tilde \kappa)^{-1} \right] - \sigma_\epsilon^2 \frac{\lambda}{T} \frac{d \tilde \kappa}{d \lambda} \Tr \left[\K \K' (\K + \tilde \kappa)^{-2} \right]  
\end{aligned}
\end{equation}
Now applying the derivative relationship \eqref{eq:dk2_dl} we get:
\begin{equation}\label{eq:noise_term_mismatched}
     \text{Noise}_{in} = \sigma_\epsilon^2 \tilde \kappa \left[  \frac{1}{T} \Tr \K' (\K + \tilde \kappa)^{-1} - \frac{\df_1 - \df_2}{\df_1} \frac{1}{1-\gamma} \df_{\K, \K'}^{2}  \right]
\end{equation}
In the case of $\K = \K'$ this simplifies to:
\begin{equation}\label{eq:noise_term_matched}
    \text{Noise}_{in} =  \sigma_{\epsilon}^2 \tilde \kappa \tilde \df_1 \left[1 - \frac{\df_1 - \df_2}{\df_1 \tilde \df_1 - \df_2 \tilde \df_2} \tilde \df_2 \right] = \frac{\sigma_\epsilon^2}{S} \frac{\tilde \df_1 -\tilde \df_2}{\tilde \df_1} \frac{1}{1-\gamma}.
\end{equation}

\subsection{Correlated test point}\label{app:corr_test}

Here, we prove Theorem \ref{thm:corrtest}. Specifically, our setting is one where we allow the test point $\x$ to have non-vanishing correlation with the training set. Similarly, we assume that the label noise on the test point, $\epsilon$ has nontrivial correlation with the label noise on the training set. We assume a general matrix-Gaussian model for the covariates: 
\begin{align}
    \begin{pmatrix}
        \x^{\top} \\ \X 
    \end{pmatrix}
    \sim \mathcal{N}\left( \mathbf{0}, \begin{pmatrix} 1 & \k \\ \k^\top & \K \end{pmatrix} \otimes \S \right)
\end{align}
where the vector $\k \in \mathbb{R}^{T}$ gives the correlation between the training points and the test point, i.e., 
\begin{align}
    \mathbb{E}[x_{i} x_{t j}] = \Sigma_{ij} k_{t}.
\end{align}
Similarly for $\epsilon, \e$ we write:
\begin{align}
    \begin{pmatrix}
        \epsilon \\ \e 
    \end{pmatrix}
    \sim \mathcal{N}\left(  \mathbf{0}, \sigma_{\epsilon}^{2}\begin{pmatrix} 1 & \k^{\top} \\ \k & \K \end{pmatrix} \right) 
\end{align}
We further assume that these are independent of the $\x$ covariates. That is, $\mathbb E[x_{t i} \epsilon] = \mathbb E[x_i \epsilon_s]=0 $.

By the usual formulas for Gaussian conditioning, this generative model implies that
\begin{align}
    \x \,|\, \X \sim \mathcal{N}( \X^{\top} \a, (1-\rho) \S )
\end{align}
and
\begin{align}
    \epsilon\,|\,\e \sim \mathcal{N}( \a^{\top} \e , (1-\rho) \sigma_{\epsilon}^{2} )
\end{align}
where we write
\begin{align}
    \rho &= \k^{\top} \K^{-1} \k, \quad \a = \K^{-1} \k.
\end{align}

Then, the out-of-sample risk is
\begin{align}
    R_{out} &= \mathbb{E} [ \x^{\top} (\bar{\w} - \hat{\w}) + \epsilon ]^{2} .
\end{align}
We have the well-known closed-form expression for $\hat \w$:
\begin{align}
    \bar{\w} - \hat{\w} &= \lambda (\Sh + \lambda )^{-1} \bar{\w} - \frac{1}{T} (\Sh + \lambda )^{-1} \X^\top \e.
\end{align}
Given that $\mathbb E\left[ [\x_{t}]_i \e_s \right] = 0$, we have:
\begin{align}
    R_{out} = \underbrace{\lambda^{2} \mathbb{E} [\x^{\top} (\Sh + \lambda)^{-1} \bar{\w}]^{2}}_{\text{Signal}} + \underbrace{\mathbb{E} \left[ \epsilon - \frac{1}{T} \x^{\top} (\Sh + \lambda)^{-1} \X^{\top} \e \right]^{2} }_{\text{Noise}}.
\end{align}
Here, we have again identified the two terms that appear as signal and noise terms. Taking an expectation over $\x \,|\, \X$, we have for the signal term
\begin{align}
   \text{Signal}
    &=(1-\rho) \lambda^{2} \mathbb{E} \bar{\w}^{\top} (\Sh + \lambda)^{-1} \S (\Sh + \lambda)^{-1} \bar{\w} 
    \nonumber\\&\quad + \lambda^{2} \mathbb{E} \bar{\w}^{\top} (\Sh + \lambda)^{-1} \X^{\top} \a \a^{\top} \X (\Sh + \lambda)^{-1} \bar{\w} .
\end{align}
Leveraging the deterministic equivalences \eqref{eq:strong_det_equiv_1}, \eqref{eq:strong_det_equiv_2}, we obtain
\begin{equation}\label{eq:signal_corr_test}
\begin{aligned}
    \text{Signal} &=  \frac{\kappa^2}{1-\gamma} \bar \w^\top \S (\S +\kappa)^{-2} \bar \w  \left[ 1 - \rho + \tilde \kappa^2   \a^\top \K (\K + \tilde \kappa)^{-2} \a \right]
\end{aligned}
\end{equation}

Similarly, taking an expectation first over $\epsilon\,|\,\e$ and then over $\x\,|\,\X$, we have
\begin{align}
    \text{Noise}
    &= (1-\rho) \sigma_{\epsilon}^{2} 
    + \mathbb{E} \left[ \a^{\top} \e - \frac{1}{T} \x^{\top} (\Sh + \lambda)^{-1} \X^{\top} \e \right]^{2}
    \\
    &= (1-\rho) \left[ \frac{1}{T^2} \mathbb{E} \e^{\top} \X (\Sh + \lambda)^{-1} \S (\Sh + \lambda)^{-1} \X \e + \sigma_{\epsilon}^{2} \right]
    \nonumber\\&\quad + \mathbb{E} \left[ \a^{\top} \e - \a^{\top} \Kh (\Kh + \lambda)^{-1} \e \right]^{2} .
\end{align}
Noting that
\begin{align}
    \mathbf{I}_{T} - \Kh (\Kh + \lambda)^{-1} = \lambda (\Kh + \lambda)^{-1}
\end{align}
this simplifies to
\begin{align}
   \text{Noise}
    &= (1-\rho) \left[ \frac{1}{T^2} \mathbb{E} \e^{\top} \X (\Sh + \lambda)^{-1} \S (\Sh + \lambda)^{-1} \X \e + \sigma_{\epsilon}^{2} \right]
   + \lambda^2 \mathbb{E} \left[ \a^{\top} (\Kh + \lambda)^{-1} \e \right]^{2} .
\end{align}
Finally, taking the remaining expectation over $\e$, we have
\begin{align}
    \text{Noise}
    &=  \sigma_{\epsilon}^{2} (1-\rho) \left[ \mathbb{E} \frac{1}{T^2} \Tr[ \X (\Sh + \lambda)^{-1} \S (\Sh + \lambda)^{-1} \X \K] +  1 \right]
    \nonumber\\&\quad +  \sigma_{\epsilon}^{2} \lambda^{2} \mathbb{E} \a^{\top} (\Kh + \lambda)^{-1} \K (\Kh + \lambda)^{-1} \a. 
\end{align}
Now again leveraging deterministic equivalence \eqref{eq:strong_det_equiv_2} as well as \eqref{eq:strong_det_equiv_1_kernel} we obtain:
\begin{equation}\label{eq:noise_corr_test}
\begin{aligned}
    \text{Noise} &=  \sigma_\epsilon^2(1-\rho) \left[\frac{\gamma}{1-\gamma} + 1 \right]  + \frac{\sigma_\epsilon^2}{1-\gamma} \tilde \kappa^2 \a^\top \K (\K + \tilde \kappa)^{-2} \a\\
    &= \frac{\sigma_\epsilon^2}{1-\gamma} \left[1 - \rho +  \tilde \kappa^2 \a^\top \K (\K + \tilde \kappa)^{-2} \a \right].
\end{aligned}
\end{equation}
All together Equations \eqref{eq:signal_corr_test} and \eqref{eq:noise_corr_test} together give the desired result:
\begin{equation}
    R_{out}^{\k} = R_{out}^{\k=0} \left[1 - \rho +  \tilde \kappa^2 \a^\top \K (\K + \tilde \kappa)^{-2} \a \right].
\end{equation}

 One could straightforwardly extend this to the case where the label noise correlations are different from those of the covariates with relative ease. As doing so adds complexity to the formulas without much conceptual gain, we will not do so here.

\section{Bias-variance decompositions}\label{app:BV}

Often in prior work, it has been convention to call the proportional to the signal $\bar \w$ the \textbf{Bias}\textsuperscript{2} and the term proportional to the noise $\sigma_\epsilon^2$ the \textbf{variance} component of the total risk. Strictly speaking from the perspective of a fine-grained analysis of variance, this is not true. This was pointed out in detail by \cite{adlam2020understanding, lin2021anova}, where fine-grained bias variance decompositions were performed for random feature models.

The estimator $\hat \w$ is given by:
\begin{equation}
     \hat \w = \Sh (\Sh + \lambda)^{-1} \bar \w + (\Sh + \lambda)^{-1} \frac{\X^\top \e}{T}.
\end{equation}
We see that the estimator is sensitive to both the draw of $\e$ \textit{and} the choice of $\X$. Even in the case of no noise, $\sigma_\epsilon^2 = 0$, the estimator would still have variance over different draws of $\X$. This variance can be removed if one had access to multiple datasets by \textbf{bagging} over the data. In the limit of infinite bagging, this amounts to a data average. This yields, by deterministic equivalence:
\begin{equation}
     \mathbb E_{\X} \hat \w = \S (\S + \kappa)^{-1} \bar \w .
\end{equation}
Consequently the risk of this estimator would be just the bias:
\begin{equation}
    \mathrm{Bias}^2 = R(  \mathbb E_{\X, \e}  \hat \w) = \kappa^2  \bar \w^\top \S (\S + \kappa)^{-2} \bar \w.
\end{equation}
Similarly, if one fixed the dataset $\X$ but averaged over different draws of the noise $\e$, the risk of this averaged predictor would be the same as the risk of a predictor trained on a noiseless dataset $\sigma_\epsilon^2 = 0$. Note that this predictor still has variance over the draw of $\X$. We thus get that:
\begin{equation}
    \mathrm{Var}_{\X} = R(  \mathbb E_{\e}  \hat \w) - \mathrm{Bias}^2 = \kappa^2 \bar \w^\top \S (\S + \kappa)^{-2} \bar \w \frac{\gamma}{1-\gamma}.
\end{equation}
The remaining term is $\mathrm{Var}_{\X, \e}$, since it is removable either by bagging over datasets $\X$ or by averaging over noise $\e$. It is given by:
\begin{equation}
     \mathrm{Var}_{\X, \e} = R( \hat \w) - \mathrm{Bias}^2 - \mathrm{Var}_{\X}  = \frac{\gamma}{1-\gamma} \sigma_\epsilon^2.
\end{equation}
We stress this is true under both correlated and uncorrelated data, given that $\kappa, \gamma$ is appropriately defined. This decomposition was computed for linear models in \cite{canatar2021spectral,atanasov2024scaling}. These equations also hold for out of distribution risk, as given in \eqref{eq:OOD_risk}.

\section{Scaling analysis} \label{app:scaling}

\subsection{Review of optimal rates}

The analysis of the scaling properties of linear regression under power-law decay in the covariance $\Sigma$ and target $\bar \w$ along the principal components of $\Sigma$ was studied in detail in \cite{caponnetto2005fast, caponnetto2007optimal}. It has received renewed attention given that it can sharply characterize the scaling properties of kernel methods, especially the neural tangent kernel \cite{jacot2018neural} on a variety of realistic datasets \cite{spigler2020asymptotic,bordelon2020spectrum,cui2021generalization}. This has implications for neural scaling laws \cite{kaplan2020scaling, bahri2021explaining} in the lazy regime of neural network training identified in \cite{chizat2019lazy}.

One defines the source and capacity exponents $\alpha, r$ respectively by looking at the decay of the eigenvalues $\lambda_k$ of $\S$ and components $\bar \w_k$ of $\bar \w$ along those eigendirections. Then, for many real datasets one observes the power laws:
\begin{equation}
    \lambda_k \sim k^{-\alpha}, \quad  \lambda_{k} \bar w_k^2 \sim k^{-\beta}.
\end{equation}
It is advantageous to write $\beta = 2\alpha r + 1$, as the exponent $r$ appears naturally in the final rates. Then, in the ridgeless limit one has $q \df_1 = 1$. From this one can easily obtain that 
\begin{equation}
    \kappa \sim T^{-\alpha}.
\end{equation}
At finite ridge, more generally one has
\begin{equation}\label{eq:kappa_bottleneck}
    \kappa \sim \mathrm{max}(\lambda, T^{-\alpha}).
\end{equation}
For this reason, $\kappa$ is often called the resolution or the signal capture threshold. It determines which eigencomponents are learned and which are not. Further, taking  $\sigma_\epsilon = 0$, one gets the scaling:
\begin{equation}
    R_{out} = R_g \sim \kappa^{2 \mathrm{min}(r, 1)}
\end{equation}
If one takes the ridge to scale with the data as $\lambda \sim T^{-\ell}$ as in \cite{cui2021generalization}, the full equation in the noiseless setting becomes
\begin{equation}
    R_{out} \sim T^{-2 \mathrm{min}(\alpha, \ell) \mathrm{min}(r,1)}
\end{equation}
These are the rates predicted in \cite{caponnetto2007optimal}.

If one passes data through $N$ linear random features before performing, one can change the resolution scaling by adding a further bottleneck:
\begin{equation}\label{eq:kappa_bottleneck2}
    \kappa \sim \mathrm{max}(\lambda, T^{-\alpha}, N^{-\alpha}).
\end{equation}
This plays an important role in current models of neural scaling laws \cite{bahri2021explaining, maloney2022solvable, bordelon2024dynamical,atanasov2024scaling,defilippis2024dimension}. 
It is therefore interesting to ask whether the presence of strong correlations could also affect the scaling laws by adding a further bottleneck to \eqref{eq:kappa_bottleneck2}. We answer this in the negative. 

\subsection{Strong correlations do not affect scaling exponents}

Here, we ask whether strongly correlated data could change the scaling in \eqref{eq:kappa_bottleneck}. First, we note that for correlated data, as long as each new sample is linearly independent of the prior ones in $\mathbb R^{T}$, we have that $\K$ does not have any zero eigenvalues. In general, the $T \times T$ correlation matrix $\K$ will have Toeplitz structure when the correlations are stationary and will be invertible as long as each new datapoint is not a linear combination of the prior ones. It could however be that the smallest eigenvalue goes to zero as $T \to \infty$. This would lead $S_{\K}$ to blow up and could lead to a worse scaling law. We show that this will not happen for exponential, nearest neighbor, and power law correlations. In the latter case, we require some assumptions on the power law exponent.

\begin{figure}[t]
    \centering
    \includegraphics[width=0.5\linewidth]{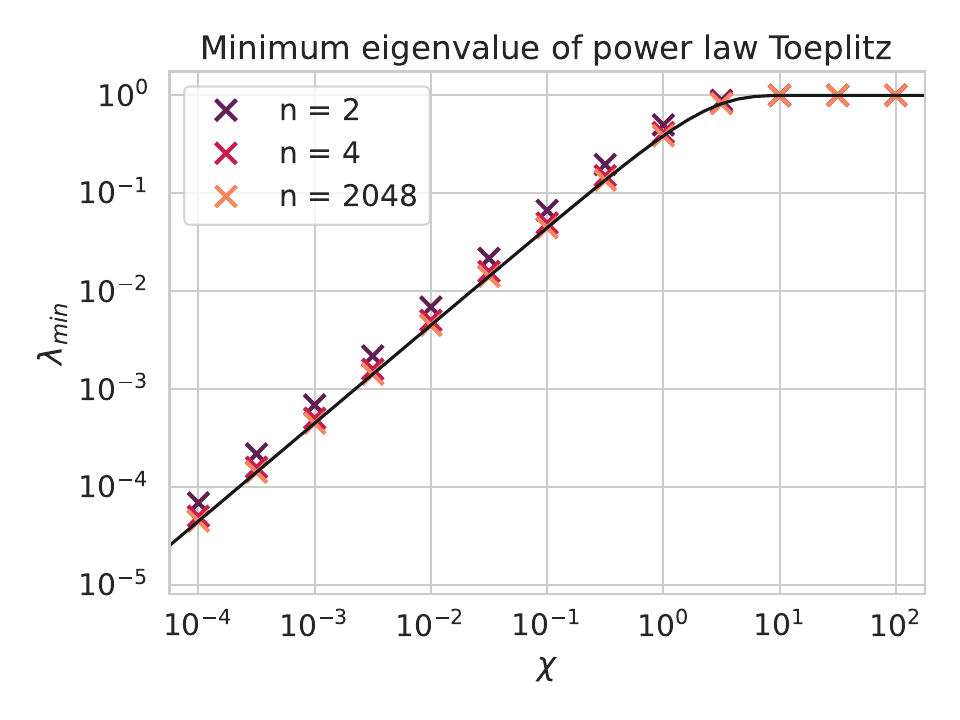}
    \caption{Verification of the minimum eigenvalue for a Toeplitz matrix with power law decay $K_{ts} = (1+|t-s|)^{-\chi}$. We see a great match between \eqref{eq:wowza} (solid black line) and the empirics (x shapes) for matrix sizes ranging between $2 \times 2$ and $1024 \times 1024$. In particular, this guarantees that the matrix will never be singular even at very small positive $\chi$, corresponding to strong power law correlations. It is interesting to note that the minimum eigenvalue of a $2\times 2$ Toeplitz matrix of this form, which is simply $\lambda_{\mathrm{min}} = 1-2^{-\chi}$, is relatively close to the minimum eigenvalue of the infinite operator; their ratio decreases from $-\frac{\log(2)}{\log(2/\pi)} \simeq 1.534\ldots$ at $\chi \downarrow 0$ towards 1 as $\chi \to \infty$. }
    \label{fig:toeplitz}
\end{figure}

\begin{proposition}\label{prop:S_bound}
    Under exponential, nearest neighbor, and power law correlations, the $T \times T$ correlation matrix $\K$ remains uniformly nonsingular as $T \to \infty$. 
\end{proposition}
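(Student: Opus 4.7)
The plan is to recognize $\K$ as a truncated symmetric Toeplitz matrix and to control its smallest eigenvalue via the essential infimum of its spectral density (symbol). Concretely, for each of the three correlation kernels I would define the symbol $f(\theta) = \sum_{k \in \mathbb{Z}} K_k e^{i k \theta}$ (interpreted as a tempered distribution when $K_k$ is not absolutely summable) and then invoke the standard Toeplitz bound $\lambda_{\min}(\K) \geq \mathrm{ess\,inf}_{\theta} f(\theta)$, which holds for \emph{every} truncation dimension $T$. Since this lower bound is independent of $T$, showing $\mathrm{ess\,inf}\, f > 0$ immediately yields nonsingularity uniformly in $T$, which is strictly stronger than required but is the cleanest route.

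For the two decaying cases the computation is immediate. For exponential correlations $K_t = e^{-|t|/\xi}$, the symbol admits the closed form
\begin{equation}
f(\theta) \;=\; \frac{1 - e^{-2/\xi}}{1 - 2 e^{-1/\xi}\cos\theta + e^{-2/\xi}},
\end{equation}
which attains its minimum at $\theta = \pi$, giving $\mathrm{ess\,inf}\, f = (1-e^{-1/\xi})/(1+e^{-1/\xi}) > 0$. For nearest-neighbor correlations with off-diagonal strength $\rho$ satisfying $|\rho| < 1/2$, the symbol is $1 + 2\rho\cos\theta$, whose minimum is $1-2|\rho|>0$. In both cases the lower bound is explicit and $T$-independent.

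The main obstacle is the power-law case $K_t = (1+|t|)^{-\chi}$ with $\chi \in (0,1)$, since then $K_t$ is not summable and the symbol has an integrable singularity at $\theta=0$, making the direct essential-infimum argument delicate. To sidestep this, I would work directly with quadratic forms by using the Bernstein-type integral representation
\begin{equation}
(1+|t|)^{-\chi} \;=\; \frac{1}{\Gamma(\chi)} \int_{0}^{\infty} s^{\chi-1} e^{-s} \, e^{-s|t|} \, ds,
\end{equation}
which expresses $\K$ as a positive mixture of exponential-correlation Toeplitz matrices $\K^{(s)}_{tu} = e^{-s|t-u|}$. Applying the exponential bound from the previous paragraph inside the integral yields
\begin{equation}
\lambda_{\min}(\K) \;\geq\; \frac{1}{\Gamma(\chi)} \int_{0}^{\infty} s^{\chi-1} e^{-s} \tanh(s/2)\, ds \;>\; 0,
\end{equation}
a bound that is manifestly positive, finite, and independent of $T$. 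This formula is precisely the content of Equation \eqref{eq:wowza} that is empirically validated in Figure \ref{fig:toeplitz}. Combining the three cases proves the proposition.
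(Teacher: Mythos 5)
Your proof is correct, and for the power-law case it takes a genuinely different (and more self-contained) route than the paper. For the exponential and nearest-neighbor kernels both arguments rest on the symbol: you invoke the finite-$T$ eigenvalue-containment bound $\lambda_{\min}(\K)\ge \operatorname{ess\,inf}_\theta f(\theta)$ (Grenander--Szeg\Haccent{o}), while the paper invokes the first Szeg\Haccent{o} limit theorem for $\frac{1}{T}\log\det\K$ together with explicit spectral formulas; your version is in fact the sharper statement, since it is uniform in $T$ rather than only controlling the limit. The substantive difference is the power-law case: the paper evaluates the symbol at $\theta=\pi$ in terms of the zeta function (Equation \eqref{eq:wowza}) and asserts, with numerical support and an external citation, that this value lower-bounds the smallest eigenvalue at finite size, whereas you bypass the singular symbol entirely via the Gamma-integral representation $(1+|t|)^{-\chi}=\Gamma(\chi)^{-1}\int_0^\infty s^{\chi-1}e^{-s}e^{-s|t|}\,ds$, which exhibits $\K$ as a positive mixture of exponential-correlation Toeplitz matrices, and then integrate the exponential bound $\lambda_{\min}(\K^{(s)})\ge\tanh(s/2)$ against the mixture. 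That quadratic-form argument is rigorous as written and yields a $T$-independent positive lower bound for every $\chi>0$; moreover, since $\sum_{k\in\mathbb{Z}}(-1)^k e^{-s|k|}=\tanh(s/2)$, your integral equals $2(1-2^{1-\chi})\zeta(\chi)-1$, so your side remark that the bound reproduces \eqref{eq:wowza} is indeed correct (modulo an Abel-summation justification of the interchange, which your proof does not actually need). What the paper's route buys is the interpretation of that constant as the limiting bottom of the spectrum, not merely a lower bound; what yours buys is a short, fully self-contained proof of nonsingularity that avoids Szeg\Haccent{o} theory for singular symbols and any appeal to numerics.
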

\begin{proof}

We can study the invertibility of the limiting $T \to \infty$ matrix by applying standard results on the asymptotic behavior of Hermitian Toeplitz matrices \cite{boettcher2009toepliz}. Let the autocorrelation be $K_{t, t+\tau}= a_{\tau}$. The three cases of interest are exponential correlations 
\begin{align}
    a_{\tau} = e^{-|\tau|/\xi}
\end{align}
with correlation length $\xi > 0$, nearest-neighbor correlations
\begin{align}
    a_{\tau} = \begin{cases} 1 & \tau = 0 \\ b/2 & \tau = \pm 1 \\ 0 & \mathrm{otherwise} \end{cases}
\end{align}
with $b \in [0,1)$ (see Appendix \ref{app:S_explicit}), and power-law correlations
\begin{align}
    a_{\tau} = \frac{1}{(1+|\tau|)^{\chi}}
\end{align}
with exponent $\chi > 0$. The key object is then the \emph{symbol} of the Toeplitz operator corresponding to the finite matrices, which is the Fourier transform of the autocorrelation: 
\begin{equation}
    w(\theta) = \sum_{k=-\infty}^\infty a_k e^{i k \theta} = a_{0} + 2 \sum_{k=1}^{\infty} a_{k} \cos(k \theta),
\end{equation}
where $\theta \in [0,2\pi)$. For the three cases of interest, we can easily work out that
\begin{align}
    w(\theta) &= \frac{\sinh(1/\xi)}{\cosh(1/\xi)-\cos(\theta)}
    && (\textrm{exponential})
    \\
    w(\theta) &= 1 + b \cos(\theta) 
    && (\textrm{nearest-neighbor})
    \\
    w(\theta) &= e^{-i\theta} \operatorname{Li}_{\chi}(e^{i\theta}) + e^{i\theta} \operatorname{Li}_{\chi}(e^{-i \theta}) - 1 
    && (\textrm{power-law})
\end{align}
where $\operatorname{Li}_{\chi}(z)$ is the polylogarithm of order $\chi$. All three of these are real, locally-continuous and uniformly positive functions that are symmetric about $\theta=\pi$ and minimized at that point, where they take the values
\begin{align}
    w(\pi) &= \tanh \frac{1}{2\xi} && (\textrm{exponential})
    \\
    w(\pi) &= 1-b && (\textrm{nearest-neighbor})
    \\
    w(\pi) &= 2(1-2^{1-\chi})\zeta(\chi) - 1  && (\textrm{power-law}) \label{eq:wowza},
\end{align}
where for power-law correlations the result for $\chi=1$ is understood in a limiting sense.\footnote{In particular, we have $\lim_{\chi \to 1} w(\pi) = \log 4 - 1 \simeq 0.386\ldots$.} For the exponential and nearest-neighbor cases, the location of the minimum is obvious. For power laws, a bit more work is required, starting from the symbol
\begin{align}
    w_{\chi}(\theta) = 1 + 2 \sum_{k=1}^{\infty} \frac{\cos(k\theta)}{(1+k)^{\chi}}.
\end{align}
Note that this series converges absolutely if $\chi > 1$, else it is only conditionally convergent. Using the Schwinger parametrization trick, we write
\begin{align}
    \frac{1}{(1+k)^{\chi}} = \frac{1}{\Gamma(\chi)} \int_{0}^{\infty} u^{\chi-1} e^{-(k+1) u}\, du, 
\end{align}
which allows us to write the symbol as
\begin{align}
    w_{\chi}(\theta) = 1 + \frac{2}{\Gamma(\chi)} \sum_{k=1}^{\infty} \int_{0}^{\infty} u^{\chi-1} e^{-(k+1) u} \cos(k\theta)\, du . 
\end{align}
If $\chi > 1$, we can interchange the sum and integral freely, else we may infinitesimally Abel-regularize the sum by multiplying each term by a factor $e^{-k \varsigma}$ and then take $\varsigma \downarrow 0$ at the end of the computation. This leads to 
\begin{align}
    w_{\chi}(\theta) = 1 + \lim_{\varsigma \downarrow 0} \frac{1}{\Gamma(\chi)} \int_{0}^{\infty} u^{\chi-1} e^{-u} \left(\frac{\sinh(u+\varsigma)}{\cosh(u+\varsigma)-\cos(\theta)} - 1 \right)\,du . 
\end{align}
Now, the integrand is increasing in $\cos\theta$, and is therefore minimized at $\theta = \pi$ (the potential divergence comes when $\theta$ is near zero), which shows that
\begin{align}
    w(\theta) \geq w(\pi) = \frac{1}{\Gamma(\chi)} \int_{0}^{\infty} u^{\chi-1} e^{-u} \frac{\sinh(u)}{\cosh(u)+1} \,du = 2 (1-2^{1-\chi})\zeta(\chi)-1. 
\end{align}
This proves the desired claim. We remark that the integral computation is consistent with the direct evaluation of the sum based on recognizing it as the Dirichlet $\eta$-function, which is related to the Riemann $\zeta$-function as $\eta(s) = (1-2^{1-s}) \zeta(s)$: 
\begin{align}
    w_{\chi}(\pi) = 2 \sum_{k = 1}^{\infty} \frac{(-1)^{k-1}}{k^{\chi}} - 1 = 2 \eta(\chi) - 1 = 2 (1-2^{1-\chi}) \zeta(\chi) - 1; 
\end{align}
the series $\sum_{k = 1}^{\infty} \frac{(-1)^{k-1}}{k^{\chi}} = \eta(\chi)$ converges for all $\chi > 0$. That said, the integral expression has the advantage of making it obvious that $w_{\chi}(\pi) > 0$. 

We now translate the symbol bounds $w(\theta) \geq w(\pi)$ into uniform-in-$T$ bounds on the minimum eigenvalue: 
\begin{align}
    \lambda_{\mathrm{min}}(K) \geq w(\pi). 
\end{align}
We start by recalling the Rayleigh quotient characterization of the minimum eigenvalue of a real symmetric matrix: 
\begin{align}
    \lambda_{\mathrm{min}}(K) = \min_{\Vert v \Vert = 1} v^{\top} K v .
\end{align}
For any test vector $v \in \mathbb{R}^{T}$, we can write
\begin{align}
    v^{\top} K v = \sum_{t,t'=0}^{T-1} v_{t} a_{t-t'} v_{t'} = \frac{1}{2\pi} \int_{0}^{2\pi} w(\theta) \sum_{t,t'=0}^{T-1} v_{t} v_{t'} e^{-i(t-t')\theta} \, d\theta = \frac{1}{2\pi} \int_{0}^{2\pi} w(\theta) \left|\sum_{t=0}^{T-1} v_{t} e^{-it\theta} \right|^{2} \, d\theta 
\end{align}
as by definition
\begin{align}
    a_{k} = \frac{1}{2\pi} \int_{0}^{2 \pi} w(\theta) e^{-ik\theta}\,d\theta. 
\end{align}
Thus, if we have the lower bound $w(\theta) \geq w(\pi) > 0$, then we can bound 
\begin{align}
    v^{\top} K v \geq w(\pi)  \frac{1}{2\pi} \int_{0}^{2\pi} \left|\sum_{t=0}^{T-1} v_{t} e^{-it\theta} \right|^{2} \, d\theta = w(\pi) \Vert v \Vert^2 = w(\pi)
\end{align}
for any test vector $v$, and therefore we conclude that
\begin{align}
    \lambda_{\mathrm{min}}(K) \geq w(\pi). 
\end{align}
This lower bound is uniform in $T$. 

Thus, as for each case of interest $w(\pi) > 0$ within the valid range of parameters, all of the matrices of interest remain invertible, and in particular are \emph{uniformly} non-singular; their minimum eigenvalues are bounded from below by a $T$-independent positive constant. For illustrative purposes, we compare \eqref{eq:wowza} to numerical computation of the minimum eigenvalue for power-law correlations in Figure \ref{fig:toeplitz}. 
\end{proof}


Under the assumption that $\K$ remains invertible in the large $T$ limit, we have that $\lim_{\lambda \to 0} \df_{\K}(\lambda) = 1$  at any value of $T$. More generally $\df_{\K}$ is a continuous monotonically decreasing function of $\lambda$ that goes from $1$ when $\lambda = 0$ to $0$ as $\lambda \to \infty$. As $\lambda \to \infty$ we can expand and get at linear order that $\df_{\K} = \frac{1}{\lambda} \frac{1}{T} \mathrm{Tr}{\K} = \frac{1}{\lambda}$. Writing:
\begin{equation}
    \df_{\K}(\lambda) = \frac{S_{\K}(\df_{\K}(\lambda))^{-1}}{\lambda + S_{\K}(\df_{\K}(\lambda))^{-1}}
\end{equation}
implies that at small $\lambda$, $S_{\K}$ is bounded from above by a $T$-independent constant at any $T$. $S_{\K}$ is also bounded from below by $1$ (see \ref{app:S_g_1}). Then, writing
\begin{equation}
    \kappa = \frac{\lambda}{1 - q \df_1} S_{\K}(q \df_1),
\end{equation}
we see that $S_{\K}$ will not contribute any pole or zero that would affect the scaling properties of $\kappa$ as $\lambda \to 0$. Consequently, correlated data does not affect the scaling law.

\section{Double descent analysis}\label{app:ridgeless}

In this appendix, we study how double descent is affected by correlations. We also more generally study how correlations affect the key quantities of interest relative to the uncorrelated setting.

\subsection{Bounds on renormalized ridges}\label{app:S_g_1}

\begin{lemma}
    For positive-definite $\K$ such that $\frac{1}{T} \Tr(\K) = 1$ we have that $S_{\K}(\tilde \df) \geq 1$ for all values of $\tilde \df \in (0, 1]$. 
\end{lemma}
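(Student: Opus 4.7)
The plan is to change variables from the formal argument $\tilde{\df}$ to a ridge-like parameter $\lambda \geq 0$ via the identity $\tilde{\df} = \df_{\K}(\lambda)$, so that $\df_{\K}^{-1}(\tilde{\df}) = \lambda$, and then verify the resulting inequality directly from the spectral decomposition of $\K$. Note that as $\tilde{\df}$ ranges over $(0,1]$, $\lambda$ ranges over $[0,\infty)$ (with $\tilde{\df}=1$ corresponding to $\lambda=0$, using that $\K$ is invertible so that $\df_{\K}(0)=1$).

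Writing out the definition \eqref{eq:S_defn}, the claim $S_{\K}(\tilde{\df}) \geq 1$ is equivalent to $(1-\tilde{\df}) \geq \tilde{\df}\lambda$, which in turn reduces to showing
\begin{equation*}
    (1+\lambda)\,\df_{\K}(\lambda) \leq 1.
\end{equation*}
Letting $\mu_1,\ldots,\mu_T$ be the eigenvalues of $\K$, and using $\frac{1}{T}\sum_i \mu_i = 1$, I would subtract the two sides to obtain
\begin{equation*}
    1 - (1+\lambda)\df_{\K}(\lambda) = \frac{1}{T}\sum_i \left[1 - \frac{\mu_i(1+\lambda)}{\mu_i+\lambda}\right] = \frac{\lambda}{T}\sum_i \frac{1-\mu_i}{\mu_i+\lambda}.
\end{equation*}
So the inequality reduces to the statement $\sum_i \frac{1-\mu_i}{\mu_i+\lambda} \geq 0$, subject to the constraint $\sum_i (1-\mu_i) = 0$.

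The main (and only nontrivial) step is this last correlation inequality. The key observation is a monotonic alignment: if we set $a_i := 1-\mu_i$ and $b_i := 1/(\mu_i+\lambda)$, then $a_i > 0$ exactly when $\mu_i < 1$, which happens exactly when $b_i > 1/(1+\lambda)$, and vice versa. Thus $a_i \bigl(b_i - \tfrac{1}{1+\lambda}\bigr) \geq 0$ for every $i$. Summing and using $\sum_i a_i = 0$ eliminates the constant shift and yields $\sum_i a_i b_i \geq 0$, as required. Equality holds throughout iff $\mu_i \equiv 1$, i.e.\ $\K = \mathbf{I}$, recovering the familiar uncorrelated limit. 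I do not anticipate any genuine obstacle: the entire argument is a two-line reduction followed by an elementary Chebyshev-sum-style rearrangement, with the trace-one normalization of $\K$ playing the crucial role.
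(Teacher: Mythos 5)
Your proposal is correct, and at its core it establishes the same key comparison as the paper: the inequality $(1+\lambda)\,\df_{\K}(\lambda) \leq 1$ you reduce to is exactly the statement $\df^1_{\K}(\lambda) \leq \df^1_{\mathbf I}(\lambda)$ that drives the paper's argument. The difference is in how each side proves and then exploits it. The paper proves the pointwise comparison by Jensen's inequality applied to the concave map $\rho \mapsto \rho/(\lambda+\rho)$ together with the normalization $\frac{1}{T}\Tr(\K)=1$, and then passes to the $S$-transform by noting both $\df$-functions are monotone decreasing, comparing their functional inverses, and dividing. You instead prove the same comparison by an elementary sign-alignment (Chebyshev-sum) argument on $a_i = 1-\mu_i$ and $b_i = 1/(\mu_i+\lambda)$, and you reach the $S$-transform more directly by evaluating at the single point $\lambda = \df_{\K}^{-1}(\tilde\df)$, which sidesteps the inverse-function comparison entirely; that substitution step is a genuinely cleaner piece of bookkeeping. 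The two proofs of the pointwise inequality are of comparable difficulty (your rearrangement argument is essentially the negative-association trick the paper itself uses elsewhere, in Appendix \ref{app:mismatch}), and your equality characterization $\K=\mathbf I$ matches the paper's. One small point to be aware of, which affects your write-up and the paper's equally: at $\tilde\df = 1$ (i.e.\ $\lambda = 0$) the defining expression for $S_{\K}$ is a $0/0$ limit, so the endpoint of the interval $(0,1]$ is handled by continuity of $S_{\K}$ rather than by the displayed inequality at $\lambda=0$; this is a boundary convention, not a gap.
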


\begin{proof}

We adapt an argument from \citet{zavatone2023learning}. We recognize that for fixed $\tilde \kappa > 0$ the function:
\begin{equation}
    \rho \mapsto \frac{\rho}{\tilde \kappa + \rho}
\end{equation}
is concave. Consequently, Jensen's inequality yields:
\begin{equation}
    \df_{\K}(\tilde \kappa) = \mathbb E_{\rho} \frac{\rho}{\tilde \kappa + \rho} \leq \frac{\mathbb E_\rho \rho}{\tilde \kappa + \mathbb E_\rho \rho} = \frac{1}{\tilde \kappa + 1} = \df_{\mathbf I}(\tilde \kappa)
\end{equation}
pointwise in $\tilde \kappa$. Here $\mathbb E_{\rho}$ is the expectation over the spectrum of $\K$.  We have also used that $\mathbb{E}_{\rho}[\rho] = \frac{1}{T} \Tr(\K) = 1$. The above inequality is strict unless $\K = \mathbf{I}_{T}$ or $\tilde \kappa=0$. 

We next observe that both $\df_{\K}(\lambda)$ and $\df_{\mathbf I}(\lambda)$ are monotonically decreasing function in $\lambda$ that are equal to $1$ only when $\lambda = 0$. This means that the solutions to the equations $\tilde \df = \df_{\K}^1(\lambda)$ and $\tilde \df =  \df^1_{\mathbf I}(\lambda)$ are unique for all $\df \in (0,1]$ and is also monotonically decreasing. 
Consequently we have:
\begin{equation}
    \df^{-1}_{\K} (\tilde \df) \leq \df^{-1}_{\mathbf I} (\tilde \df) = \frac{1 - \tilde \df}{\tilde \df} 
\end{equation}
Upon dividing both sides by $\df_{\K}^{-1}$ we get the desired equality $\Rightarrow S_{\K}(\tilde \df) \geq 1$. This is an equality when $\K = \mathbf I_T$. When $\tilde \kappa = 0$ and thus $\tilde \df=1$, $S_{\K}(\tilde \df)$ may still be greater than one. 

\end{proof}


\begin{proposition}\label{prop:kappa1}
    Let $\kappa_c$ be the renormalized ridge $\lambda S$ when the data has correlation structure $\K$ and let $\kappa_u$ be the corresponding value of $\kappa$ when there is no correlation between data points. We have $\kappa_c \geq \kappa_u$. Moreover, in the ridgeless limit $\kappa_c = \kappa_u$.
\end{proposition}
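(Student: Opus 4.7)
The plan is to combine the self-consistent definition of the renormalized ridge with the already-proven bound $S_{\K} \geq 1$ from the preceding lemma. In both settings the renormalized ridge satisfies
\begin{equation*}
\kappa = \frac{\lambda S_{\K}(q \df_1)}{1 - q \df_1}, \quad \df_1 = \df^1_{\S}(\kappa),
\end{equation*}
with $S_{\K} \equiv 1$ in the uncorrelated setting. Since $\df^1_{\S}$ is strictly decreasing and $\lambda/(1-q\df_1)$ (resp.\ $\lambda S_{\K}(q\df_1)/(1-q\df_1)$) is strictly increasing in $\df_1$ on $(0,1/q)$, this system has a unique solution for each $\lambda > 0$; I denote these $(\df_1^{(c)}, \kappa_c)$ and $(\df_1^{(u)}, \kappa_u)$.

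To establish $\kappa_c \geq \kappa_u$ I would argue by contradiction. Suppose $\kappa_c < \kappa_u$; strict monotonicity of $\df^1_{\S}$ then forces $\df_1^{(c)} > \df_1^{(u)}$, and so $1 - q \df_1^{(c)} < 1 - q \df_1^{(u)}$ (both positive in the relevant regime). Chaining these with the lemma,
\begin{equation*}
\kappa_c = \frac{\lambda S_{\K}(q \df_1^{(c)})}{1 - q \df_1^{(c)}} \geq \frac{\lambda}{1 - q \df_1^{(c)}} > \frac{\lambda}{1 - q \df_1^{(u)}} = \kappa_u,
\end{equation*}
contradicting $\kappa_c < \kappa_u$. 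Hence $\kappa_c \geq \kappa_u$ as claimed.

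For the ridgeless limit I would treat the two regimes separately. In the overparameterized case ($q > 1$), boundedness of the spectrum of $\K$ together with Proposition~\ref{prop:S_bound} makes $S_{\K}$ bounded on $(0,1]$, so in order for $\kappa$ to remain positive as $\lambda \to 0$ one must have $1 - q \df_1 \to 0$, i.e.\ $q\df_1 \to 1$, in both cases. The limits $\kappa_c^\star$ and $\kappa_u^\star$ therefore both satisfy the single equation $q \df^1_{\S}(\kappa^\star) = 1$, which has a unique positive root by strict monotonicity of $\df^1_{\S}$, giving $\kappa_c^\star = \kappa_u^\star$. In the underparameterized case ($q < 1$), $q\df_1 \leq q$ is bounded away from one uniformly in $\lambda$, so both $\kappa_c$ and $\kappa_u$ are of order $\lambda$ and vanish as $\lambda \to 0$; equality $\kappa_c = \kappa_u = 0$ is then trivial.

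The main obstacle is exactly the boundedness of $S_{\K}(q\df_1)$ needed to conclude $q\df_1 \to 1$ in the overparameterized ridgeless limit: a divergence of $S_{\K}$ would allow $\kappa_c$ to stay finite even when $1 - q\df_1^{(c)}$ does not vanish, decoupling the two ridgeless limits. Proposition~\ref{prop:S_bound} resolves this for the correlation kernels of interest (exponential, nearest-neighbor, and power-law with admissible exponent), since it shows $\K$ remains nonsingular as $T \to \infty$, and hence $S_{\K}$ extends to a bounded function on $[0,1]$.
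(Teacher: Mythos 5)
Your proof is correct, and it takes a somewhat different route from the paper's. For the inequality $\kappa_c \geq \kappa_u$, the paper absorbs the correlation factor into an effective ridge $\tilde\lambda = \lambda S_{\K}(q\df_1) \geq \lambda$ (using the same lemma $S_{\K}\geq 1$ that you invoke) and then quotes monotonicity of $\kappa$ as a function of $\lambda$ in the uncorrelated problem; you instead run a direct contradiction argument on the fixed-point equation, which needs only the strict monotonicity of $\df^1_{\S}$ and positivity of $1-q\df_1$, and is therefore more self-contained. Your parenthetical uniqueness claim (that $\lambda S_{\K}(q\df_1)/(1-q\df_1)$ is strictly increasing in $\df_1$) is true---it equals $\lambda/\bigl(q\df_1\,\df^{-1}_{\K}(q\df_1)\bigr)$ and $t\mapsto t\,\df^{-1}_{\K}(t)$ is decreasing---but it is asserted without proof and is not actually needed for the inequality, since the contradiction applies to any pair of solutions. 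For the ridgeless limit, the paper gives a one-line argument that $\kappa$ is determined by the pole condition $q\df^1_{\S}=1$, independent of $\K$; your case split ($q>1$ versus $q<1$, with $\kappa_c=\kappa_u=0$ in the latter) and your explicit appeal to Proposition \ref{prop:S_bound} to control $S_{\K}$ near $\tilde\df=1$ make the same point more carefully, and correctly identify where nonsingularity of $\K$ enters (one can even note that once $q\df_1\to 1$ is forced, continuity and strict monotonicity of $\df^1_{\S}$ alone pin down the common limit, so the boundedness of $S_{\K}$ is needed only to rule out the degenerate scenarios you describe). In short: same key lemma, a more elementary argument for the inequality, and a more explicit treatment of the ridgeless limit.
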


\begin{proof}
In the correlated setting, we recall that the renormalized ridge $\kappa_c$ solves
\begin{align}
    \kappa_c = \frac{\lambda S_{\K}( q\df_{\S}^1(\kappa_c))}{1-q\df_{\S}^1(\kappa_c)} .
\end{align}
Call the numerator $\tilde \lambda$. From the preceding lemma we have that $\tilde \lambda \geq \lambda$. Thus, we have that $\kappa_c$ is equivalent to the self-consistent solution for the equation
\begin{equation}
    \kappa_c = \frac{\tilde \lambda}{1-q\df_{\S}^1(\kappa_c)}.
\end{equation}
But this is the same as the self-consistent equation for a linear regression problem with an explicit ridge $\tilde \lambda \geq \lambda$. The proposition then follows from the fact that  $\kappa$ is monotonic in $\lambda$ in the uncorrelated ridge regression setting.

As $\lambda \to 0$, $\kappa$ is entirely determined by the pole structure of $q \df_{\S}^1 = 1$. This is independent of any structure on $\K$ and so in the ridgeless limit, $\kappa_c = \kappa_u$.
\end{proof}

\begin{corollary}\label{corr:df1}
    Fix $q$. The presence of correlations either decreases or keeps constant $\df_1, \tilde \df_1$. In the ridgeless limit, they are unchanged. 
\end{corollary}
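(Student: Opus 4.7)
The plan is to deduce this directly from Proposition \ref{prop:kappa1} together with the one-point deterministic equivalence $\df_1 \equiv \df^1_{\S}(\kappa)$ and the duality $\tilde \df_1 = q \df_1$ from Equation \eqref{eq:duality0}. First I would recall that $\df^1_{\S}(\kappa) = \frac{1}{N}\sum_{i} \frac{\lambda_i(\S)}{\lambda_i(\S)+\kappa}$ is a strictly monotonically decreasing function of $\kappa$ for $\kappa > 0$ (provided $\S \neq 0$), and continuous at $\kappa = 0$. This is an immediate calculation from the definition in \eqref{eq:df_defn}, no free probability is needed.

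Next, I would invoke Proposition \ref{prop:kappa1}, which supplies the inequality $\kappa_c \ge \kappa_u$ at any fixed $\lambda > 0$ and $q$, with equality as $\lambda \to 0$. Combining this with monotonicity of $\df^1_{\S}$ yields
\begin{equation}
    \df_1^{(c)} \;\equiv\; \df^1_{\S}(\kappa_c) \;\le\; \df^1_{\S}(\kappa_u) \;\equiv\; \df_1^{(u)},
\end{equation}
which proves the first claim for $\df_1$. The corresponding statement for $\tilde \df_1$ then follows by multiplying by $q$, since the duality relation \eqref{eq:duality0} gives $\tilde \df_1 = q \df_1$ in both the correlated and uncorrelated cases (the $\K=\mathbf{I}$ limit is consistent with the general formula).

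For the ridgeless limit, the argument is even cleaner: by the second assertion of Proposition \ref{prop:kappa1}, $\kappa_c \to \kappa_u$ as $\lambda \to 0$, and in fact both are determined by the pole condition $q \df^1_{\S}(\kappa) = 1$, which involves only $\S$ and $q$. Applying $\df^1_{\S}$ to both sides and using continuity gives $\df_1^{(c)} = \df_1^{(u)}$, and multiplying by $q$ handles $\tilde \df_1$. I do not anticipate any real obstacle here; the only subtlety worth flagging is the degenerate case where $\S$ has eigenvalues at zero, but the monotonicity argument still goes through on the support of the spectrum, so the conclusion is unaffected.
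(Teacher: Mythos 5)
Your proposal matches the paper's proof essentially step for step: both invoke Proposition \ref{prop:kappa1} to get $\kappa_c \geq \kappa_u$ (with equality in the ridgeless limit), use the monotone decrease of $\df^1_{\S}(\kappa)$ in $\kappa$ to conclude $\df_1$ decreases or stays constant, and then pass to $\tilde\df_1$ via $\tilde\df_1 = q\df_1$. The argument is correct and there is no substantive difference from the paper's own reasoning.
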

\begin{proof}
    Evaluating $\df_1$ as $\df_1 = \df^1_{\S}(\kappa)$ we have since $\kappa_c \geq \kappa_u$ and $\df_1$ is monotone decreasing in $\kappa$ that $\df_1$ decreases. Since $\tilde \df_1 = q \df_1$, this also decreases. Because $\kappa$ is unchanged in the ridgeless limit, the last part of the corollary follows.  
\end{proof}

\begin{corollary}\label{corr:df2}
    Fix $q$. The presence of correlations either decreases or keeps constant $\df_2$. In the ridgeless limit, it is unchanged. 
\end{corollary}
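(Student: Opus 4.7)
The plan is to mimic the proof of Corollary C.3 (\texttt{corr:df1}) almost verbatim, replacing $\df^1_{\S}$ with $\df^2_{\S}$. Since Proposition C.2 (\texttt{prop:kappa1}) has already been invoked to conclude $\kappa_c \geq \kappa_u$ pointwise in $\lambda$, with equality in the ridgeless limit, the only new ingredient needed is monotonicity of $\df^2_{\S}(\cdot)$.

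First, I would write $\df_2 = \df^2_{\S}(\kappa)$ in spectral form, i.e., $\df^2_{\S}(\kappa) = \frac{1}{N} \sum_i \frac{\rho_i^2}{(\rho_i + \kappa)^2}$, where $\rho_i$ are the eigenvalues of $\S$. Differentiating in $\kappa$ term by term gives
\begin{equation}
\frac{d}{d\kappa} \df^2_{\S}(\kappa) = -\frac{2}{N} \sum_i \frac{\rho_i^2}{(\rho_i + \kappa)^3} \leq 0,
\end{equation}
with strict inequality as soon as $\S \neq 0$ and $\kappa > 0$, so $\df^2_{\S}$ is monotonically decreasing on $(0, \infty)$. (Equivalently, each map $\kappa \mapsto \rho_i^2/(\rho_i+\kappa)^2$ is decreasing.)

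Then I would invoke Proposition C.2: fixing $q$, turning on correlations can only increase the renormalized ridge, $\kappa_c \geq \kappa_u$. Applying the monotonicity just established evaluates as $\df^2_{\S}(\kappa_c) \leq \df^2_{\S}(\kappa_u)$, which is exactly the statement that $\df_2$ either decreases or stays the same under correlations. For the ridgeless statement, Proposition C.2 also gives $\kappa_c = \kappa_u$ as $\lambda \to 0$, so the equality $\df^2_{\S}(\kappa_c) = \df^2_{\S}(\kappa_u)$ follows immediately.

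There is no real obstacle here: the monotonicity of $\df^2_{\S}(\kappa)$ is a one-line calculation, and all the actual work (the pointwise bound $\kappa_c \geq \kappa_u$ and its ridgeless equality) has already been done in Proposition C.2 via the lemma $S_{\K}(\tilde\df) \geq 1$. The corollary is therefore a direct parallel of Corollary C.3 with the exponent $1$ replaced by $2$ in the definition of the degrees of freedom.
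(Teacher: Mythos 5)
Your proposal is correct and follows essentially the same route as the paper: the paper's proof likewise reduces the corollary to the monotonicity of $\df^2_{\S}(\kappa)$ in $\kappa$ combined with the bound $\kappa_c \geq \kappa_u$ (and its ridgeless equality) from the preceding proposition. Your explicit spectral computation of the derivative simply spells out the monotonicity that the paper asserts in one line.
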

\begin{proof}
    The proof is as in the prior corollary, noting that $\df_2 = \df_{\S}^2(\kappa)$ is monotone decreasing in $\kappa$ and $\kappa_c \geq \kappa_u$.
\end{proof}

\begin{proposition}\label{prop:kappa2}
    Let $\tilde \kappa_c$ be the renormalized ridge $\lambda \tilde S$ when the data has correlation structure $\K$ and let $\tilde \kappa_u$ be the corresponding value of $\tilde \kappa$ when there is no correlation between data points. We have $\tilde \kappa_c \leq \tilde \kappa_u$. Moreover, in the ridgeless limit $\tilde \kappa_c \leq \tilde \kappa_u$ still.
\end{proposition}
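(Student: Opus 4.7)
The plan is to reduce the claim to Proposition~\ref{prop:kappa1} via the one-point duality relation $\kappa\tilde\kappa\tilde\df_1=\lambda$ established in Appendix~\ref{sec:1pt_deriv}. Since $\tilde\df_1=q\,\df^1_{\S}(\kappa)$, the relation rearranges to
\begin{equation}
    \tilde\kappa=\frac{\lambda}{f(\kappa)},\qquad f(\kappa)\equiv q\,\kappa\,\df^1_{\S}(\kappa),
\end{equation}
so that $\tilde\kappa$ depends on the correlation structure $\K$ only through the value of $\kappa$ it produces. Because $\S$ and $\lambda$ are held fixed in both the correlated and the uncorrelated problem, comparing $\tilde\kappa_c$ with $\tilde\kappa_u$ reduces to comparing $f(\kappa_c)$ with $f(\kappa_u)$.

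The crucial step is to show that $f$ is strictly increasing on $(0,\infty)$. Differentiating under the trace yields
\begin{equation}
    f'(\kappa)=q\,\df^1_{\S}(\kappa)-\frac{q\kappa}{N}\sum_i\frac{\lambda_i}{(\lambda_i+\kappa)^2}=\frac{q}{N}\sum_i\frac{\lambda_i^2}{(\lambda_i+\kappa)^2}=q\,\df^2_{\S}(\kappa)>0,
\end{equation}
where $\{\lambda_i\}$ are the eigenvalues of $\S$. Hence $\kappa\mapsto\tilde\kappa=\lambda/f(\kappa)$ is strictly decreasing. Combined with Proposition~\ref{prop:kappa1}, which gives $\kappa_c\geq\kappa_u$, this yields $\tilde\kappa_c=\lambda/f(\kappa_c)\leq\lambda/f(\kappa_u)=\tilde\kappa_u$ for every $\lambda>0$.

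For the ridgeless limit, the inequality $\tilde\kappa_c(\lambda)\leq\tilde\kappa_u(\lambda)$ holds pointwise for each $\lambda>0$ and therefore passes to the limit $\lambda\to 0^+$ by continuity. I expect the only delicate point to be the regime-dependent bookkeeping in the ridgeless limit: in the overparameterized case ($q>1$) both sides vanish like $\lambda/\kappa^{\star}$ with a common $\kappa^{\star}$ determined by $q\,\df^1_{\S}(\kappa^{\star})=1$, so one recovers equality; in the underparameterized case ($q<1$) one has $\kappa,f(\kappa)\to 0$ and a short expansion $\kappa\sim\lambda\,S_{\K}(q)/(1-q)$ shows that $\tilde\kappa$ limits to $(1-q)/(q\,S_{\K}(q))$, so the strict inequality persists and is exactly the content of the lemma in Appendix~\ref{app:S_g_1} ($S_{\K}\geq 1$). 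Beyond this checking, the finite-$\lambda$ claim is an essentially immediate consequence of Proposition~\ref{prop:kappa1} and the duality relation.
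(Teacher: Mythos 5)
Your argument is correct, and its core is the same as the paper's: you invoke the duality relation to write $\tilde\kappa = \lambda / \bigl(q\,\kappa\,\df^1_{\S}(\kappa)\bigr)$, note that $\kappa \mapsto q\,\kappa\,\df^1_{\S}(\kappa)$ is monotone increasing (your explicit computation $f'(\kappa) = q\,\df^2_{\S}(\kappa) > 0$ is a clean way to see what the paper asserts termwise over the spectrum of $\S$), and then import $\kappa_c \geq \kappa_u$ from Proposition \ref{prop:kappa1}. The only place you diverge is the ridgeless limit: you pass the finite-$\lambda$ inequality to the limit $\lambda \to 0^+$ (with a regime-by-regime check that the limits exist, using $S_{\K}(q) \geq 1$ in the underparameterized case), whereas the paper argues directly at $\lambda = 0$ via Corollary \ref{corr:df1} (invariance of $\tilde\df_1$ in the ridgeless limit) together with the Jensen bound $\df^1_{\K}(\tilde\kappa) \leq \df^1_{\mathbf I}(\tilde\kappa)$. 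Both routes are valid for the stated weak inequality; your limit-passing version is slightly more economical, while the paper's direct argument additionally identifies when equality holds (only if $\K = \mathbf I_T$ or $\kappa_c = 0$), and your underparameterized expansion $\tilde\kappa_c \to (1-q)/(q\,S_{\K}(q))$ is in substance the same fact, since the lemma $S_{\K} \geq 1$ is itself proved by the Jensen argument.
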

\begin{proof}
    When $\lambda \neq 0$ we can rewrite the duality relation \eqref{eq:duality1} as
    \begin{equation}
        \frac{q}{\lambda} \kappa \df_{\S}^1(\kappa) = \frac{1}{\tilde \kappa}.
    \end{equation}
    This separately holds true for the pairs $\kappa_u, \tilde \kappa_u$ and $\kappa_c, \tilde \kappa_c$.   
    We note that on the left hand side $\kappa \df_{\S}^1(\kappa)$ is a sum of $\frac{\kappa}{\lambda_i + \kappa}$ over the eigenspectrum $\lambda_i$ of $\S$. Each term is monotone increasing in $\kappa$, and thus increases as we go from the uncorrelated $\kappa_u$ to the correlated $\kappa_c$. Consequently, $\tilde \kappa_c \leq \tilde \kappa_u$.

    In the ridgeless limit, by the prior corollary, $\tilde \df$ is unchanged between correlated and uncorrelated data. Consequently, we have:
    \begin{equation}
        \frac{1}{1+\tilde \kappa_u} =  \df_{\mathbf I_T}^1 (\tilde\kappa_u) = \df_{\K}^1 (\tilde\kappa_c) \leq \frac{1}{1 + \tilde \kappa_c}
    \end{equation}
    Thus, we again have $\tilde \kappa_c \leq \tilde \kappa_u$ with equality only if $\K = \mathbf I_T$ or $\kappa_c = 0$.
\end{proof}

\subsection{Ridgeless limit}

\begin{figure}[t!]
    \centering
    \subfigure[]{
    \includegraphics[width=0.31\linewidth]{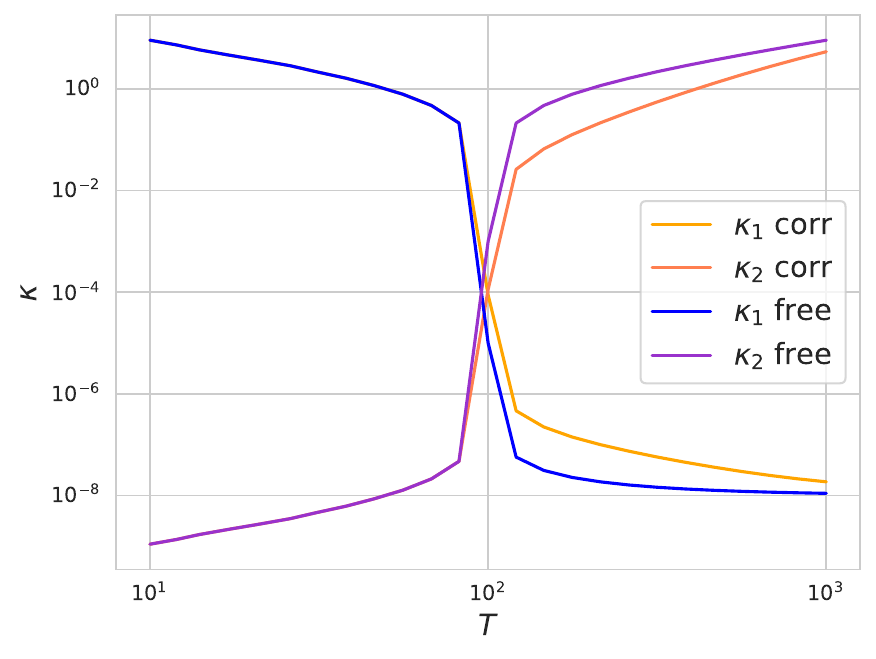}
    }
    \subfigure[]{
    \includegraphics[width=0.31\linewidth]{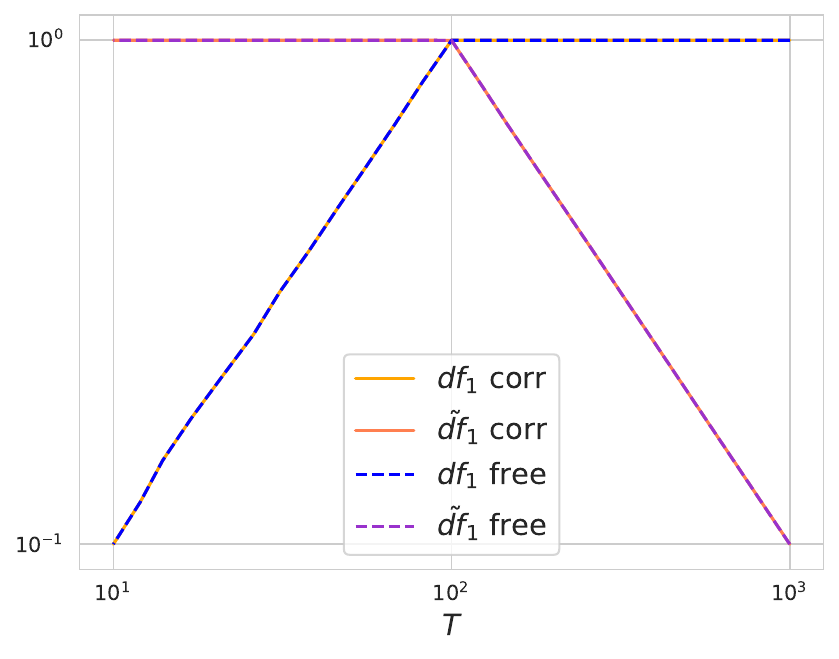}
    }
    \subfigure[]{
    \includegraphics[width=0.31\linewidth]{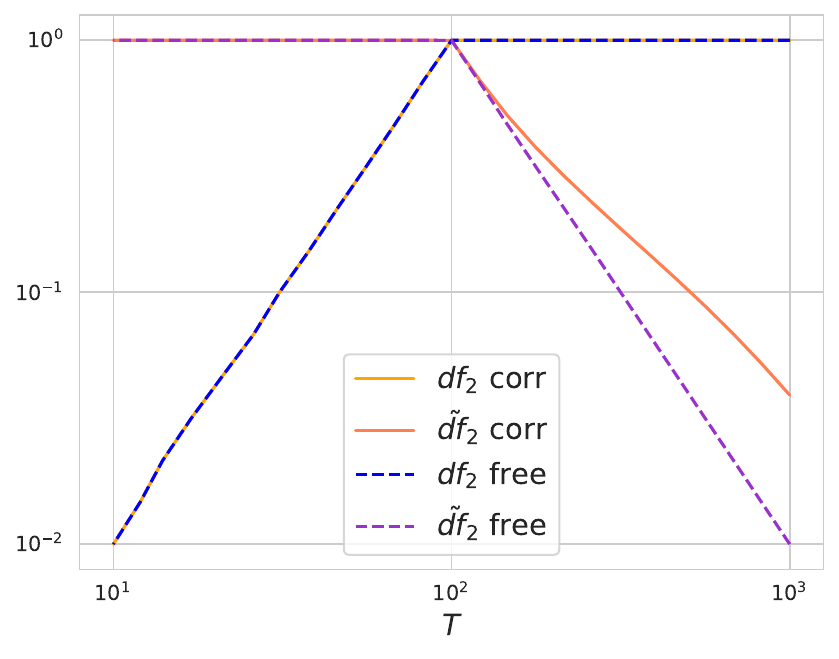}
    }
    \caption{Theory curves comparing $\kappa, \tilde \kappa, \df_1, \tilde \df_1, \df_2, \tilde \df_2$ for correlated data $\K \neq \mathbf I$ vs uncorrelated data $\K = \mathbf I$ at $\lambda = 10^{-8}$ for $N = 100$  on isotropic data. For the correlations, we choose exponential correlations with length $\xi = 10^{2}$. a) We see that when $T < N$, $\kappa, \tilde \kappa$ strongly agree, whereas when $T > N$, $\kappa$ becomes of order the ridge while $\tilde \kappa$ becomes order $1$. The effects of correlations on $\tilde \kappa$ are therefore noticeable in this limit. b) We see that the $\df_1, \tilde \df_1$ are unchanged between correlated (solid) and uncorrelated (dashed) data when the ridge is small. c) We see a similar behavior for $\df_2$ but not for $\tilde \df_2$ when $T > N$. Still, at first order near $T = N$, we see agreement of $\tilde \df_2$ between the correlated and uncorrelated settings, as predicted by our theory.}
    \label{fig:ridgeless_bounds}
\end{figure}

The limiting behavior of the two renormalized ridges $\kappa$ and $\tilde{\kappa}$ depends on whether one is in the underparameterized regime $T > N$ ($q<1$) or the overparameterized regime $T < N$ ($q>1$). In the underparameterized regime, we have that $\kappa \downarrow 0$ as $\lambda \downarrow 0$, while generically $\tilde{\kappa}$ remains non-zero and solves the self-consistent equation $q = \tilde{\df}_{1}(\tilde{\kappa})$. Conversely, in the overparameterized regime $\tilde{\kappa} \downarrow 0$ as $\lambda \downarrow 0$ while $\kappa$ remains non-zero and solves $1/q = \df_{1}(\kappa)$. We numerically illustrate these complementary limiting behaviors in Appendix \ref{app:further_expts}. 

Our task is now to determine the corresponding limits of the out-of-sample risk. First, we consider the case of matched correlations. There, in the underparameterized regime we have $\gamma \to \tilde{\df}_{2}/q$, while in the overparameterized regime we have $\gamma \to q \df_{2}$. As a result, we find that
\begin{align}
    \lim_{\lambda \downarrow 0} R_{g} 
    &\simeq
    \begin{dcases}
        \frac{\tilde{\df}_{2}}{q - \tilde{\df}_{2}} \sigma_{\epsilon}^{2} & q < 1,
        \\
        \frac{\kappa^2}{1-q \df_{2}} \bar \w^\top \S (\S + \kappa)^{-2} \bar \w + \frac{q \df_{2}}{1-q\df_{2}} \sigma_\epsilon^2 & q > 1,
    \end{dcases}
\end{align}
In the overparameterized ridgeless setting, since neither $\kappa, \df_1$ or $\df_2$ is modified, the generalization error is exactly identical as that for linear regression. In the underparameterized limit, we study how $\tilde \df_2$ behaves. We can write
\begin{equation}
     \df_{\K}^2(\tilde \kappa_c) = \mathbb E_{\rho} \left( \frac{\rho}{\rho + \tilde \kappa_c}  \right)^2 \geq \left[ \mathbb  E_{\rho}  \left( \frac{\rho}{\rho + \tilde \kappa_c}  \right) \right]^2 =  \df_{\K}^1(\tilde \kappa_c)^2 = \df^1_{\mathbf I}(\tilde\kappa_u)^2 = \df^2_{\mathbf I}(\tilde\kappa_u),
\end{equation}
where the expectation is taken over the eigenspectrum of $\K$. Moreover, near $q = 1$ when $\tilde \kappa$ is small we have that at linear order:
\begin{equation}
    \df^2_{\K} (\tilde \kappa_c) = 1 -2  \tilde \kappa_c \frac{1}{T} \Tr[\K^{-1}] + O(\tilde \kappa_c^2) = \df^1_{\K}(\tilde \kappa_c)^2 + O(\tilde \kappa_c^2)  = \df^2_{\mathbf I}(\tilde \kappa_u) + O(\tilde \kappa_u^2).
\end{equation}
Thus, at first order the double descent peak is unaffected by correlations in the underparameterized regime as well. We illustrate these observations in Figure \ref{fig:ridgeless_bounds}.

\subsection{Effective ridge enhancement}

\begin{figure}[t!]
    \centering
    \subfigure[]{
    \includegraphics[width=0.31\linewidth]{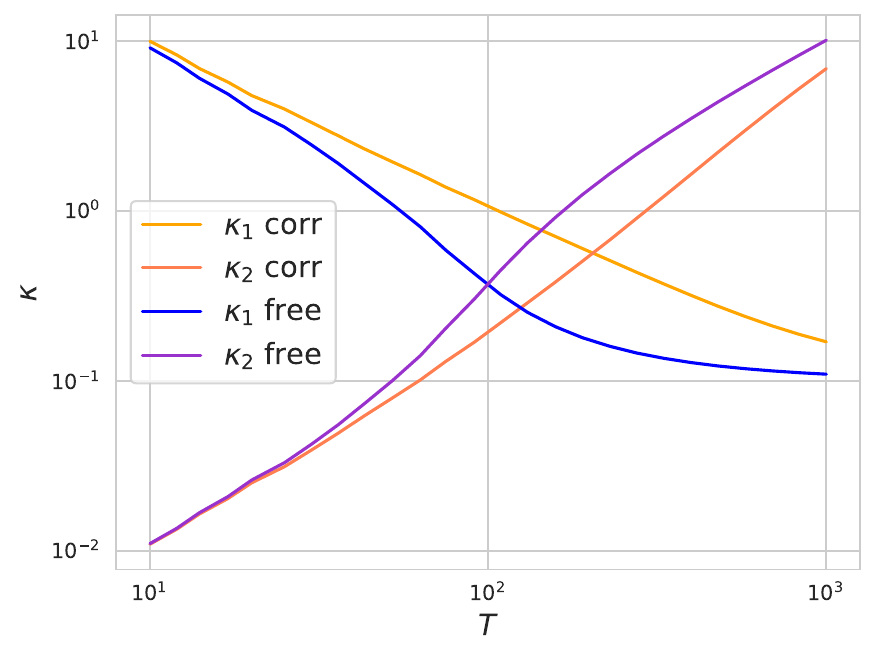}
    }
    \subfigure[]{
    \includegraphics[width=0.31\linewidth]{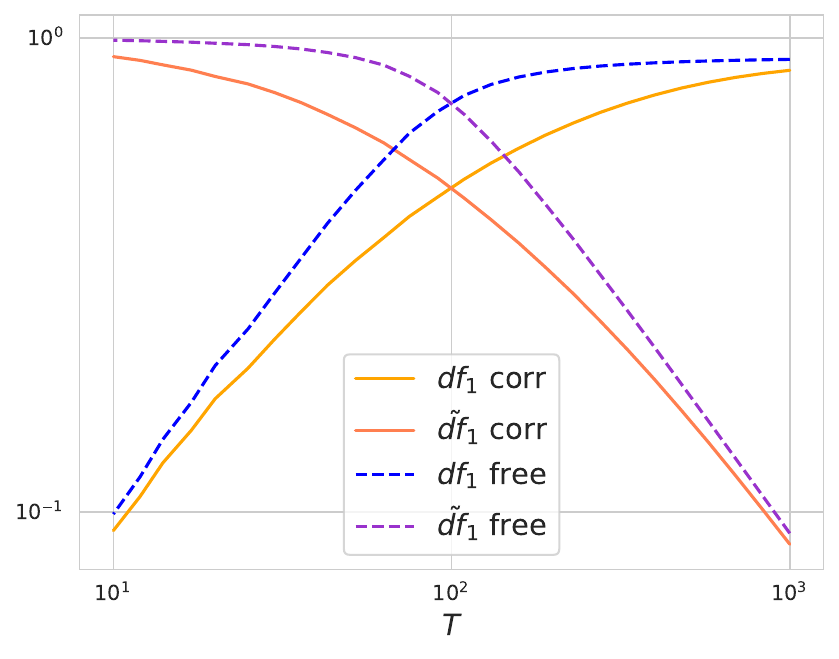}
    }
    \subfigure[]{
    \includegraphics[width=0.31\linewidth]{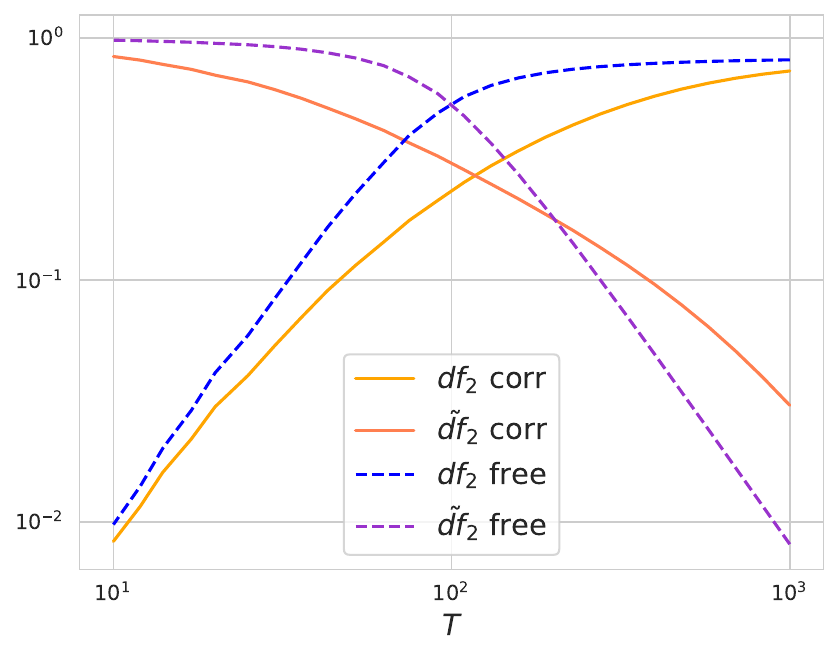}
    }
    \caption{Theory curves comparing $\kappa, \tilde \kappa, \df_1, \tilde \df_1, \df_2, \tilde \df_2$ for correlated data $\K \neq \mathbf I$ vs uncorrelated data $\K = \mathbf I$ at $\lambda = 10^{-1}$ for $N = 100$ on isotropic data. For the correlations, we choose exponential correlations with length $\xi = 10^{2}$. a) We see across the board that in the presence of explicit ridge, $\kappa$ grows while $\tilde \kappa$ shrinks under correlations b) We see that the $\df_1, \tilde \df_1$ are always decreased by the presence of correlations when the ridge is present. c) We see a similar behavior for $\df_2$. For $\tilde \df_2$, we also see a decrease in the neighborhood of the double descent peak $T = N$. Still, at first order near $T = N$, we see agreement of $\tilde \df_2$ between the correlated and uncorrelated settings, as predicted by our theory.}
    \label{fig:ridge_bounds}
\end{figure}

If $\lambda$ is finite, we have by Proposition \ref{prop:kappa1} that by \ref{prop:kappa2} that $\kappa_c > \kappa_u$ and $\tilde \kappa_c < \tilde \kappa_u$. Specifically, let $\Sh_{\K}$ be the empirical covariance of the data when the data points have correlations $\K$. We can write the following equivalence:
\begin{equation}
    \Sh_{\K} (\Sh_{\K} + \lambda)^{-1} \simeq \Sh_{\mathbf I} (\Sh_{\mathbf I} + \lambda S_{\K} (q \df_1) )^{-1}
\end{equation}
Thus adding correlations amounts to increasing the ridge. Because $\S_{\K}$ is bounded from above from Proposition \ref{prop:S_bound}, in the ridgeless limit the $\lambda S_{\K}$ will remain zero. 

When there is explicit ridge, we have that $\df_1, \tilde \df_1, \df_2$ will shrink from their uncorrelated values. Because to linear order in $\kappa, \tilde \kappa$, we have $\tilde \df_2 = (\tilde \df_1)^2$ at $q = 1$ and because correlations cause $\tilde \df_1^2$ to decrease at first order in $\kappa$ at this point, we get that correlations cause  $\tilde \df_2$ to decrease in a neighborhood of $q=1$.

 As a result of this, at first order in $\kappa, \tilde \kappa$, we can write $\gamma = q \df_2(\kappa)$. Near $q=1$, this will also shrink in the correlated case relative to the uncorrelated case. Thus, the double descent effect will be further reduced. We illustrate these observations in Figure \ref{fig:ridge_bounds}.


\subsection{Mismatched correlations}

Now we consider the case of mismatched correlations. We consider the fully general setting in which $\S' \neq \S$ and $\K' \neq \K$. In the underparameterized regime, we have $\df_{\S\S'}^{2} \to \frac{1}{N} \Tr(\S^{-1} \S')$, so $\gamma_{\S\S'} \to q^{-1} \tilde{\df}_{2} \frac{1}{N} \Tr(\S^{-1} \S')$. Similarly, $\gamma_{\S\S'\K\K'} \to q^{-1} \df_{\K\K'}^{2}  \frac{1}{N} \Tr(\S^{-1} \S')$. In the overparameterized regime, we similarly have $\df_{\K\K'}^{2} \to \frac{1}{T} \Tr(\K^{-1} \K')$, which leads to $\gamma_{\S\S'} \to q \df_{\S\S'}^{2}$ and $\gamma_{\S\S'\K\K'} \to q \df_{\S\S'}^{2} \frac{1}{T} \Tr(\K^{-1} \K')$. Combining these results, we find that
\begin{align}
    \lim_{\lambda \downarrow 0} R_{g} \simeq 
    \begin{dcases}
        \frac{q^{-1} \df_{\K\K'}^{2} \frac{1}{N} \Tr(\S^{-1} \S')}{1-q^{-1} \tilde{\df}_{2}} \sigma_{\epsilon}^{2} & q<1,
        \\
        \kappa^2 \bar \w (\S+\kappa)^{-1} \S' (\S+\kappa)^{-1} \bar \w +  \kappa^2  \frac{q \df_{\S\S'}^{2}}{1 - q \df_{2}}  \bar \w \S (\S+\kappa)^{-2} \bar \w 
        \\\quad + \frac{q \df_{\S\S'}^{2} \frac{1}{T} \Tr(\K^{-1} \K')}{1-q \df_{2}} \sigma_\epsilon^2 & q>1
    \end{dcases}
\end{align}
The effect of mismatch is easiest to understand in the overparameterized regime, where the factor $\frac{1}{T} \Tr(\K^{-1} \K')$ multiplies the same expression for the noise term that appears for uncorrelated datapoints. In the special case $\K' = \mathbf{I}_{T}$, we can use Jensen's inequality to bound
\begin{align}
    \frac{1}{T} \Tr(\K^{-1} ) \geq \frac{1}{\Tr(\K)/T} = 1
\end{align}
with equality iff $\K = \mathbf{I}_{T}$, hence in this special case mismatch generically increases the error. 

\subsection{Further comments on the effect of mismatched correlations}\label{app:mismatch}

Here, we briefly comment further on the case in which the noise is uncorrelated ($\K' = \mathbf{I}_{T}$). In greatest generality, we seek to bound 
\begin{align}
    \df_{\K,\K'=\mathbf{I}_{T}}^{2}(\tilde{\kappa}) = \frac{1}{T} \Tr[ \K (\K + \tilde{\kappa})^{-2}]
\end{align}
in terms of 
\begin{align}
    \tilde{\df}_{2}(\tilde{\kappa}) = \frac{1}{T} \Tr[\K^2 (\K + \tilde{\kappa})^{-2}] 
\end{align}

We prove a simple upper bound, which follows from a negative association argument: 
\begin{proposition}
For any $\tilde{\kappa} \geq 0$ and $\K$ invertible, we have
\begin{align}
    \df_{\K,\K'=\mathbf{I}_{T}}^{2}(\tilde{\kappa}) \leq \left(\frac{1}{T} \Tr(\K^{-1}) \right) \tilde{\df}_{2}(\tilde{\kappa}), 
\end{align}
with equality when $\tilde{\kappa} = 0$. 
\end{proposition}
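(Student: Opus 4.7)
The plan is to diagonalize $\K$ via its spectral decomposition and reduce the inequality to a classical negative-association statement. Writing the eigenvalues of $\K$ as $\rho_1, \ldots, \rho_T > 0$, both traces become empirical averages over the spectrum:
\begin{equation}
    \df_{\K,\K'=\mathbf{I}_T}^2(\tilde\kappa) = \frac{1}{T}\sum_{i=1}^T \frac{\rho_i}{(\rho_i+\tilde\kappa)^2}, \qquad \tilde\df_2(\tilde\kappa) = \frac{1}{T}\sum_{i=1}^T \frac{\rho_i^2}{(\rho_i+\tilde\kappa)^2},
\end{equation}
while $\frac{1}{T}\Tr(\K^{-1}) = \frac{1}{T}\sum_i 1/\rho_i$. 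The key algebraic observation is the factorization $\rho/(\rho+\tilde\kappa)^2 = (1/\rho)\cdot\rho^2/(\rho+\tilde\kappa)^2$, which lets us rewrite the claim as
\begin{equation}
    \frac{1}{T}\sum_i f(\rho_i)\, g(\rho_i) \;\leq\; \Bigl(\tfrac{1}{T}\sum_i f(\rho_i)\Bigr) \Bigl(\tfrac{1}{T}\sum_i g(\rho_i)\Bigr), \qquad f(\rho) := \tfrac{1}{\rho},\ \ g(\rho) := \tfrac{\rho^2}{(\rho+\tilde\kappa)^2}.
\end{equation}

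Next I would verify monotonicity of $f$ and $g$ in opposite directions. $f$ is strictly decreasing on $(0,\infty)$ by inspection. For $g$, a short derivative calculation yields $g'(\rho) = 2\rho\tilde\kappa/(\rho+\tilde\kappa)^3 \geq 0$, with strict inequality whenever $\tilde\kappa>0$, so $g$ is non-decreasing. Hence, after sorting $\rho_1 \leq \cdots \leq \rho_T$, the sequence $(f(\rho_i))_i$ is non-increasing while $(g(\rho_i))_i$ is non-decreasing: the two sequences are oppositely ordered. Applying the standard Chebyshev sum inequality to oppositely ordered sequences (equivalently, $\mathrm{Cov}_{\mathrm{unif}}(f(\rho),g(\rho)) \leq 0$ under the uniform measure on the spectrum) immediately yields the claimed bound.

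For the equality at $\tilde\kappa = 0$, I would simply substitute: $g(\rho) \equiv 1$, so both sides collapse to $\frac{1}{T}\sum_i 1/\rho_i = \frac{1}{T}\Tr(\K^{-1})$. There is no serious obstacle: the only subtlety is making sure the hypotheses of Chebyshev's sum inequality (two sequences indexed by the same ordering, with opposite monotonicities) are cleanly in force, which the derivative computation for $g$ guarantees. One could equivalently phrase the argument as an FKG-type / rearrangement inequality on the empirical spectral measure of $\K$, but the Chebyshev formulation is the most direct and requires no additional machinery.
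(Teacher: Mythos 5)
Your proposal is correct and follows essentially the same route as the paper: the paper also reduces to the spectral measure of $\K$, pairs the increasing function $\rho \mapsto \rho^2/(\rho+\tilde\kappa)^2$ with the decreasing function $\rho \mapsto 1/\rho$, and invokes the same negative-association (Chebyshev) inequality, just phrased via symmetrization over two independent eigenvalue draws rather than sorted sequences. The equality check at $\tilde\kappa = 0$ matches as well.
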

\begin{proof}
If $\tilde{\kappa} = 0$, the claim obviously holds with equality so long as $\K$ is invertible. Then, for any fixed $\tilde{\kappa} > 0$, define 
\begin{align}
    f(\rho) = \frac{\rho^2}{(\rho + \tilde{\kappa})^2}
\end{align}
and
\begin{align}
    g(\rho) = \frac{1}{\rho} ,
\end{align}
such that
\begin{align}
    \df_{\K,\K'=\mathbf{I}_{T}}^{2}(\tilde{\kappa}) = \mathbb{E}[f(\rho) g(\rho)]
\end{align}
and
\begin{align}
    \tilde{\df}_{2}(\tilde{\kappa}) = \mathbb{E}[f(\rho)]
\end{align}
where expectation is taken with respect to the distribution of eigenvalues of $\K$. Observe that $f(\rho)$ is for any $\tilde{\kappa} > 0$ a monotone increasing function on $(0,\infty)$, while $g(\rho)$ is a monotone decreasing function on $(0,\infty)$. Then, for any $\rho_{1},\rho_{2} \in (0,\infty)$, we have
\begin{align}
    [f(\rho_{1})-f(\rho_{2})] [g(\rho_{1}) - g(\rho_{2})] \leq 0,
\end{align}
so upon taking expectations for $\rho_{1},\rho_2$ independently drawn from the eigenvalue distribution of $\K$ we have
\begin{align}
    0 \geq \mathbb{E} \bigg[ [f(\rho_{1})-f(\rho_{2})] [g(\rho_{1}) - g(\rho_{2})]  \bigg]
    = 2 \mathbb{E}[f(\rho) g(\rho)] - 2 \mathbb{E}[f(\rho)] \mathbb{E}[g(\rho)]
\end{align}
hence
\begin{align}
    \mathbb{E}[f(\rho) g(\rho)] \leq \mathbb{E}[f(\rho)] \mathbb{E}[g(\rho)].
\end{align}
Using the definitions of $f$ and $g$, this concludes the proof. 
\end{proof}

\section{Asymptotics of previously-proposed extensions of the GCV}\label{app:previous_estimators}

Here, we compute the high-dimensional asymptotics of previously-proposed extensions to the GCV in the presence of correlations. We write each estimator in the form 
\begin{align}
    \hat{E}_{\mathrm{predicted}} = G \hat R_{in}
\end{align}

Moreover, we introduce the notation 
\begin{align}
    \H = (\Kh + \lambda)^{-1} \Kh 
\end{align}
for the smoothing matrix, in terms of which the predictions on the training set are given as 
\begin{align}
    \hat{\y} = \H\y ,
\end{align}
as these estimators make use of this matrix. For comparison, we recall from the main text that the asymptotically precise CorrGCV estimator is
\begin{align}
    G_{\mathrm{CorrGCV}} = S(\df_{1}) \frac{\tilde{\df}_{1}}{\tilde{\df}_{1} - \tilde{\df}_{2}} 
\end{align}

\subsection{The GCV}

The standard GCV can be written as 
\begin{align}
    G_{\mathrm{GCV}} = \frac{1}{[1-\frac{1}{T} \Tr(\H)]^2},
\end{align}
hence we have immediately that 
\begin{align}
    G_{\mathrm{GCV}} \simeq \frac{1}{[1-\tilde{\df}_{1}(\tilde{\kappa})]^2}.
\end{align}
We denote this estimator by $\mathrm{GCV}_1$ in Figure \ref{fig:CorrGCV_circuit} and in all plots.

\subsection[Altman]{The Altman estimator}

Starting from the GCV, \citet{altman1990smoothing} and later \citet{opsomer2001nonparametric} consider regression with correlated errors
\begin{align}
    \cov(\e) = \sigma_{\epsilon}^2 \K_{\epsilon}
\end{align}
and consider the estimator
\begin{align}
    G_{\mathrm{Altman}} = \frac{1}{[1-\frac{1}{T} \Tr(\H \K_{\epsilon})]^2}
\end{align}
Asymptotically, we have immediately that
\begin{align}
    G_{\mathrm{Altman}} \simeq \frac{1}{\{1-\frac{1}{T} \Tr[\K_{\epsilon} \K  (\K + \tilde{\kappa})^{-1} ]\}^2}
\end{align}
Putting $\K_{\epsilon} = \K$, we can write
\begin{align}
    \frac{1}{T} \Tr[\K^2  (\K + \tilde{\kappa})^{-1} ] = \frac{1}{T} \Tr[\K ( \mathbf{I} - \tilde{\kappa}  (\K + \tilde{\kappa})^{-1} ) ] = \frac{1}{T} \Tr(\K) - \tilde{\kappa} \tilde{\df}_{1},
\end{align}
hence, as $\frac{1}{T} \Tr(\K) = 1$ by normalization, we have
\begin{align}
    G_{\mathrm{Altman}} \simeq \frac{1}{\tilde{\kappa}^2 \tilde{\df}_{1}^{2}}
\end{align}
Using the duality relation
\begin{align}
    \frac{\kappa\tilde{\kappa}}{\lambda} = \frac{1}{\tilde{\df}_{1}},
\end{align}
this reduces to
\begin{align}
    G_{\mathrm{Altman}} \simeq \frac{\kappa^2}{\lambda^2} = S^2.
\end{align}
We denote this estimator by $\mathrm{GCV}_2$ in Figure \ref{fig:CorrGCV_circuit} and in all plots.

\subsection[The GCCV estimator of Carmack]{The GCCV estimator of Carmack \emph{et al.}}

\begin{figure}[t]
    \centering
    \subfigure[Unstructured Covariates]{
    \includegraphics[width=0.36\linewidth]{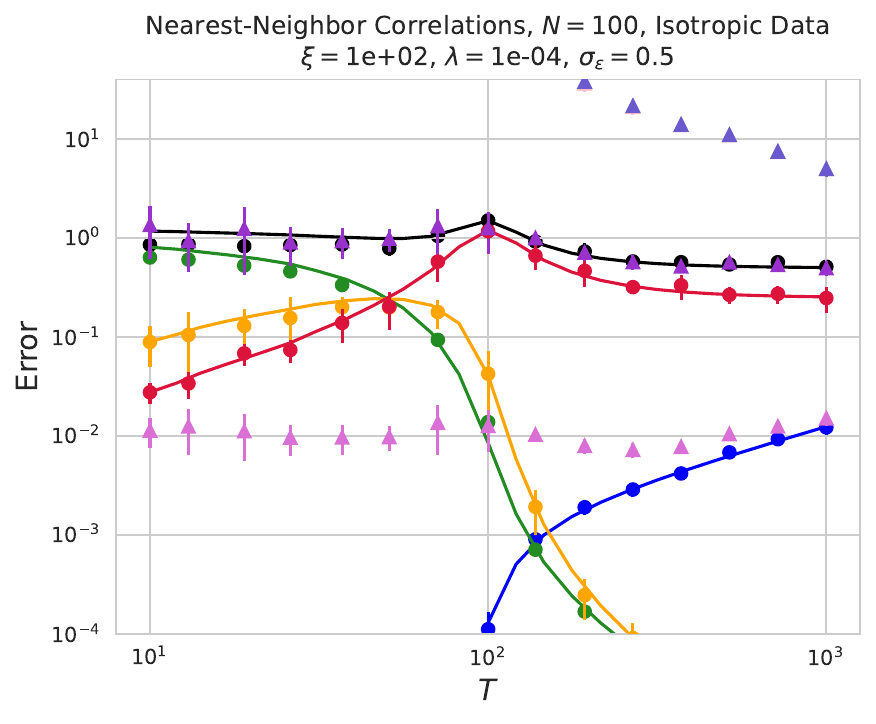}
    }
    \subfigure[Structured Covariates]{
    \includegraphics[width=0.45\linewidth]{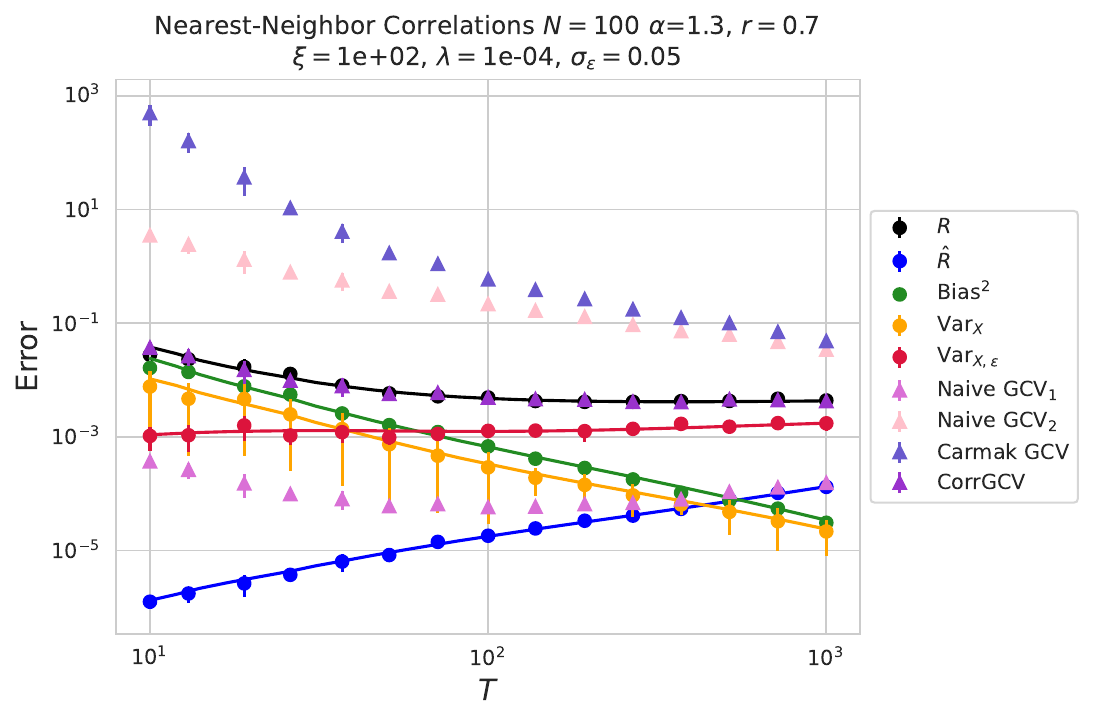}
    }
    \caption{Plot of the \citet{carmack2012gccv} estimator relative to other GCV estimators defined earlier. We note that similar to the naive GCVs $1$ and $2$, the Carmack estimator fails to correctly predict $R$. a) Unstructured Covariates. b) Structured Covariates with source and capacity exponents as labelled.}
    \label{fig:carmack}
\end{figure}

In our notation, \citet{carmack2012gccv} again assume that the label noise has 
\begin{align}
    \cov(\e) = \sigma_{\epsilon}^2 \K_{\epsilon}
\end{align}
and consider the estimator
\begin{align}
    G_{\mathrm{Carmack}} = \frac{1}{[1 - \frac{1}{T} \Tr(2 \H \K_{\epsilon} - \H \K_{\epsilon} \H^{\top} )]^2}.
\end{align}
The first term is identical to \citet{altman1990smoothing}'s estimator, while the second is an additional correction. We have
\begin{align}
     \frac{1}{T} \Tr(\H \K_{\epsilon} ) \simeq \frac{1}{T} \Tr[\K_{\epsilon} \K (\K + \tilde{\kappa})^{-1}  ]
\end{align}
and
\begin{align}
    \frac{1}{T} \Tr( \H \K_{\epsilon} \H^{\top} )
    &= \frac{1}{T} \Tr[\K_{\epsilon} \Kh (\Kh+\lambda)^{-1} ] - \lambda \frac{1}{T} \Tr[\K_{\epsilon} \Kh (\Kh + \lambda)^{-2}]
    \\
    &=  \frac{1}{T} \Tr[\K_{\epsilon} \Kh (\Kh+\lambda)^{-1} ] + \lambda \partial_{\lambda} \frac{1}{T} \Tr[\K_{\epsilon} \Kh (\Kh + \lambda)^{-1}]
    \\
    &\simeq \frac{1}{T} \Tr[\K_{\epsilon}\K (\K + \tilde{\kappa})^{-1}] - \lambda \frac{\partial \tilde{\kappa}}{\partial \lambda} \frac{1}{T} \Tr[\K_{\epsilon}\K (\K + \tilde{\kappa})^{-2}], 
\end{align}
so 
\begin{align}
     \frac{1}{T} \Tr(2 \H \K_{\epsilon} - \H \K_{\epsilon} \H^{\top} )
     \simeq \frac{1}{T} \Tr[\K_{\epsilon} \K (\K + \tilde{\kappa})^{-1}  ] + \lambda \frac{\partial \tilde{\kappa}}{\partial \lambda} \frac{1}{T} \Tr[\K_{\epsilon}\K (\K + \tilde{\kappa})^{-2}] .
\end{align}
Setting $\K_{\epsilon} = \K$, we have
\begin{align}
     1 - \frac{1}{T} \Tr(2 \H \K - \H \K \H^{\top} )
     \simeq  \tilde{\kappa} \tilde{\df}_{1} - \lambda \frac{\partial \tilde{\kappa}}{\partial \lambda} \tilde{\df}_{2}
\end{align}
as $\frac{1}{T} \Tr(\K) = 1$. Now, using \eqref{eq:dk2_dl} and the defining equation $\lambda \tilde{S} = \tilde{\kappa}$, we have
\begin{align}
     1 - \frac{1}{T} \Tr(2 \H \K - \H \K \H^{\top} )
     \simeq \tilde{\kappa} \tilde{\df}_{1} - \tilde{\kappa} \frac{1 - \frac{\df_2}{\df_1}}{1 - \gamma} \tilde \df_2
\end{align}
with 
\begin{align}
    \gamma = \frac{\df_{2}}{\df_{1}} \frac{\tilde{\df}_{2}}{\tilde{\df}_{1}} .
\end{align}
Then, we have
\begin{align}
    G_{\mathrm{Carmack}} \simeq \left[ \tilde{\kappa} \tilde{\df}_{1} - \tilde{\kappa} \frac{1 - \frac{\df_2}{\df_1}}{1 - \gamma} \tilde \df_2 \right]^{-2} .
\end{align}
We now write
\begin{align}
    \frac{\df_{2}}{\df_{1}} = \gamma \frac{\tilde{\df}_{1}}{\tilde{\df}_{2}}
\end{align}
and use the duality relation to expand $\tilde{\kappa} \tilde{\df}_{1} = \lambda/\kappa = 1/S$, which upon combining terms gives
\begin{align}
    G_{\mathrm{Carmack}} \simeq \left[ \frac{1}{S} \frac{1}{1-\gamma} \frac{\tilde \df_1 - \tilde \df_2}{\tilde \df_1} \right]^{-2}.
\end{align}
We recognize the term in brackets as the noise term when $\K = \K'$ in \eqref{eq:noise_term_matched}. In general this implies that
\begin{equation}
    G_{\mathrm{Carmack}} = (1-\gamma)^2 \underbrace{ S^2 \left(\frac{\tilde \df_1}{\tilde \df_1 - \tilde \df_2} \right)^2 }_{\text{CorrGCV}^2} = S^2 \left(\frac{1 - \frac{\df_2 \tilde \df_2}{\df_1 \tilde \df_1}}{1 - \frac{\tilde \df_2}{\tilde \df_1}} \right)^2 .
\end{equation}
When $\K = \mathbf{I}_{T}$, $\tilde{\df}_{1}/(\tilde{\df}_{1}-\tilde{\df}_{2}) = 1/(1-1/(\tilde{\kappa}+1))$

We plot an example of this estimator for correlated data in Figure \ref{fig:carmack}; note the substantial deviation. This is to be expected, as there was no claim in \citet{carmack2012gccv} that this estimator should work for $\X$ correlations.

\section{Weighted risks and the MMSE estimator}\label{app:weighted_risk}

In this Appendix, we briefly comment on related risks and the MMSE estimator. As discussed in the main text, one might consider a weighted loss 
\begin{align}
    L_{\M}(\w) = \frac{1}{T} (\X \w - \y)^{\top} \M (\X \w - \y) + \lambda \Vert \w \Vert^{2} 
\end{align}
for $\M$ a general positive-definite symmetric weighting matrix. Under our Gaussian statistical assumptions, this is equivalent to using an unweighted loss with $\M \gets \mathbf{I}_{T}$, $\K \gets \M^{1/2} \K \M^{1/2}$, and $\K' \gets \M^{1/2} \K' \M^{1/2}$. 

We now observe that the Bayesian MMSE estimator is equivalent to minimizing a weighted loss with the particular choice $\M = (\K')^{-1}$. Under our statistical assumptions, the MMSE estimator is simply given by the posterior mean:
\begin{align}
    \hat{\w}_{\mathrm{MMSE}} = \int\, d\w\, \w\, p(\w\,|\,\X,\y) .
\end{align}
Using our Gaussian assumptions on the data and a Gaussian prior of variance $\rho^2$, the posterior is
\begin{align}
    p(\w\,|\,\X,\y) 
    &\propto p(\X,\y\,|\,\w) p(\w)
    \\
    &= p(\y\,|\,\X,\w) p(\X) p(\w)
    \\
    &\propto \exp\left(- \frac{1}{2\sigma_{\epsilon}^{2}} (\y-\X\w)^{\top} (\K')^{-1} (\y-\X\w) - \frac{1}{2} \Tr( \K^{-1} \X \S^{-1} \X^{\top}) - \frac{1}{2\rho^{2}}  \Vert \w\Vert^{2} \right) .
\end{align}
Discarding $\w$-independent terms and completing the square, this means that the posterior over $\w$ is Gaussian with mean
\begin{align}
    \frac{1}{T} \left(\frac{1}{T} \X^{\top} (\K')^{-1} \X + \lambda \mathbf{I} \right)^{-1} \X^{\top} (\K')^{-1} \y
\end{align}
and covariance
\begin{align}
    \frac{\sigma_{\epsilon^2}}{T} \left(\frac{1}{T} \X^{\top} (\K')^{-1} \X + \lambda \mathbf{I} \right)^{-1} ,
\end{align}
where we take $\lambda = \frac{\sigma_{\epsilon}^{2}}{T \rho^2} $. Therefore, the MMSE estimator corresponds to minimizing a weighted loss with 
\begin{align}
    \M = (\K')^{-1} .
\end{align}
If $\K' = \K$, this then corresponds to minimizing an unweighted loss for uncorrelated datapoints. 

However, we now observe that this procedure is only possible if one has omniscient knowledge of $\K'$, as reliably estimating $(\K')^{-1}$ from samples is challenging.

\section[S-transforms for certain Toeplitz covariance matrices]{$S$-transforms for certain Toeplitz covariance matrices}\label{app:S_explicit}

In this Appendix, we record formulas for the spectral statistics of a few tractable and practically-relevant classes of Toeplitz covariance matrices. We direct the interested reader to work by \citet{kuhn2012spectra} or \citet{basak2014limiting} for studies of the limiting spectral properties of \emph{empirical} autocovariance matrices. 

\subsection{Nearest-neighbor correlations}\label{sec:nn_S}

We begin with the simplest non-trivial example: nearest-neighbor correlations. In this case we give a self-contained analysis, which hints at some of the approaches that can be used for more general classes of Toeplitz matrices. Suppose that 
\begin{align}
    K_{ts} = \begin{cases} 1 & t = s \\ b/2 & t = s \pm 1 \\ 0 & \mathrm{otherwise} \end{cases}
\end{align}
is a symmetric tridiagonal Toeplitz matrix with diagonal elements $1$ and off-diagonal elements $b/2$. We have set the diagonal elements equal to 1 without loss of generality as this is equivalent to choosing an overall scale. Tridiagonal Toeplitz matrices are exactly diagonalized by Fourier modes \cite{noschese2013tridiagonal}, and their eigenvalues are known to be
\begin{align}
    \lambda_{t} = 1 + b \cos \frac{\pi t }{T+1}
\end{align}
where $t = 1, \ldots, T$. Clearly, for the matrix to be positive-definite we must have $|b|<1$; we will assume $b>0$ without loss of generality as the cosine term is symmetric. For a test function $\phi$, we therefore have
\begin{align}
    \frac{1}{T} \sum_{t=1}^{T} \phi(\lambda_{t}) 
    &= \frac{1}{T} \sum_{t=1}^{T} \phi\left(1 + b \cos \frac{\pi t}{T+1}\right) .
\end{align}
Recognizing this as a Riemann sum for an integral with respect to $x = t/(T+1)$, it is easy to see that 
\begin{align}
    \lim_{T \to \infty} \frac{1}{T} \sum_{t=1}^{T} \phi(\lambda_{t}) 
    &= \int_{0}^{1} dx\, \phi( 1 + b \cos \pi x ).
\end{align}
Putting $\lambda = 1 + b\cos(\pi x)$, we have $\lambda = 1+b$ when $x=0$ and $\lambda = 1 - b$ when $x = \pi$. In this range, we can invert the relationship to find that $x = \arccos[(\lambda-1)/b]/\pi$. Differentiating, we have $dx = - \frac{1}{\pi \sqrt{(\lambda - 1 + b)(1+b-\lambda)}} d\lambda$. Thus, the integral becomes
\begin{align}
    \int_{1-b}^{1+b} d\lambda\,  \frac{1}{\pi \sqrt{(\lambda - 1 + b)(1+b-\lambda)}}  \phi(\lambda) .
\end{align}
This shows that for any $b>0$ the limiting density of eigenvalues is
\begin{align}
     \frac{1}{\pi \sqrt{(\lambda - 1 + b)(1+b-\lambda)}}  \mathbf{1}_{\lambda \in [1-b,1+b]},
\end{align}
which is an arcsine distribution centered at 1 of width $b$. In other words, the limiting distribution is simply the pushforward of the uniform measure on $[0,1]$ by $x \mapsto 1 + b \cos(\pi x)$, much as the distribution at finite size is the pushforward of the uniform measure on $\{1/(T+1),\ldots,T/(T+1)\}$ by the same function. This is a (very) special case of more general limit theorems for the spectra of symmetric Toeplitz matrices \cite{boettcher2009toepliz}. 

Applying the result of this digression, we have
\begin{align}
    \df_{\K}^{1}(z) 
    &\to \int_{1-b}^{1+b} d\lambda\, \frac{1}{\pi \sqrt{(\lambda - 1 + b)(1+b-\lambda)}}  \frac{\lambda}{\lambda + z} 
    \\
    &= 1 - \frac{z}{\sqrt{(1+z)^2 - b^2}}
\end{align}
From this, we can obtain with a bit of algebra the corresponding $S$-transform
\begin{align}
    S_{K}(t) = \frac{\sqrt{1-b^2 t(2-t)} - (1-t)}{(1-b^2) t}.
\end{align}
We observe that the limit as $b \downarrow 0$ of this result gives 
\begin{align}
    \lim_{b \downarrow 0} \df_{\K}^{1}(z) &= \frac{1}{1+z}
    \\
    \lim_{b \downarrow 0} S_{\K}(t) &= 1,
\end{align}
which recover the expected results for the identity matrix. 

\subsection{Exponential correlations}

Now we consider the case of exponential correlations
\begin{align}
    K_{ts} = e^{-|t-s|/\xi}
\end{align}
for some correlation length $\xi$, which is treated in the textbook of \citet{potters2020first}. Though the eigenvectors of this matrix are not precisely Fourier modes at finite size, one can argue that the approximation error in the resolvent resulting from treating $\K$ as circulant becomes negligible, and from that determine the limit. In the end, one finds that
\begin{equation}
    S_{\K}(t) = \frac{b t + \sqrt{1 + (b^2 - 1) t^2}}{t+1}, \quad b= \coth 1/\xi.
\end{equation}
In this case, we should recover the identity matrix on taking $\xi \downarrow 0$, in which case we have $b \downarrow 1$. This yields $\lim_{b \downarrow 1} S_{\K}(t) = 1$, as expected.

\section{Further experiments}\label{app:further_expts}

\subsection{Exponential correlations}

\begin{figure}[ht]
    \centering

    \includegraphics[width=0.35\linewidth]{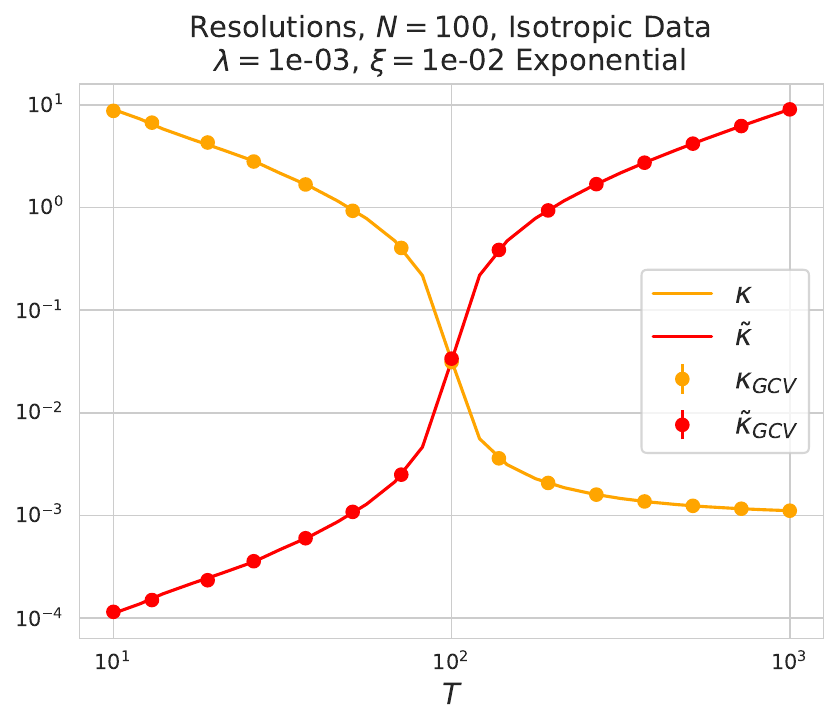}
    \hfill
    \includegraphics[width=0.35\linewidth]{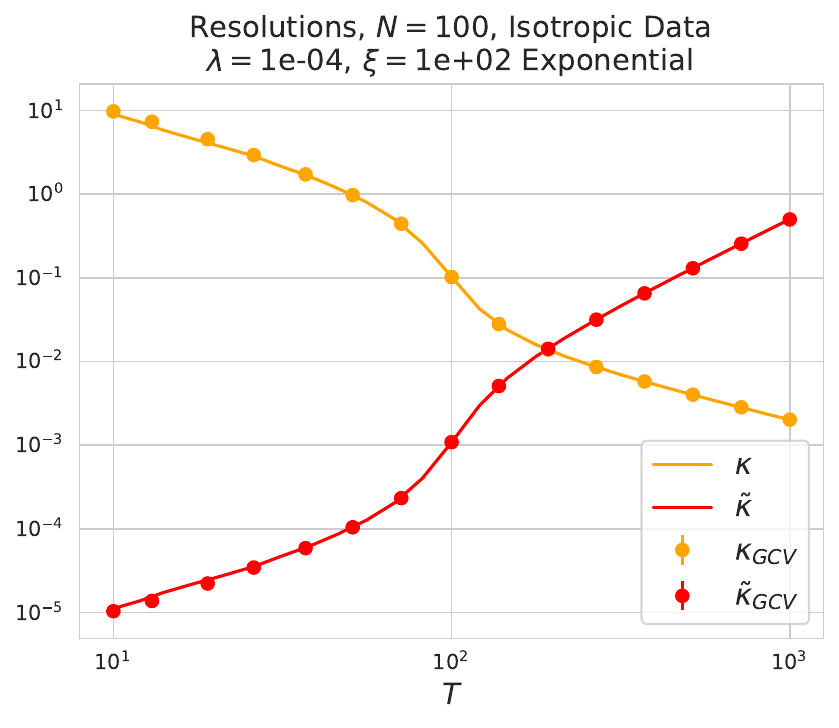}

    \includegraphics[width=0.35\linewidth]{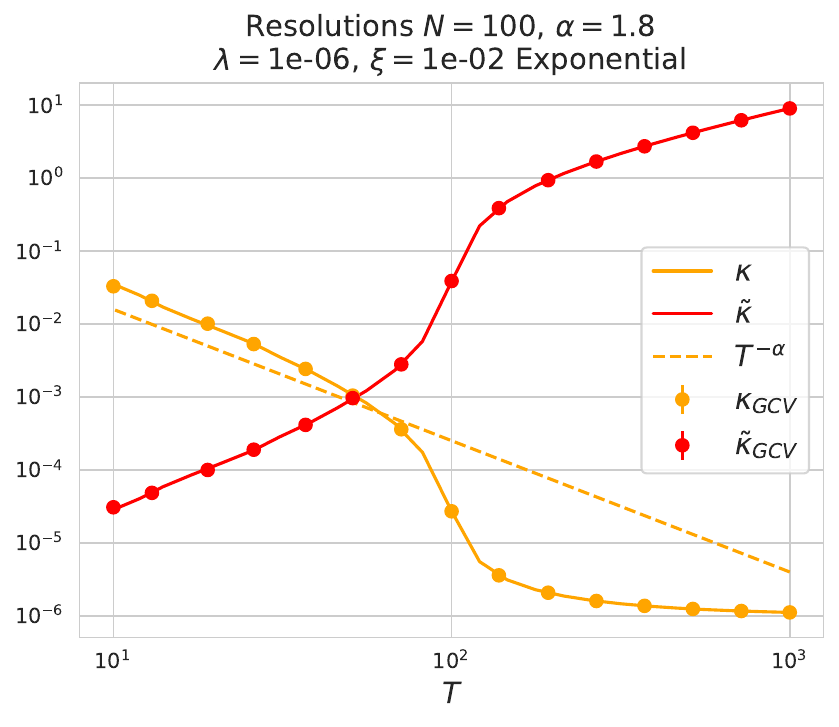}
    \hfill
    \includegraphics[width=0.35\linewidth]{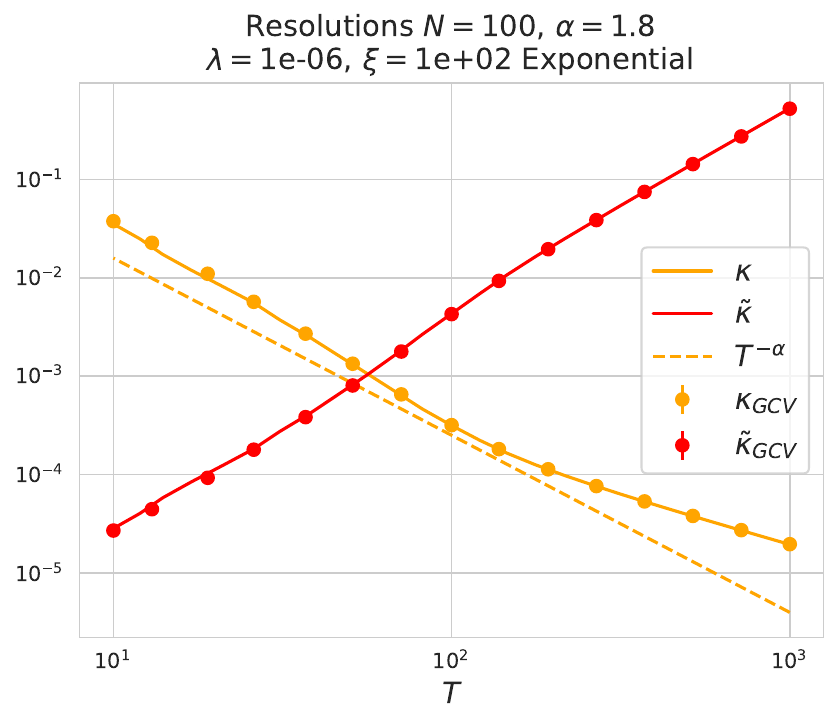}

    \caption{Plots of the resolutions $\kappa, \tilde \kappa$ under exponential correlations. Top: Isotropic Data. Bottom: Anisotropic power law data with the eigenvalues of $\S$ having power law decay $k^{-\alpha}$, $\alpha=1.8$. Overlaid in dashed is the expected scaling in the overparameterized regime. Left: $\xi = 10^{-2}$, essentially uncorrelated. Right: $\xi = 10^2$, strongly correlated. We fix $N=100$ and vary $T$. Overlaid in dashed is the scaling expected in the overparameterized regime when $\alpha > 1$.}
    \label{fig:res_exp}
\end{figure}

In this section we study the predictions of the theory for various relevant quantities in the case of both strong and weak exponential correlations. Plots of the error curves in this setting are numerous in the main text. We test this across both isotropic and structured data.

In Figure \ref{fig:res_exp}, we plot several different curves for $\kappa, \tilde \kappa$ as $T$ varies from small to large across differently structured datasets. We then plot the corresponding degrees of freedom in Figure \ref{fig:df_exp}. Finally, we verify the duality relation holds empirically in Figure \ref{fig:duality_exp}.

\begin{figure}[ht]
    \centering

    \includegraphics[width=0.35\linewidth]{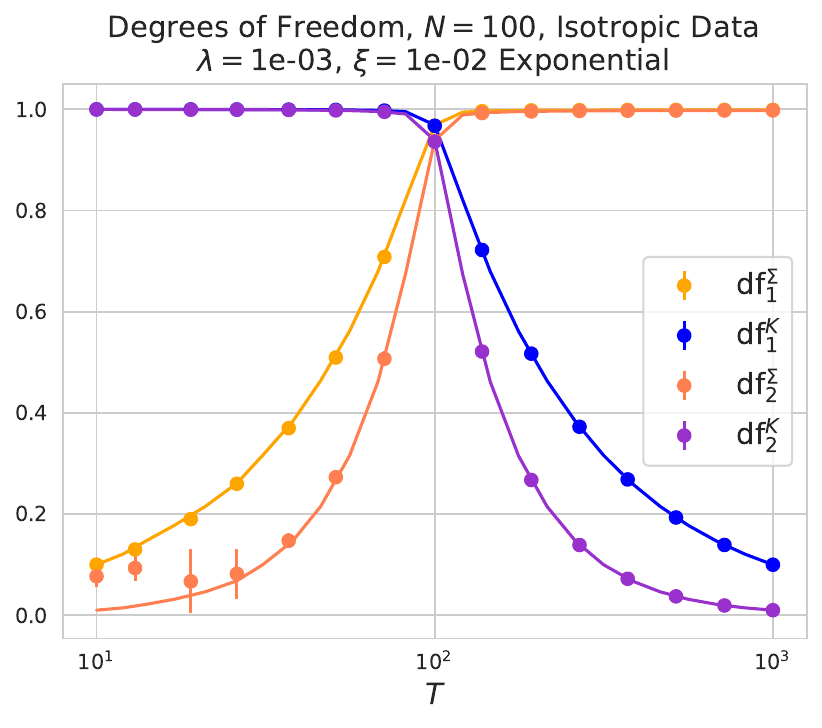}
    \hfill
    \includegraphics[width=0.35\linewidth]{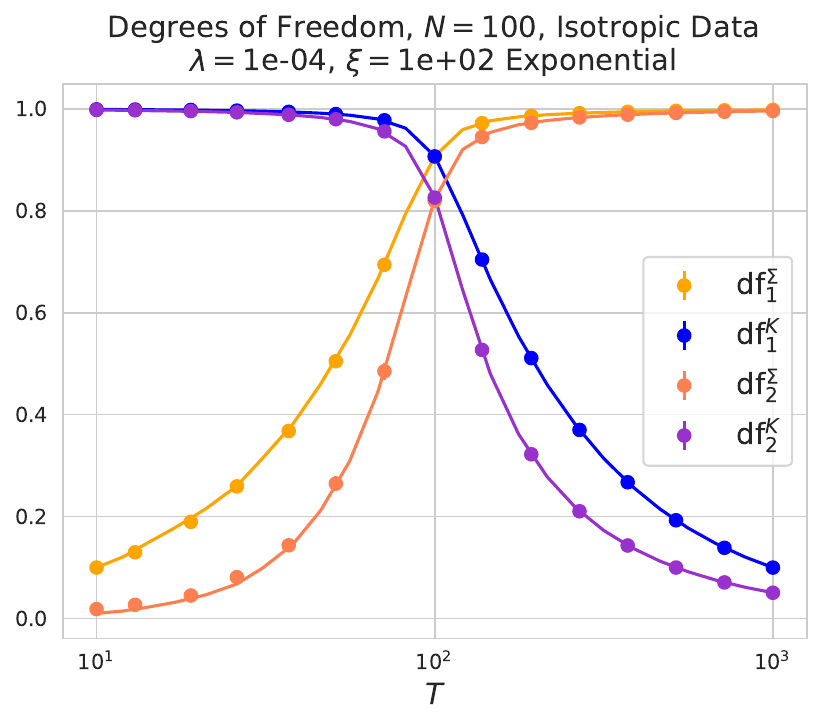}

    \includegraphics[width=0.35\linewidth]{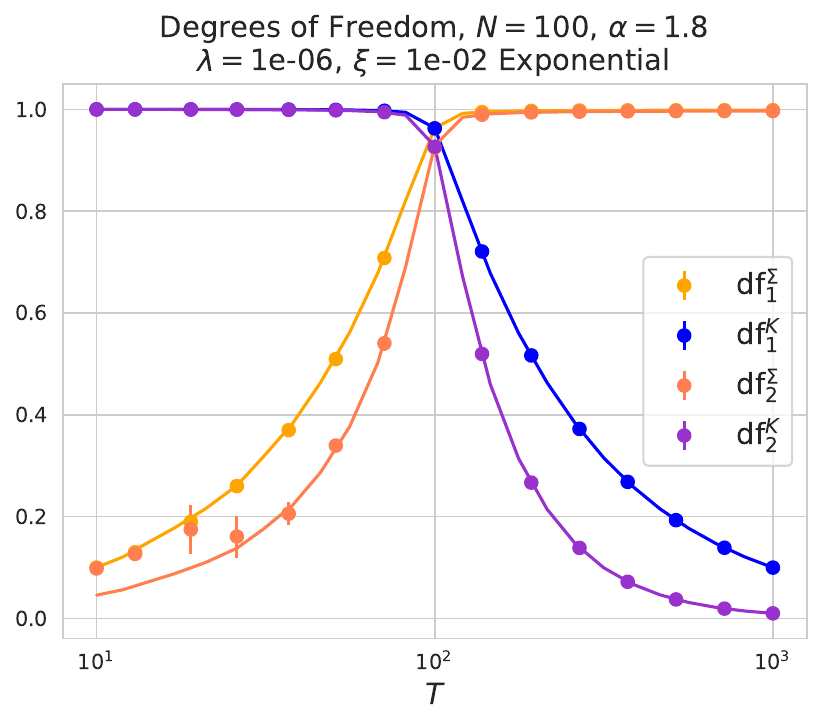}
    \hfill
    \includegraphics[width=0.35\linewidth]{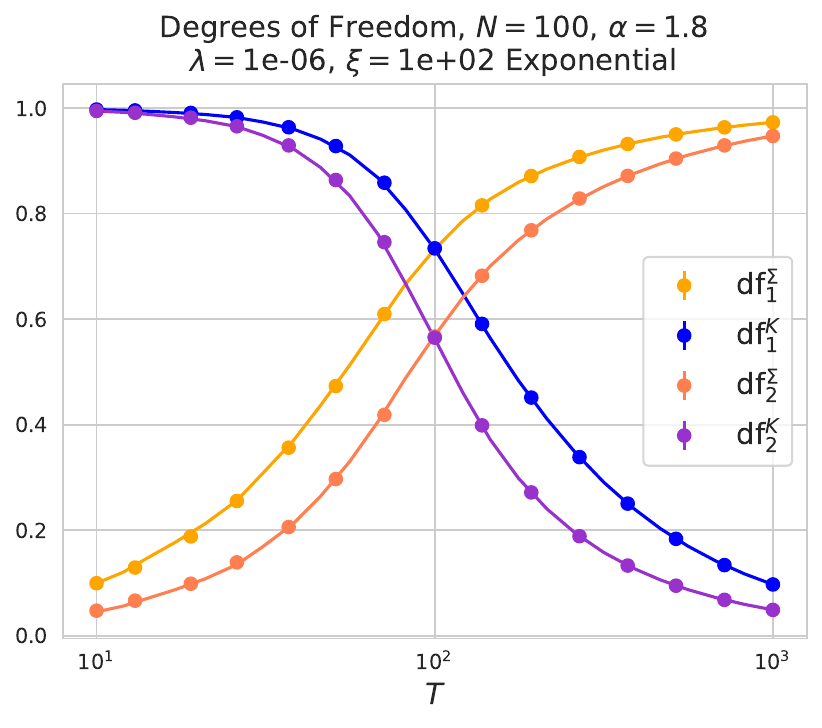}

    \caption{Plots of the degrees of freedom $\df_1, \tilde \df_1, \df_2, \tilde \df_2$ under exponential correlations as in Figure \ref{fig:res_exp}. Solid lines are theory, dots with error bars are empirics. The lack of substantial error bars stems from the fact that all quantities concentrate. We find that although $\df_2$ can be numerically sensitive, $\tilde \df_2$ remains numerically precise even in the presence of strong correlations.}
    \label{fig:df_exp}
\end{figure}

\begin{figure}[ht]
    \centering

    \includegraphics[width=0.35\linewidth]{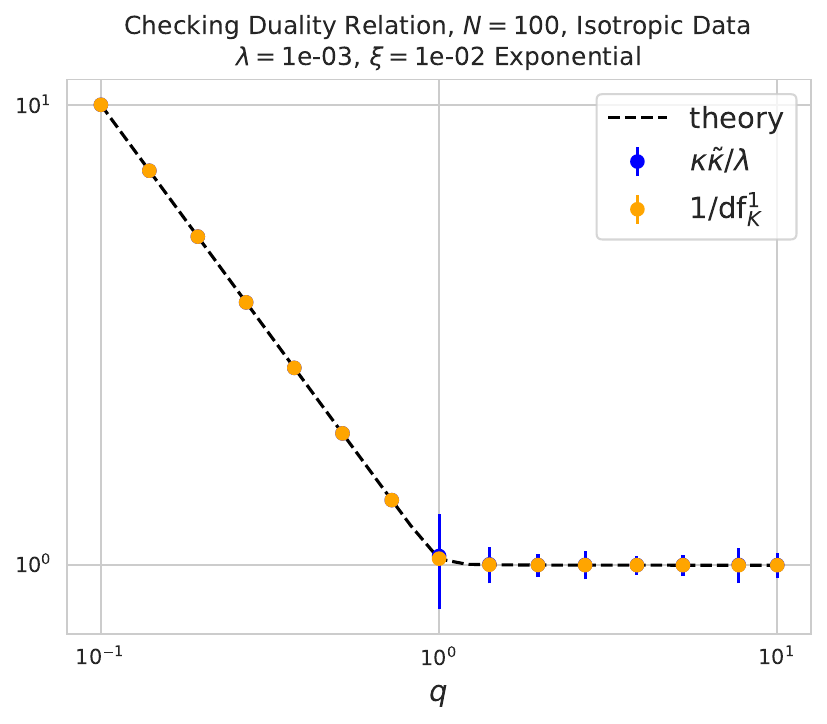}
    \hfill
    \includegraphics[width=0.35\linewidth]{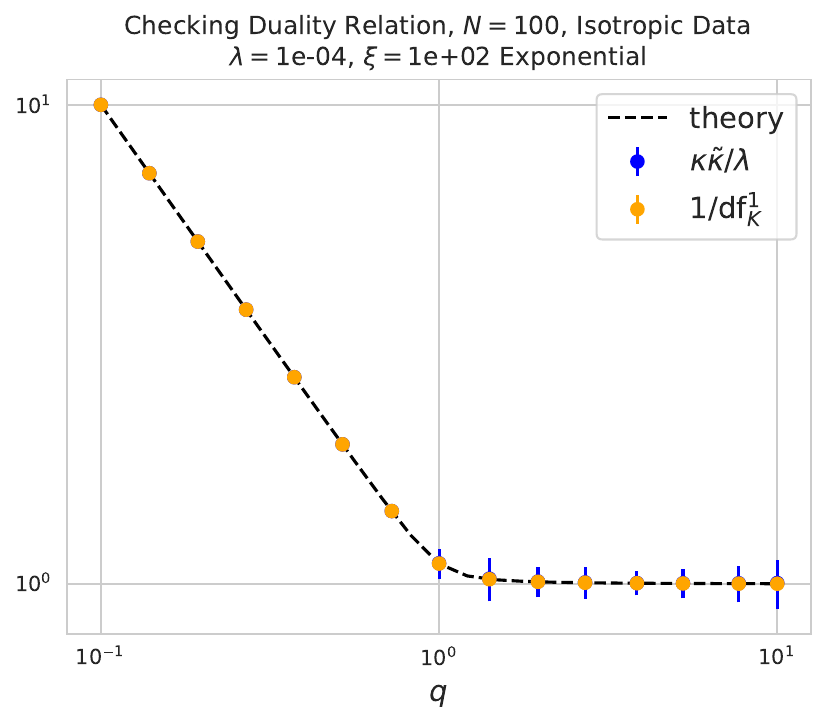}

    \includegraphics[width=0.35\linewidth]{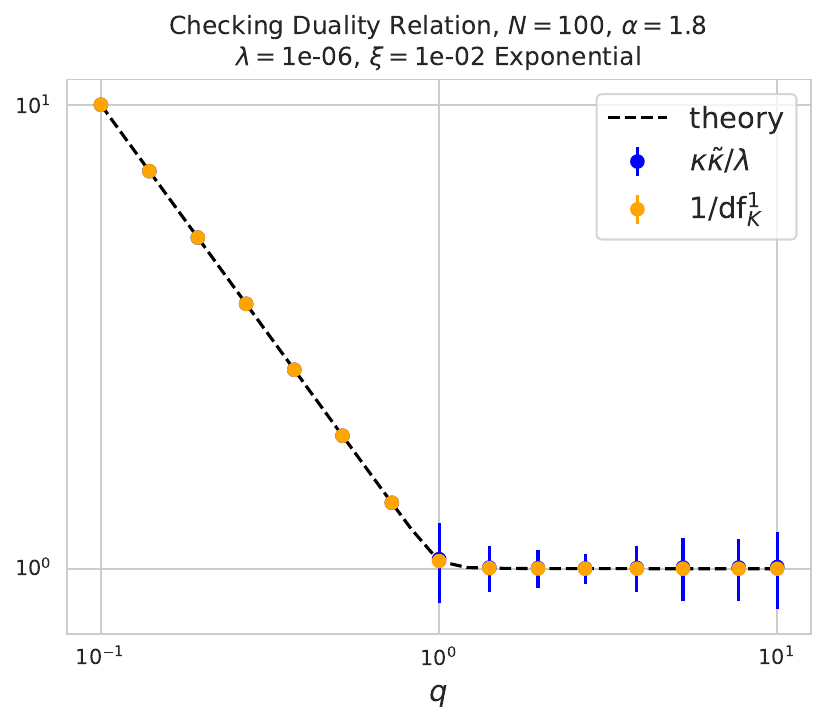}
    \hfill
    \includegraphics[width=0.35\linewidth]{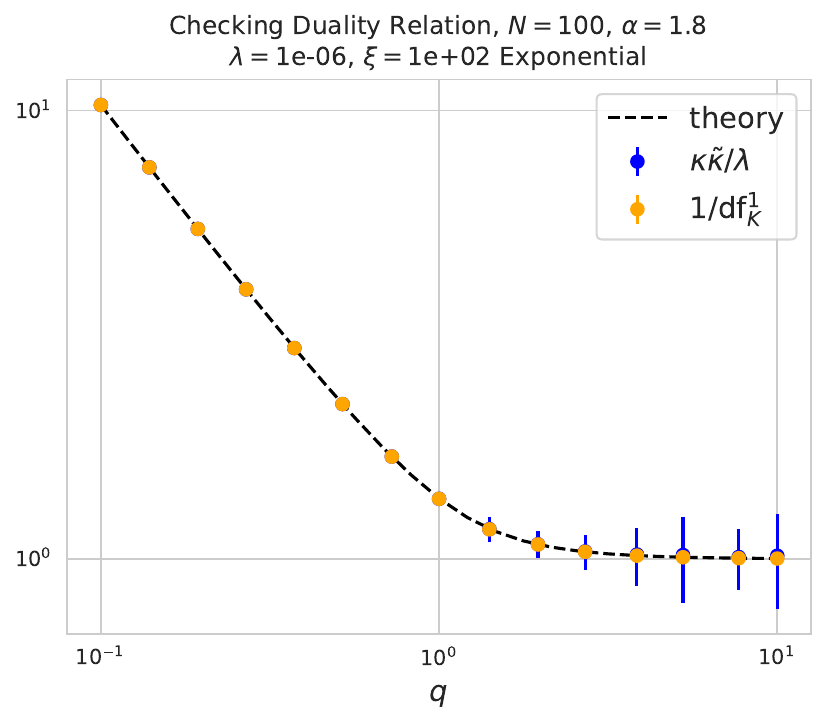}

    \caption{Plots verifying the duality relation $\frac{\kappa \tilde \kappa}{\lambda} = \frac{1}{\tilde \df_1}$ hold for the settings in Figures \ref{fig:res_exp} and \ref{fig:df_exp} under exponential correlations. Dashed black lines are theory, dots with error bars are empirics.  The lack of substantial error bars stems from the fact that all quantities concentrate.}
    \label{fig:duality_exp}
\end{figure}



\clearpage

\subsection{Nearest-neighbor correlations}

\begin{figure}[ht]
    \centering
    \includegraphics[width=0.48\linewidth]{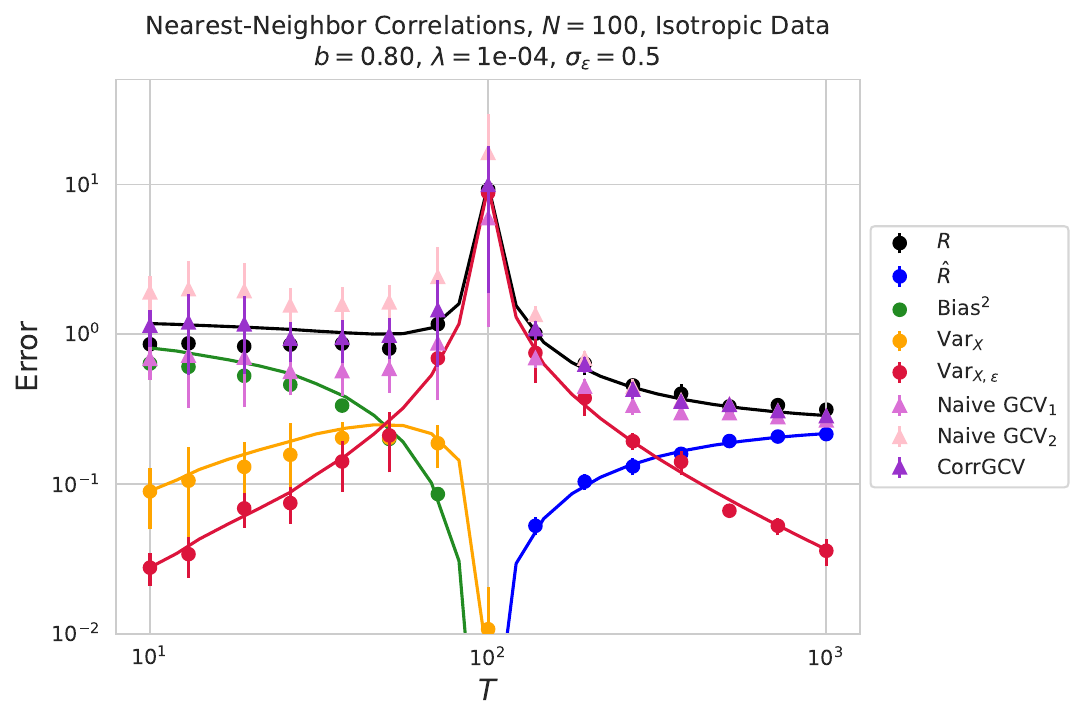}
    \hfill
    \includegraphics[width=0.48\linewidth]{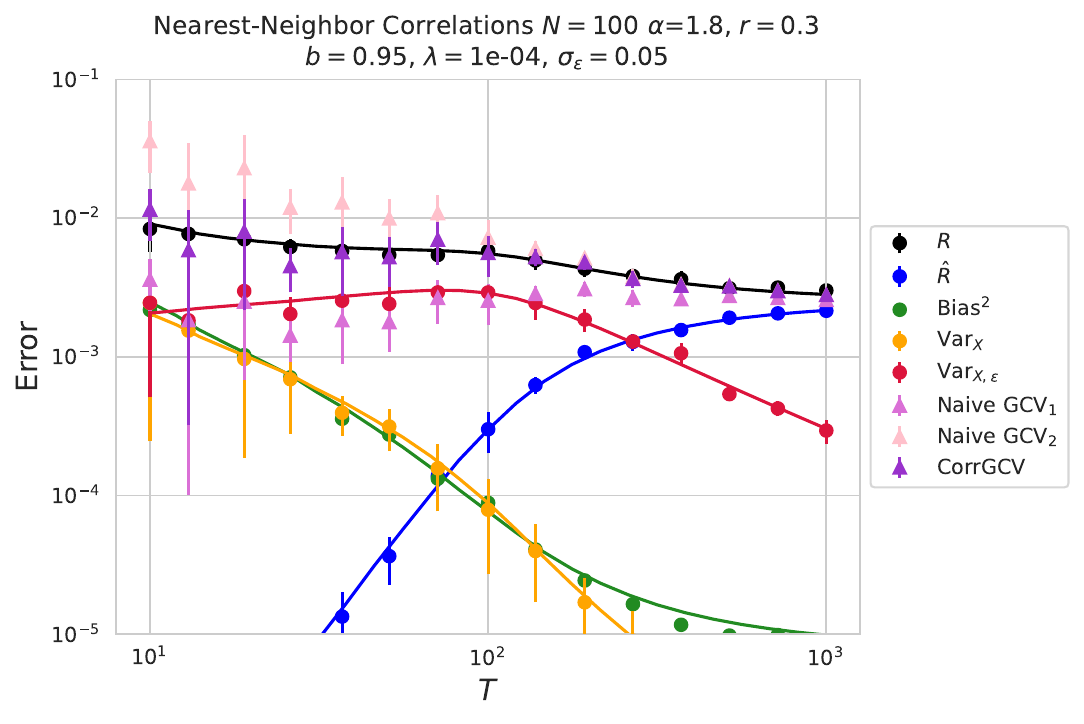}
    \caption{Risks, Sources of Variance, and Risk Estimators, similar to Figure 1, but with nearest neighbor correlations. a) Isotropic covariates. b) Structured covariates under source and capacity conditions.}
    \label{fig:nn_correlation}
\end{figure}

The $S$-transform for a correlation matrix $\K$ where only nearest neighbor data points are correlated can be straightforwardly obtained. We do so in Section \ref{sec:nn_S}. In general, the effect of correlations is quite weak unless the off-diagonal correlation element $b/2$ has $b$ close to $1$. In Figure \ref{fig:nn_correlation}, we plot the risk curves and sources of variance in both the  setting of isotropic and anisotropic data. In general, all estimators tend to agree better for the choices of $b$ listed compared to the strongly exponentially correlated data.  However, only the CorrGCV consistently correctly estimates the out-of-sample risk across $T$, $b$, $\lambda$, and $\sigma_{\epsilon}$. Especially at small value of $T$, we see that only the CorrGCV estimator is the most reliable.

We further plot the resolutions in Figure \ref{fig:res_nn} and degrees of freedom in Figure \ref{fig:df_nn} across two different values of $b=0.5, 0.95$ in both the isotropic and anisotropic case. 

\begin{figure}[ht]
    \centering
    \includegraphics[width=0.35\linewidth]{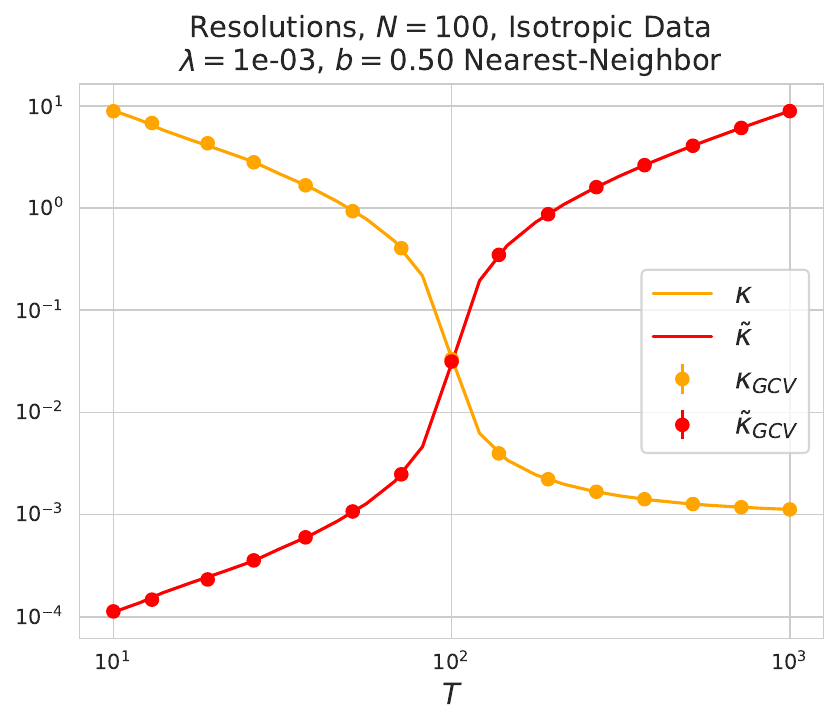}
    \hfill
    \includegraphics[width=0.35\linewidth]{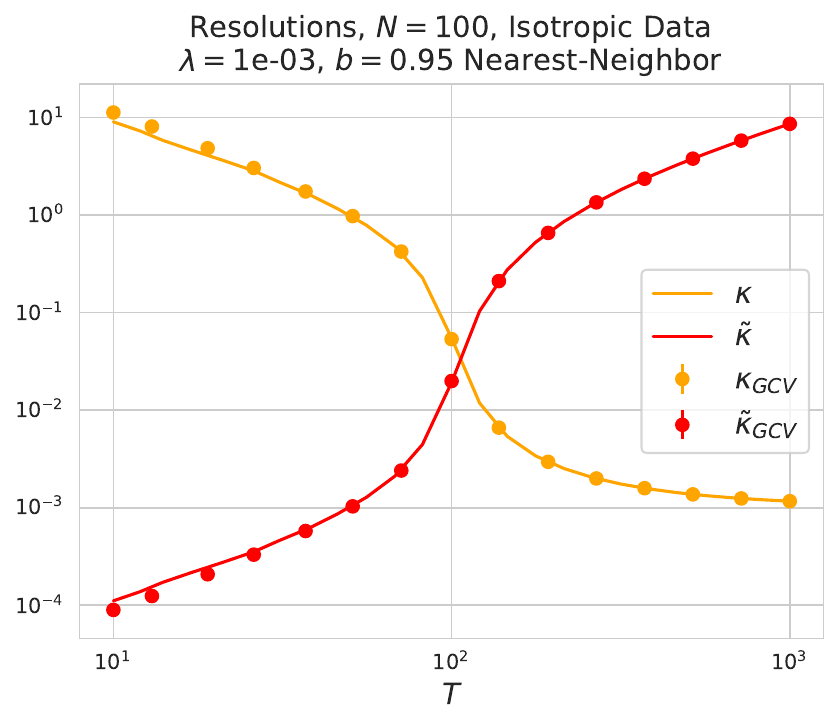}

    \includegraphics[width=0.35\linewidth]{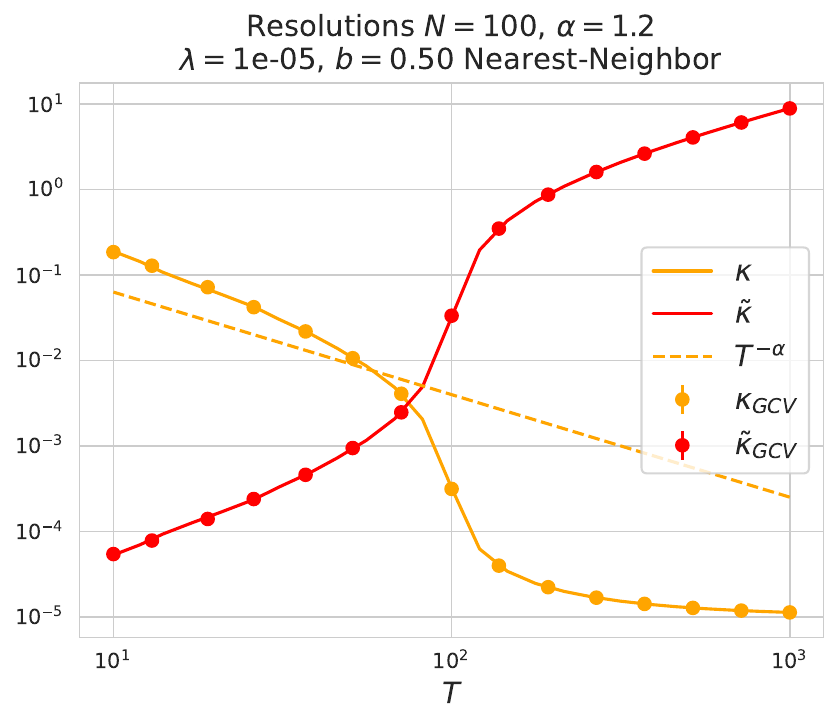}
    \hfill
    \includegraphics[width=0.35\linewidth]{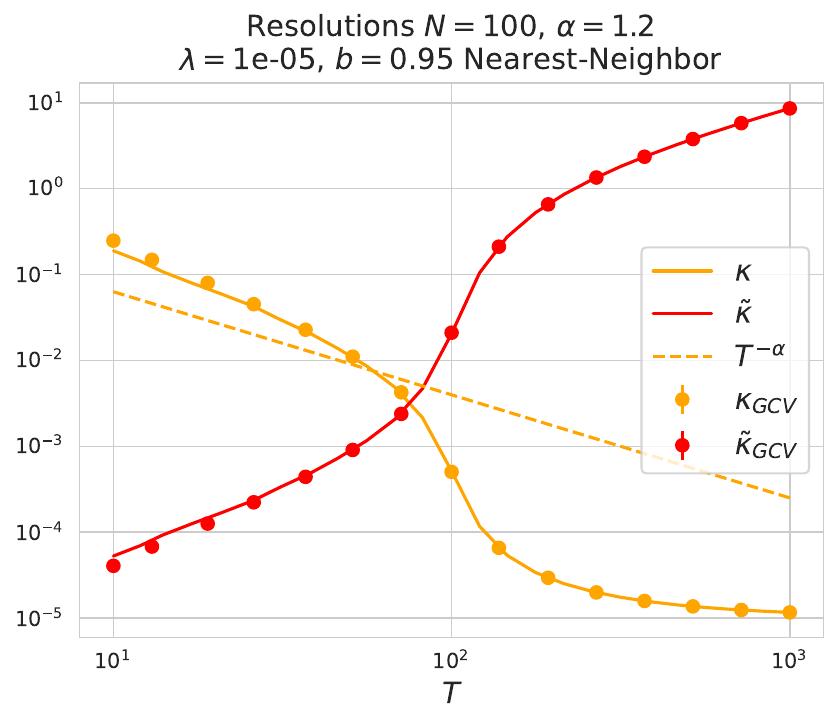}
    
    \caption{Plots of the resolution $\kappa, \tilde \kappa$ under nearest neighbor correlations.  Top: Isotropic Data. Bottom: Anisotropic power law data with the eigenvalues of $\S$ having power law decay $k^{-\alpha}$, $\alpha=1.2$. Left: $b = 0.5$, weakly correlated. Right: $b = 0.95$, strongly correlated. We fix $N=100$ and vary $T$.   Solid lines are theory dots with error bars are empirics. Overlaid in dashed is the expected scaling in the overparameterized regime when $\alpha > 1$.  }
    \label{fig:res_nn}
\end{figure}

\begin{figure}[ht]
    \centering
    \includegraphics[width=0.35\linewidth]{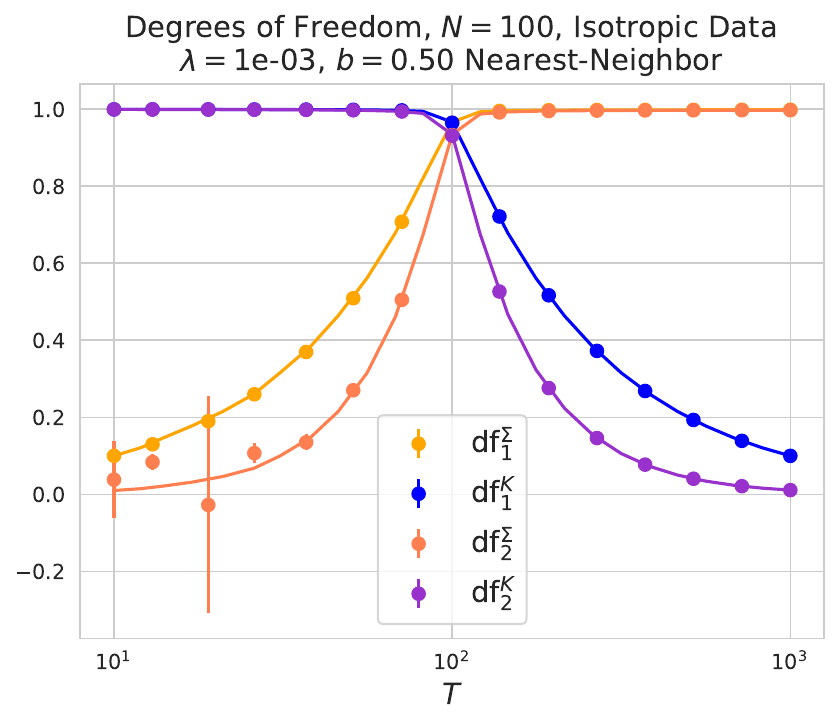}
    \hfill
    \includegraphics[width=0.35\linewidth]{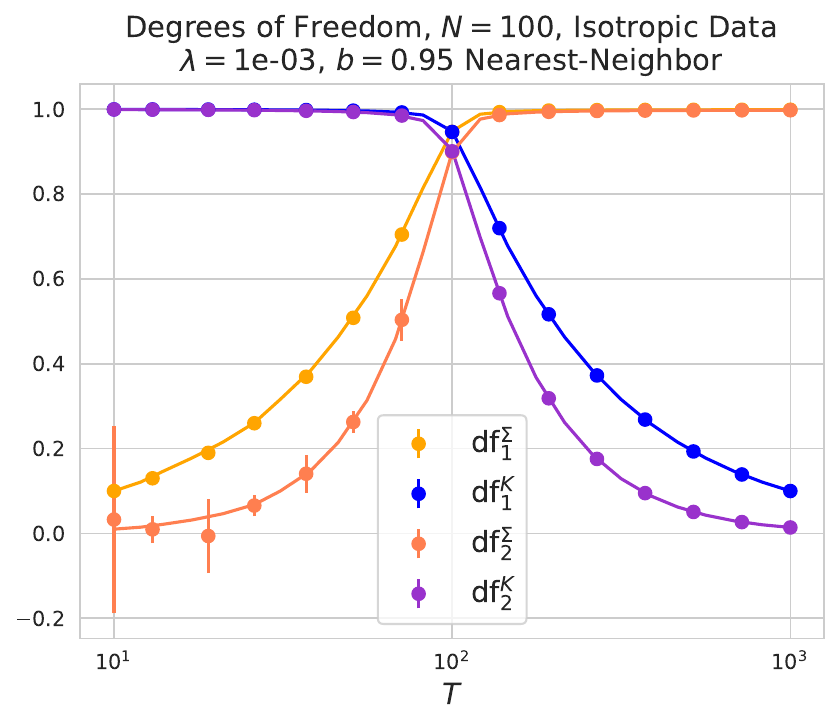}

    \includegraphics[width=0.35\linewidth]{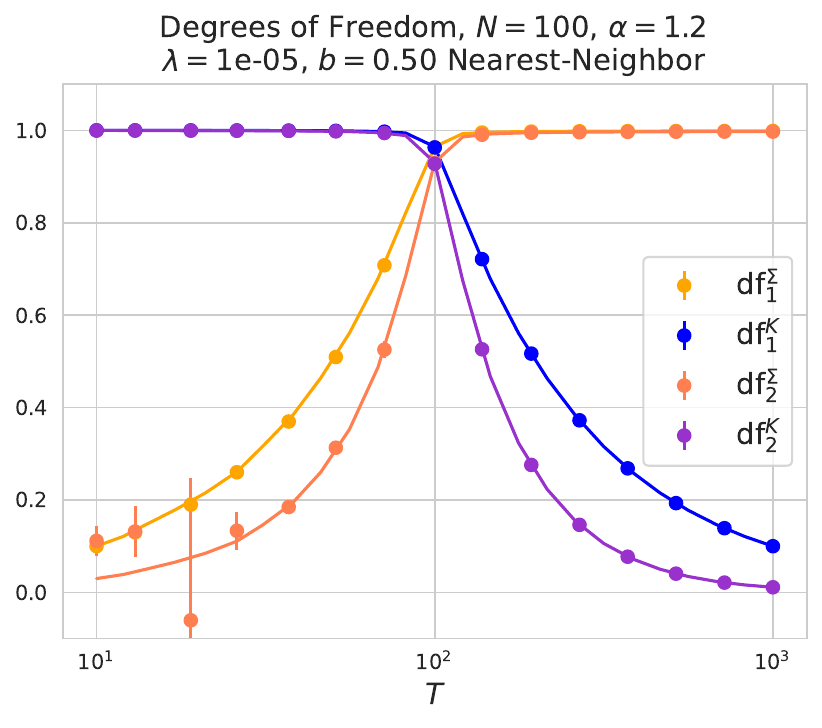}
    \hfill
    \includegraphics[width=0.35\linewidth]{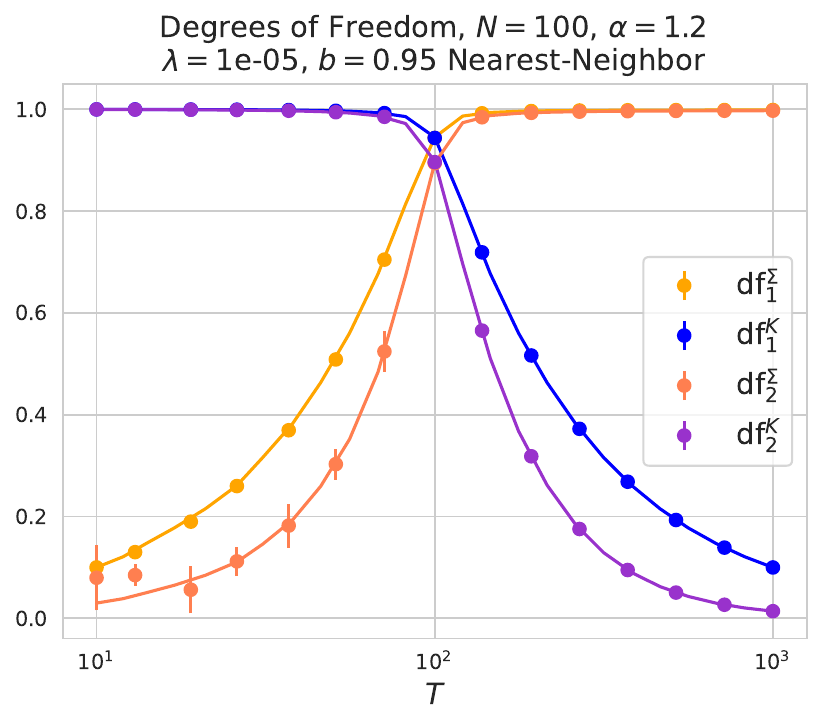}
    
    \caption{Plots of the the degrees of freedom $\df_1, \tilde \df_1, \df_2, \tilde \df_2$ under nearest-neighbor correlations as in Figure \ref{fig:res_nn}. Solid lines are theory, dots with error bars are empirics. The lack of substantial error bars stems from the fact that all quantities concentrate. We find that although $\df_2$ can be numerically sensitive, $\tilde \df_2$ remains numerically precise even in the presence of strong correlations.}
    \label{fig:df_nn}
\end{figure}


\subsection{Power-law correlations}

For power law correlations, we do not have an explicit analytic formula for $S_{\K}$. We instead estimate it by interpolating $\df_{\K}$ as a function of $\lambda$ and esimating the functional inverse $\df^{-1}_{\K}$, as discussed in Section \ref{app:estimate_SK} . We ensure our interpolator is compatible with autograd so that we can run Algorithm \ref{code:alg} to get estimators for $\df_2, \tilde \df_2$ and thus for the CorrGCV.

We show two examples with power-law correlations in Figure \ref{fig:power_correlation}. We further plot the resolutions in Figure \ref{fig:res_pow} and degrees of freedom in Figure \ref{fig:df_pow} across two different values of $\chi=1.5, 0.1$ in both the isotropic and anisotropic case. 

\begin{figure}[ht]
    \centering
    \includegraphics[width=0.48\linewidth]{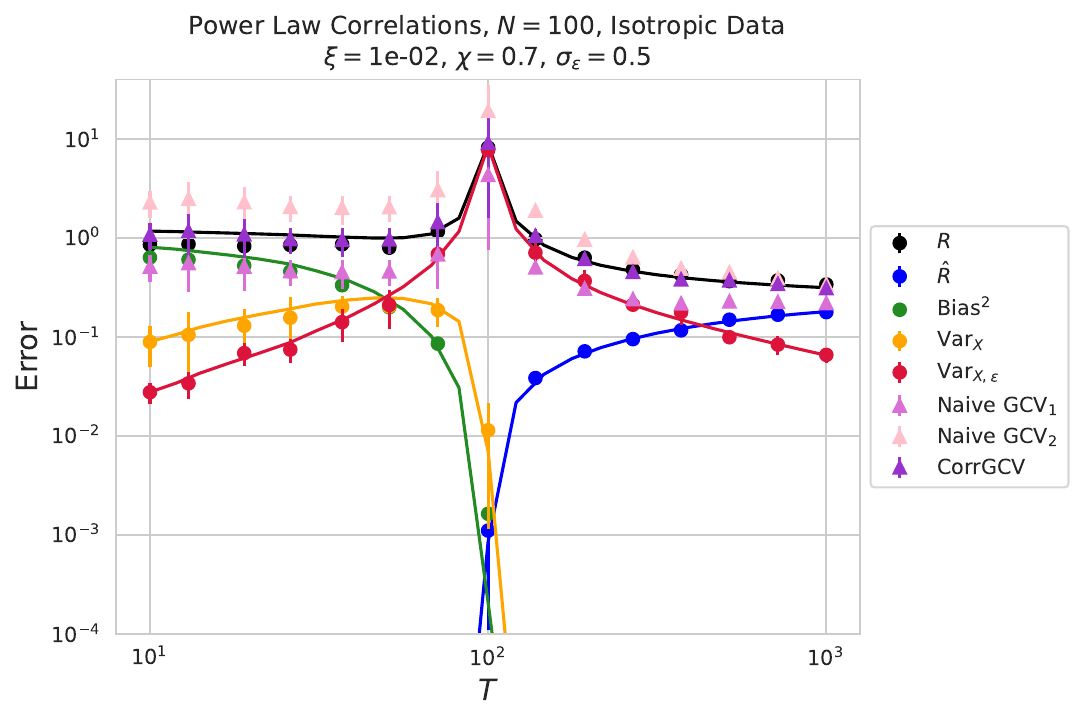}
    \hfill
    \includegraphics[width=0.48\linewidth]{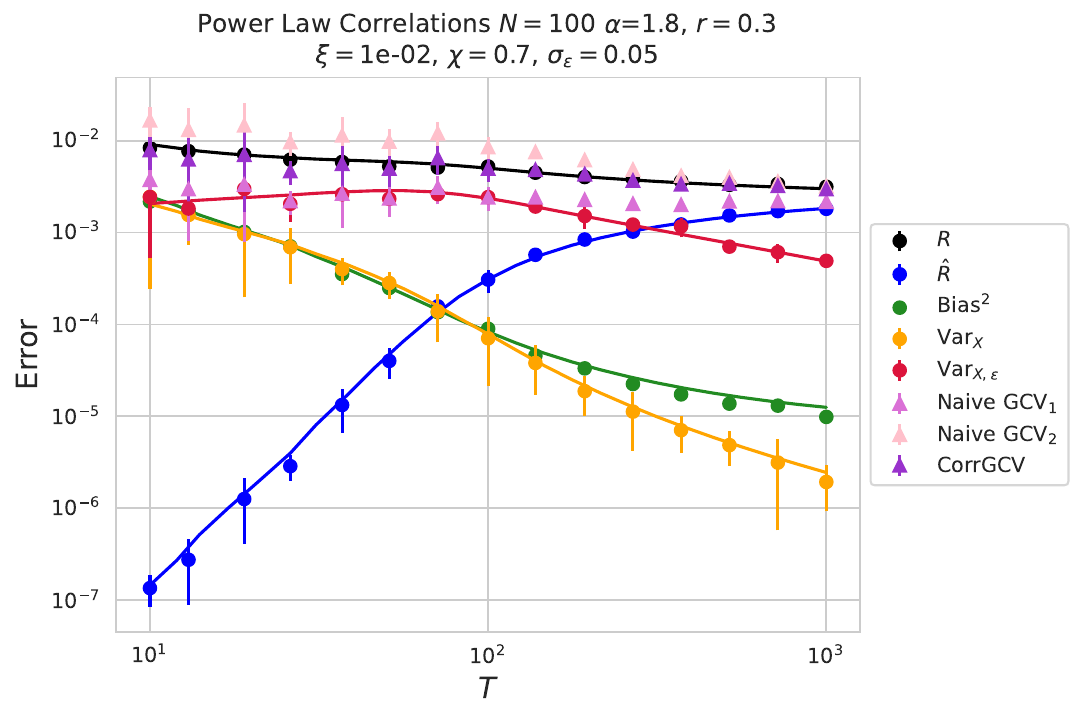}
    \caption{Risks, Sources of Variance, and Risk Estimators, similar to Figure 1, but with power law correlations $\mathbb E[\x_t \cdot \x_s] \propto (1+|t - s|)^{-\chi}$ with exponent $\chi = 0.7$. a) Isotropic covariates. b) Structured covariates under source and capacity conditions.}
    \label{fig:power_correlation}
\end{figure}

\begin{figure}[ht]
    \centering
    \includegraphics[width=0.35\linewidth]{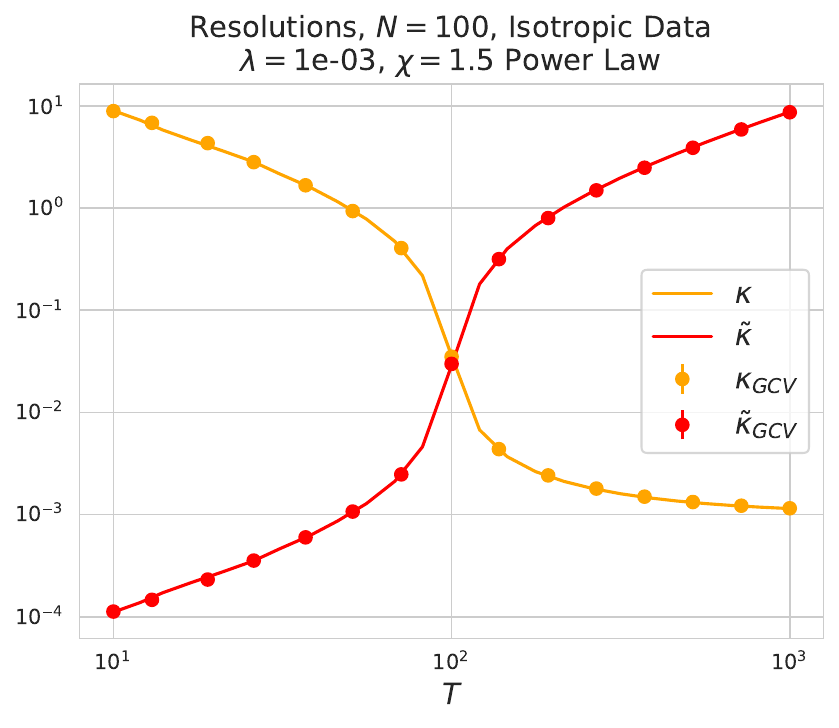}
    \hfill
    \includegraphics[width=0.35\linewidth]{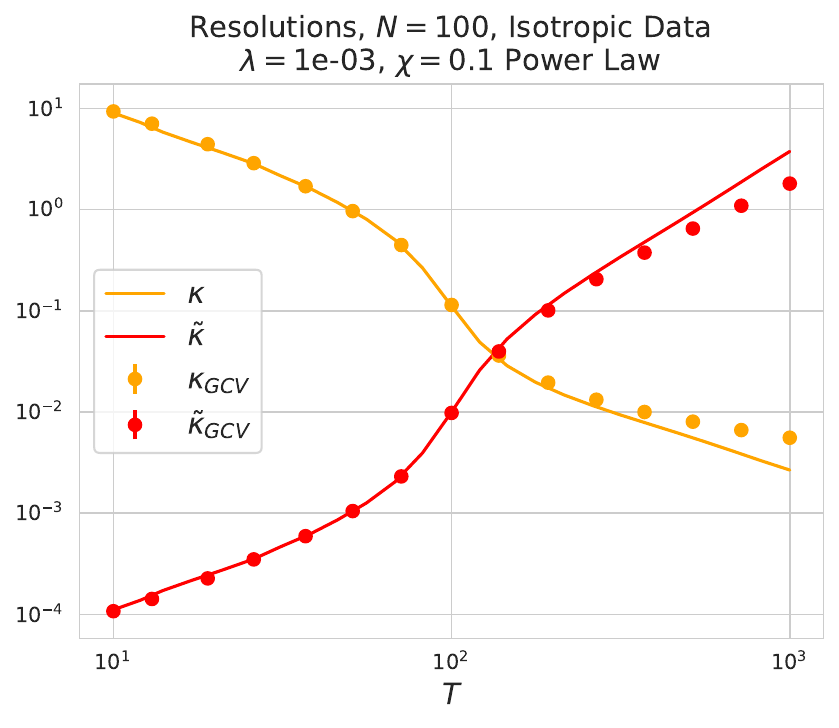}

    \includegraphics[width=0.35\linewidth]{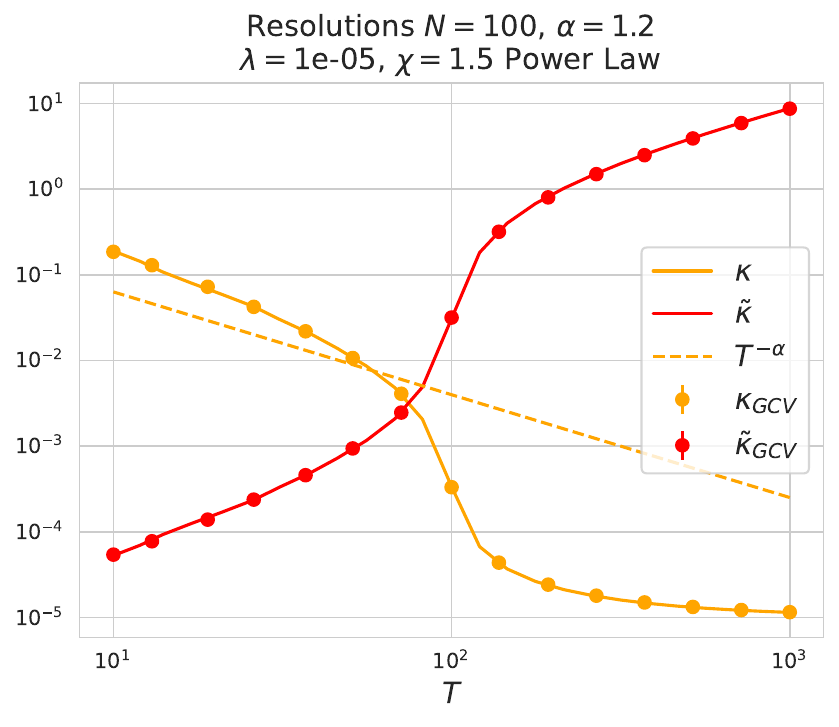}
    \hfill
    \includegraphics[width=0.35\linewidth]{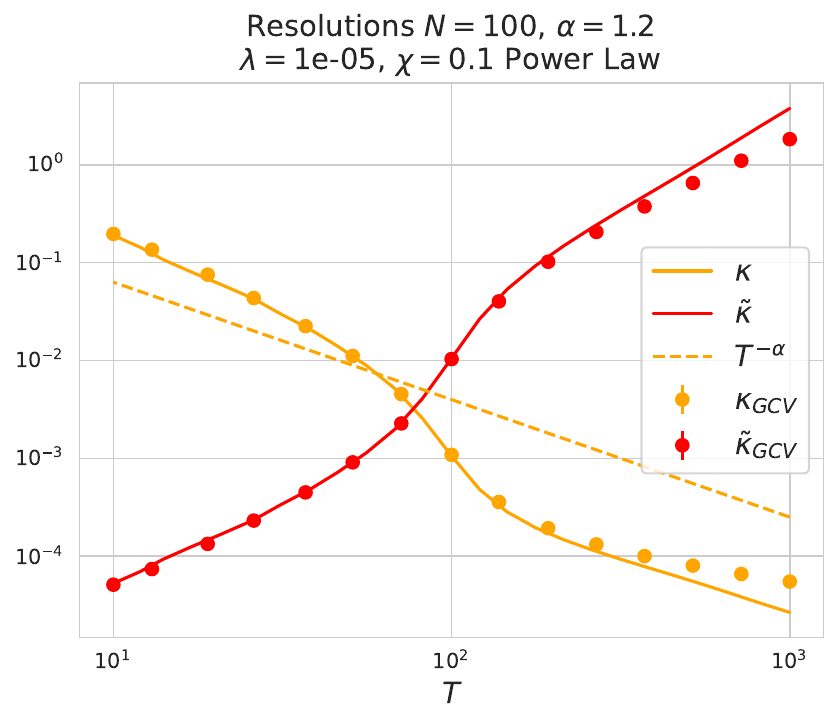}

    \caption{Plots of the resolution $\kappa, \tilde \kappa$ under power law correlations.   Top: Isotropic Data. Bottom: Anisotropic power law data with the eigenvalues of $\S$ having power law decay $k^{-\alpha}$, $\alpha=1.2$. Overlaid in dashed is the expected scaling in the overparameterized regime. Left: $\chi = 1.5$, weakly correlated. Right: $\chi = 0.1$, strongly correlated. We fix $N=100$ and vary $T$. For strongly correlated data, we see a slight deviation between $\kappa$ in theory and empirics. This is due to the low-degree polynomial interpolation estimator of $S_{\K}(q \df_1)$ performing  more poorly near $q \df_1 = 1$. This can be resolved with a more flexible interpolant. }
    \label{fig:res_pow}
\end{figure}

\begin{figure}[ht]
    \centering
    \includegraphics[width=0.35\linewidth]{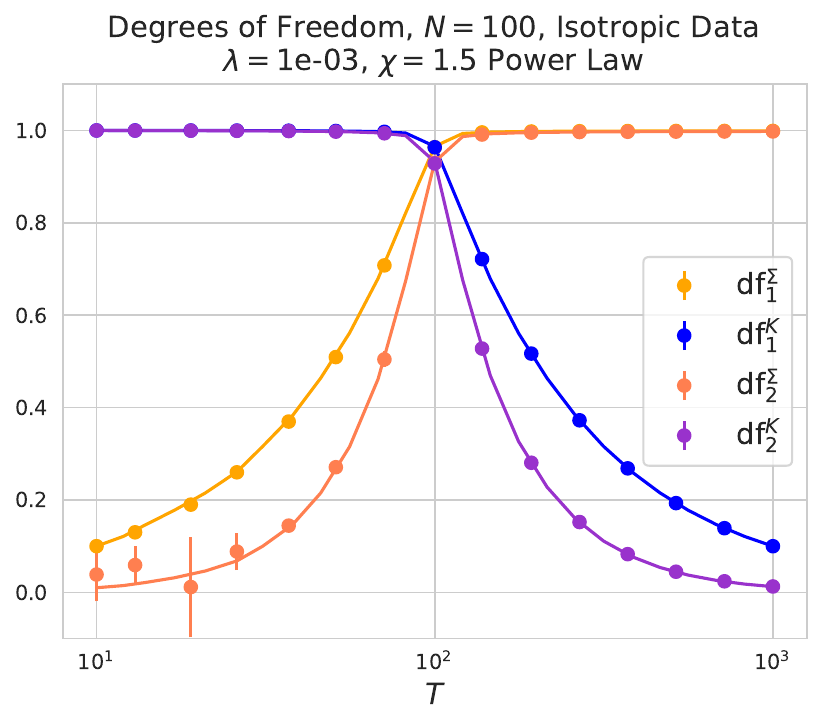}
    \hfill
    \includegraphics[width=0.35\linewidth]{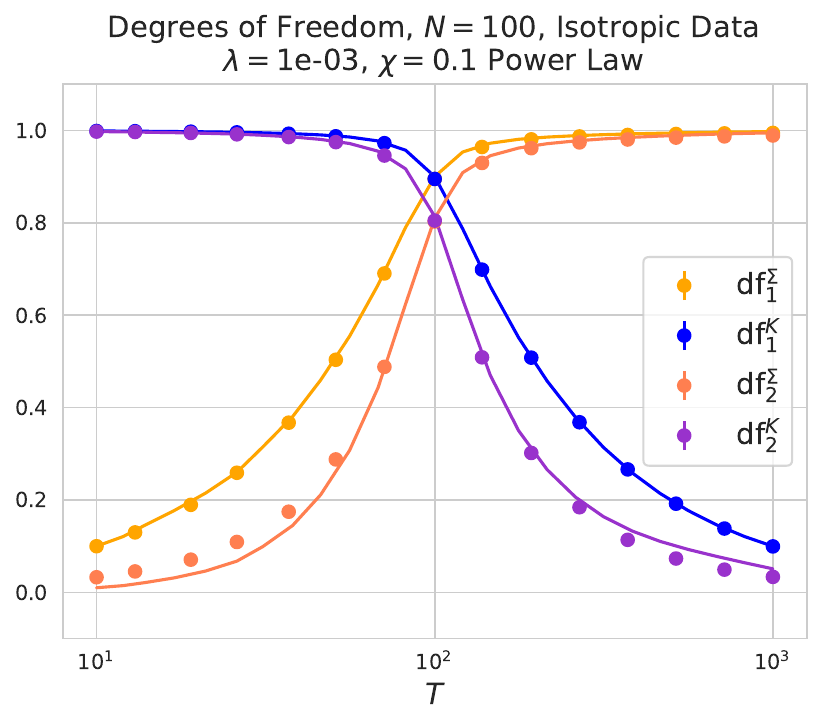}

    \includegraphics[width=0.35\linewidth]{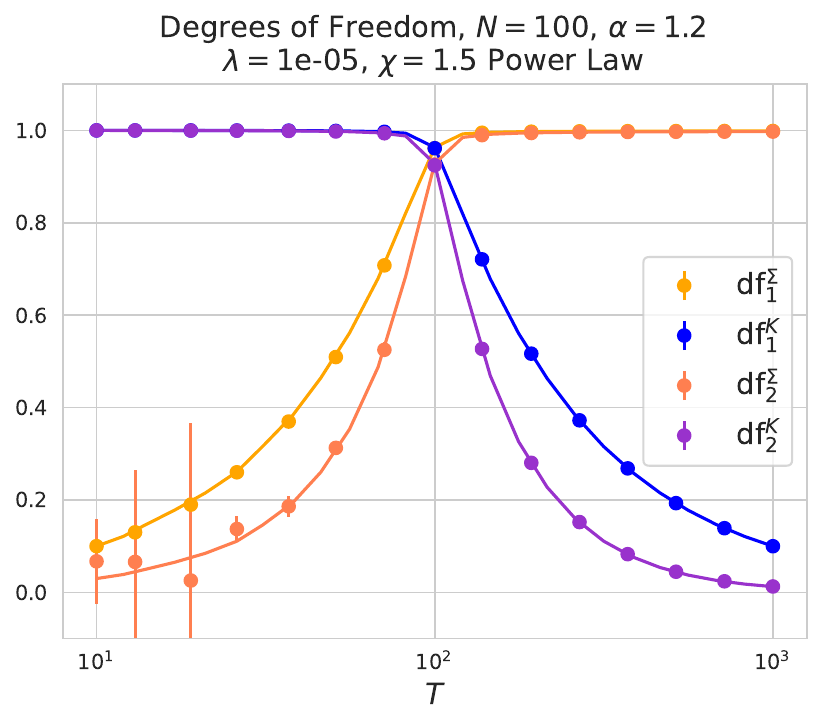}
    \hfill
    \includegraphics[width=0.35\linewidth]{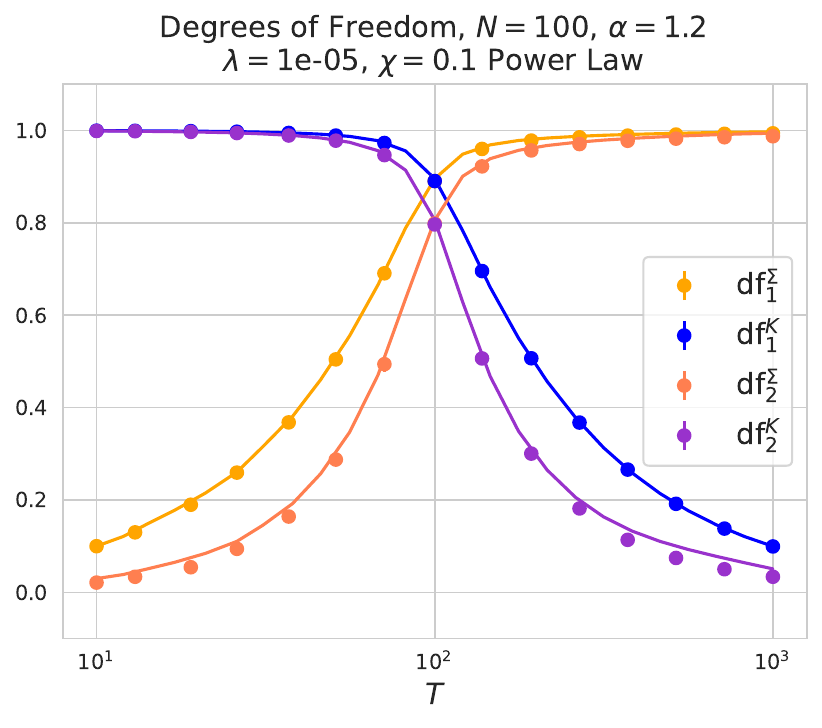}
    
    \caption{Plots of the the degrees of freedom $\df_1, \tilde \df_1, \df_2, \tilde \df_2$ under power law correlations as in Figure \ref{fig:res_pow}. Solid lines are theory, dots with error bars are empirics. The lack of substantial error bars stems from the fact that all quantities concentrate. We find that although $\df_2$ can be numerically sensitive, $\tilde \df_2$ remains numerically precise even in the presence of strong correlations.}
    \label{fig:df_pow}
\end{figure}

\clearpage

\section{Experimental details}

Error bars are always reported over ten runs of the same regression over different datasets. We ensembled over these datasets to calculate the $\mathrm{Var}_{\X}$ term empirically. We also held out a target free of label noise to calculate the  $\mathrm{Var}_{\X \e}$ component of the variance empirically. 

We used JAX \cite{jax2018github} to perform all the linear algebraic manipulations. All experiments were done primarily on a CPU with very little compute required. We also tested a speedup of running our code on a GPU, amounting to less than 1 GPU-day of compute usage. As mentioned in the main text, all code is available at \href{https://github.com/Pehlevan-Group/S_transform}{https://github.com/Pehlevan-Group/S\_transform}. 

\clearpage 

\bibliography{refs}

@article{adlam2020understanding,
	title        = {Understanding double descent requires a fine-grained bias-variance decomposition},
	author       = {Adlam, Ben and Pennington, Jeffrey},
	year         = {2020},
	journal      = {Advances in neural information processing systems},
	volume       = {33},
	pages        = {11022--11032}
}

@article{advani2016convex,
	title        = {Statistical Mechanics of Optimal Convex Inference in High Dimensions},
	author       = {Advani, Madhu and Ganguli, Surya},
	year         = {2016},
	month        = {Aug},
	journal      = {Phys. Rev. X},
	publisher    = {American Physical Society},
	volume       = {6},
	pages        = {031034},
	doi          = {10.1103/PhysRevX.6.031034},
	url          = {https://link.aps.org/doi/10.1103/PhysRevX.6.031034},
	issue        = {3},
	numpages     = {16}
}

@article{advani2020high,
	title        = {High-dimensional dynamics of generalization error in neural networks},
	author       = {Advani, Madhu S and Saxe, Andrew M and Sompolinsky, Haim},
	year         = {2020},
	journal      = {Neural Networks},
	publisher    = {Elsevier},
	volume       = {132},
	pages        = {428--446}
}

@article{altman1990smoothing,
	title        = {Kernel Smoothing of Data with Correlated Errors},
	author       = {N. S. Altman},
	year         = {1990},
	journal      = {Journal of the American Statistical Association},
	publisher    = {Taylor \& Francis},
	volume       = {85},
	number       = {411},
	pages        = {749--759},
	doi          = {10.1080/01621459.1990.10474936},
	url          = {https://www.tandfonline.com/doi/abs/10.1080/01621459.1990.10474936},
	eprint       = {https://www.tandfonline.com/doi/pdf/10.1080/01621459.1990.10474936}
}

@article{atanasov2024scaling,
	doi = {10.1088/1742-5468/ae4bba},
    url = {https://doi.org/10.1088/1742-5468/ae4bba},
    year = {2026},
    month = {may},
    publisher = {IOP Publishing},
    volume = {2026},
    number = {4},
    pages = {043404},
    author = {Atanasov, Alexander and Zavatone-Veth, Jacob A and Pehlevan, Cengiz},
    title = {Scaling and renormalization in high-dimensional regression},
    journal = {Journal of Statistical Mechanics: Theory and Experiment},
}

@article{atanasov2025twopt,
	title        = {Two-point deterministic equivalence for stochastic gradient dynamics in linear models},
	author       = {Alexander Atanasov and Blake Bordelon and Jacob A. Zavatone-Veth and Courtney Paquette and Cengiz Pehlevan},
	year         = {2026},
	journal      = {Advances in Theoretical and Mathematical Physics},
	doi          = {10.4310/ATMP.260412235902},
    volume = {30},
    number = {1},
    pages = {3--39}
}

@article{bach2024high,
	title        = {High-dimensional analysis of double descent for linear regression with random projections},
	author       = {Bach, Francis},
	year         = {2024},
	journal      = {SIAM Journal on Mathematics of Data Science},
	publisher    = {SIAM},
	volume       = {6},
	number       = {1},
	pages        = {26--50}
}

@article{bahri2021explaining,
	title        = {Explaining neural scaling laws},
	author       = {Yasaman Bahri  and Ethan Dyer  and Jared Kaplan  and Jaehoon Lee  and Utkarsh Sharma},
	year         = {2024},
	journal      = {Proceedings of the National Academy of Sciences},
	volume       = {121},
	number       = {27},
	pages        = {e2311878121},
	doi          = {10.1073/pnas.2311878121},
	url          = {https://www.pnas.org/doi/abs/10.1073/pnas.2311878121},
	eprint       = {https://www.pnas.org/doi/pdf/10.1073/pnas.2311878121}
}

@article{basak2014limiting,
	title        = {{Limiting spectral distribution of sample autocovariance matrices}},
	author       = {Anirban Basak and Arup Bose and Sanchayan Sen},
	year         = {2014},
	journal      = {Bernoulli},
	publisher    = {Bernoulli Society for Mathematical Statistics and Probability},
	volume       = {20},
	number       = {3},
	pages        = {1234 -- 1259},
	doi          = {10.3150/13-BEJ520},
	url          = {https://doi.org/10.3150/13-BEJ520}
}

@article{bates2024cv,
	title        = {Cross-Validation: What Does It Estimate and How Well Does It Do It?},
	author       = {Bates, Stephen and Hastie, Trevor and Tibshirani, Robert},
	year         = {2024},
	journal      = {Journal of the American Statistical Association},
	publisher    = {Taylor \& Francis},
	volume       = {119},
	number       = {546},
	pages        = {1434--1445},
	doi          = {10.1080/01621459.2023.2197686},
	url          = {https://doi.org/10.1080/01621459.2023.2197686},
	eprint       = {https://doi.org/10.1080/01621459.2023.2197686}
}

@article{bigot2024variance_profile,
	title        = {High-dimensional analysis of ridge regression for non-identically distributed data with a variance profile},
	author       = {J{\'e}r{\'e}mie Bigot and Issa-Mbenard Dabo and Camille Male},
	year         = {2024},
	journal      = {arXiv},
	url          = {https://arxiv.org/abs/2403.20200},
	eprint       = {2403.20200},
	archiveprefix = {arXiv},
	primaryclass = {math.ST}
}

@article{boettcher2009toepliz,
	title        = {The First Order Asymptotics of the Extreme Eigenvectors of Certain {H}ermitian {T}oeplitz Matrices},
	author       = {B{\"o}ttcher, A. and Grudsky, S. and Maksimenko, E. A. and Unterberger, J.},
	year         = {2009},
	month        = {Feb},
	day          = {01},
	journal      = {Integral Equations and Operator Theory},
	volume       = {63},
	number       = {2},
	pages        = {165--180},
	doi          = {10.1007/s00020-008-1646-x},
	issn         = {1420-8989},
	url          = {https://doi.org/10.1007/s00020-008-1646-x}
}

@inproceedings{bordelon2020spectrum,
	title        = {Spectrum dependent learning curves in kernel regression and wide neural networks},
	author       = {Bordelon, Blake and Canatar, Abdulkadir and Pehlevan, Cengiz},
	year         = {2020},
	booktitle    = {International Conference on Machine Learning},
	pages        = {1024--1034},
	organization = {PMLR}
}

@article{bordelon2024dynamical,
	title        = {A Dynamical Model of Neural Scaling Laws},
	author       = {Bordelon, Blake and Atanasov, Alexander and Pehlevan, Cengiz},
	year         = {2024},
	journal      = {arXiv preprint arXiv:2402.01092}
}

@book{bouchaud2003risk,
	title        = {Theory of Financial Risk and Derivative Pricing: From Statistical Physics to Risk Management},
	author       = {Bouchaud, Jean-Philippe and Potters, Marc},
	year         = {2003},
	publisher    = {Cambridge University Press},
	doi          = {https://doi.org/10.1017/CBO9780511753893},
	place        = {Cambridge},
	edition      = {2}
}

@article{bun2016rotation,
	title        = {Rotational Invariant Estimator for General Noisy Matrices},
	author       = {Bun, Jo\"{e}l and Allez, Romain and Bouchaud, Jean-Philippe and Potters, Marc},
	year         = {2016},
	journal      = {IEEE Transactions on Information Theory},
	volume       = {62},
	number       = {12},
	pages        = {7475--7490},
	doi          = {10.1109/TIT.2016.2616132},
	keywords     = {Covariance matrices;Noise measurement;Eigenvalues and eigenfunctions;Symmetric matrices;Signal resolution;Convergence;Limiting;Estimation theory;multidimensional signal processing;statistics covariance matrices}
}

@article{bun2016rotational,
	title        = {Rotational invariant estimator for general noisy matrices},
	author       = {Bun, Jo{\"e}l and Allez, Romain and Bouchaud, Jean-Philippe and Potters, Marc},
	year         = {2016},
	journal      = {IEEE Transactions on Information Theory},
	publisher    = {IEEE},
	volume       = {62},
	number       = {12},
	pages        = {7475--7490}
}

@article{burda2005density,
	title        = {Eigenvalue density of empirical covariance matrix for correlated samples},
	author       = {Burda, Zdzis{\l}aw and Jurkiewicz, Jerzy and Waclaw, Bartlomiej},
	year         = {2005},
	journal      = {Acta Physica Polonica B},
	volume       = {36},
	number       = {9},
	url          = {https://www.actaphys.uj.edu.pl/fulltext?series=Reg&vol=36&page=2641}
}

@article{burda2005moments,
	title        = {Spectral moments of correlated Wishart matrices},
	author       = {Burda, Zdzis\l{}aw and Jurkiewicz, Jerzy and Wac\l{}aw, Bart\l{}omiej},
	year         = {2005},
	month        = feb,
	journal      = {Phys. Rev. E},
	publisher    = {American Physical Society},
	volume       = {71},
	pages        = {026111},
	doi          = {10.1103/PhysRevE.71.026111},
	url          = {https://link.aps.org/doi/10.1103/PhysRevE.71.026111},
	issue        = {2},
	numpages     = {11}
}

@article{burda2011multiplication,
	title        = {Multiplication law and {$S$} transform for non-{H}ermitian random matrices},
	author       = {Burda, Z and Janik, RA and Nowak, MA},
	year         = {2011},
	journal      = {Physical Review E},
	publisher    = {APS},
	volume       = {84},
	number       = {6},
	pages        = {061125}
}

@article{burda2022cleaning,
	title        = {Cleaning large-dimensional covariance matrices for correlated samples},
	author       = {Burda, Zdzislaw and Jarosz, Andrzej},
	year         = {2022},
	month        = {Mar},
	journal      = {Phys. Rev. E},
	publisher    = {American Physical Society},
	volume       = {105},
	pages        = {034136},
	doi          = {10.1103/PhysRevE.105.034136},
	url          = {https://link.aps.org/doi/10.1103/PhysRevE.105.034136},
	issue        = {3},
	numpages     = {14}
}

@inproceedings{canatar2021out,
	title        = {Out-of-Distribution Generalization in Kernel Regression},
	author       = {Canatar, Abdulkadir and Bordelon, Blake and Pehlevan, Cengiz},
	year         = {2021},
	booktitle    = {Advances in Neural Information Processing Systems},
	publisher    = {Curran Associates, Inc.},
	volume       = {34},
	pages        = {12600--12612},
	url          = {https://proceedings.neurips.cc/paper_files/paper/2021/file/691dcb1d65f31967a874d18383b9da75-Paper.pdf},
	editor       = {M. Ranzato and A. Beygelzimer and Y. Dauphin and P.S. Liang and J. Wortman Vaughan}
}

@article{canatar2021spectral,
	title        = {Spectral bias and task-model alignment explain generalization in kernel regression and infinitely wide neural networks},
	author       = {Canatar, Abdulkadir and Bordelon, Blake and Pehlevan, Cengiz},
	year         = {2021},
	journal      = {Nature communications},
	publisher    = {Nature Publishing Group UK London},
	volume       = {12},
	number       = {1},
	pages        = {2914}
}

@techreport{caponnetto2005fast,
	title        = {Fast rates for regularized least-squares algorithm},
	author       = {Caponnetto, Andrea and Vito, Ernesto De},
	year         = {2005},
	institution  = {Massachusetts Institute of Technology Computer Science and Artificial Intelligence Laboratory}
}

@article{caponnetto2007optimal,
	title        = {Optimal rates for the regularized least-squares algorithm},
	author       = {Caponnetto, Andrea and De Vito, Ernesto},
	year         = {2007},
	journal      = {Foundations of Computational Mathematics},
	publisher    = {Springer},
	volume       = {7},
	pages        = {331--368}
}

@article{carmack2012gccv,
	title        = {Generalised correlated cross-validation},
	author       = {Carmack, Patrick S. and Spence, Jeffrey S. and Schucany, William R.},
	year         = {2012},
	journal      = {Journal of Nonparametric Statistics},
	publisher    = {Taylor \& Francis},
	volume       = {24},
	number       = {2},
	pages        = {269--282},
	doi          = {10.1080/10485252.2012.655733},
	url          = {https://doi.org/10.1080/10485252.2012.655733}
}

@article{chizat2019lazy,
	title        = {On lazy training in differentiable programming},
	author       = {Chizat, Lenaic and Oyallon, Edouard and Bach, Francis},
	year         = {2019},
	journal      = {Advances in neural information processing systems},
	volume       = {32}
}

@article{craven1978smoothing,
	title        = {Smoothing noisy data with spline functions: estimating the correct degree of smoothing by the method of generalized cross-validation},
	author       = {Craven, Peter and Wahba, Grace},
	year         = {1978},
	journal      = {Numerische mathematik},
	publisher    = {Springer},
	volume       = {31},
	number       = {4},
	pages        = {377--403}
}

@article{cui2021generalization,
	title        = {Generalization error rates in kernel regression: The crossover from the noiseless to noisy regime},
	author       = {Cui, Hugo and Loureiro, Bruno and Krzakala, Florent and Zdeborov{\'a}, Lenka},
	year         = {2021},
	journal      = {Advances in Neural Information Processing Systems},
	volume       = {34},
	pages        = {10131--10143}
}

@article{defilippis2024dimension,
	title        = {Dimension-free deterministic equivalents for random feature regression},
	author       = {Defilippis, Leonardo and Loureiro, Bruno and Misiakiewicz, Theodor},
	year         = {2024},
	journal      = {arXiv preprint arXiv:2405.15699}
}

@article{dobriban2018prediction,
	title        = {{High-dimensional asymptotics of prediction: Ridge regression and classification}},
	author       = {Edgar Dobriban and Stefan Wager},
	year         = {2018},
	journal      = {The Annals of Statistics},
	publisher    = {Institute of Mathematical Statistics},
	volume       = {46},
	number       = {1},
	pages        = {247 -- 279},
	doi          = {10.1214/17-AOS1549},
	url          = {https://doi.org/10.1214/17-AOS1549}
}

@inproceedings{gerace2020generalisation,
	title        = {Generalisation error in learning with random features and the hidden manifold model},
	author       = {Gerace, Federica and Loureiro, Bruno and Krzakala, Florent and M{\'e}zard, Marc and Zdeborov{\'a}, Lenka},
	year         = {2020},
	booktitle    = {International Conference on Machine Learning},
	pages        = {3452--3462},
	organization = {PMLR}
}

@article{golub1979generalized,
	title        = {Generalized cross-validation as a method for choosing a good ridge parameter},
	author       = {Golub, Gene H and Heath, Michael and Wahba, Grace},
	year         = {1979},
	journal      = {Technometrics},
	publisher    = {Taylor \& Francis},
	volume       = {21},
	number       = {2},
	pages        = {215--223}
}

@article{gu2005optimal,
	title        = {{Optimal smoothing in nonparametric mixed-effect models}},
	author       = {Chong Gu and Ping Ma},
	year         = {2005},
	journal      = {The Annals of Statistics},
	publisher    = {Institute of Mathematical Statistics},
	volume       = {33},
	number       = {3},
	pages        = {1357 -- 1379},
	doi          = {10.1214/009053605000000110},
	url          = {https://doi.org/10.1214/009053605000000110},
	keywords     = {Correlated error, generalized cross-validation, longitudinal data, mixed-effect model, penalized least squares, repeated measures, smoothing spline}
}

@book{hastie2009elements,
	title        = {The elements of statistical learning: data mining, inference, and prediction},
	author       = {Hastie, Trevor and Tibshirani, Robert and Friedman, Jerome H and Friedman, Jerome H},
	year         = {2009},
	publisher    = {Springer},
	volume       = {2}
}

@article{hastie2022surprises,
	title        = {Surprises in high-dimensional ridgeless least squares interpolation},
	author       = {Hastie, Trevor and Montanari, Andrea and Rosset, Saharon and Tibshirani, Ryan J},
	year         = {2022},
	journal      = {The Annals of Statistics},
	publisher    = {Institute of Mathematical Statistics},
	volume       = {50},
	number       = {2},
	pages        = {949--986}
}

@article{hu2022universality,
	title        = {Universality laws for high-dimensional learning with random features},
	author       = {Hu, Hong and Lu, Yue M},
	year         = {2022},
	journal      = {IEEE Transactions on Information Theory},
	publisher    = {IEEE},
	volume       = {69},
	number       = {3},
	pages        = {1932--1964}
}

@article{jacot2018neural,
	title        = {Neural tangent kernel: Convergence and generalization in neural networks},
	author       = {Jacot, Arthur and Gabriel, Franck and Hongler, Cl{\'e}ment},
	year         = {2018},
	journal      = {Advances in neural information processing systems},
	volume       = {31}
}

@article{jacot2020kernel,
	title        = {Kernel alignment risk estimator: Risk prediction from training data},
	author       = {Jacot, Arthur and Simsek, Berfin and Spadaro, Francesco and Hongler, Cl{\'e}ment and Gabriel, Franck},
	year         = {2020},
	journal      = {Advances in neural information processing systems},
	volume       = {33},
	pages        = {15568--15578}
}

@misc{jax2018github,
	title        = {{JAX}: composable transformations of {P}ython+{N}um{P}y programs},
	author       = {James Bradbury and Roy Frostig and Peter Hawkins and Matthew James Johnson and Chris Leary and Dougal Maclaurin and George Necula and Adam Paszke and Jake Vander{P}las and Skye Wanderman-{M}ilne and Qiao Zhang},
	year         = {2018},
	url          = {http://github.com/google/jax},
	version      = {0.3.13}
}

@article{kaplan2020scaling,
	title        = {Scaling laws for neural language models},
	author       = {Kaplan, Jared and McCandlish, Sam and Henighan, Tom and Brown, Tom B and Chess, Benjamin and Child, Rewon and Gray, Scott and Radford, Alec and Wu, Jeffrey and Amodei, Dario},
	year         = {2020},
	journal      = {arXiv preprint arXiv:2001.08361}
}

@article{kuhn2012spectra,
	title        = {Spectra of empirical auto-covariance matrices},
	author       = {K{\"u}hn, Reimer and Sollich, Peter},
	year         = {2012},
	journal      = {Europhysics Letters},
	publisher    = {IOP Publishing},
	volume       = {99},
	number       = {2},
	pages        = {20008}
}

@article{lin2021anova,
	title        = {What Causes the Test Error? Going Beyond Bias-Variance via ANOVA},
	author       = {Licong Lin and Edgar Dobriban},
	year         = {2021},
	journal      = {Journal of Machine Learning Research},
	volume       = {22},
	number       = {155},
	pages        = {1--82},
	url          = {http://jmlr.org/papers/v22/20-1211.html}
}

@article{loureiro2021learning,
	title        = {Learning curves of generic features maps for realistic datasets with a teacher-student model},
	author       = {Loureiro, Bruno and Gerbelot, Cedric and Cui, Hugo and Goldt, Sebastian and Krzakala, Florent and Mezard, Marc and Zdeborov{\'a}, Lenka},
	year         = {2021},
	journal      = {Advances in Neural Information Processing Systems},
	volume       = {34},
	pages        = {18137--18151}
}

@article{lu2024icl,
author = {Yue M. Lu  and Mary Letey  and Jacob A. Zavatone-Veth  and Anindita Maiti  and Cengiz Pehlevan },
title = {Asymptotic theory of in-context learning by linear attention},
journal = {Proceedings of the National Academy of Sciences},
volume = {122},
number = {28},
pages = {e2502599122},
year = {2025},
doi = {10.1073/pnas.2502599122},
URL = {https://www.pnas.org/doi/abs/10.1073/pnas.2502599122},
eprint = {https://www.pnas.org/doi/pdf/10.1073/pnas.2502599122}
}

@article{lukas2010robust,
	title        = {{Robust GCV choice of the regularization parameter for correlated data}},
	author       = {Mark A. Lukas},
	year         = {2010},
	journal      = {Journal of Integral Equations and Applications},
	publisher    = {Rocky Mountain Mathematics Consortium},
	volume       = {22},
	number       = {3},
	pages        = {519 -- 547},
	doi          = {10.1216/JIE-2010-22-3-519},
	url          = {https://doi.org/10.1216/JIE-2010-22-3-519}
}

@article{luo2024rotigcv,
	title        = {ROTI-GCV: Generalized Cross-Validation for right-ROTationally Invariant Data},
	author       = {Kevin Luo and Yufan Li and Pragya Sur},
	year         = {2024},
	journal      = {arXiv},
	url          = {https://arxiv.org/abs/2406.11666},
	eprint       = {2406.11666},
	primaryclass = {math.ST}
}

@article{maloney2022solvable,
	title        = {A Solvable Model of Neural Scaling Laws},
	author       = {Maloney, Alexander and Roberts, Daniel A and Sully, James},
	year         = {2022},
	journal      = {arXiv preprint arXiv:2210.16859}
}

@inproceedings{mel2021anisotropic,
	title        = {Anisotropic random feature regression in high dimensions},
	author       = {Mel, Gabriel and Pennington, Jeffrey},
	year         = {2021},
	booktitle    = {International Conference on Learning Representations}
}

@inproceedings{mel2021theory,
	title        = {A theory of high dimensional regression with arbitrary correlations between input features and target functions: sample complexity, multiple descent curves and a hierarchy of phase transitions},
	author       = {Mel, Gabriel and Ganguli, Surya},
	year         = {2021},
	month        = {18--24 Jul},
	booktitle    = {Proceedings of the 38th International Conference on Machine Learning},
	publisher    = {PMLR},
	series       = {Proceedings of Machine Learning Research},
	volume       = {139},
	pages        = {7578--7587},
	url          = {https://proceedings.mlr.press/v139/mel21a.html},
	editor       = {Meila, Marina and Zhang, Tong},
	pdf          = {http://proceedings.mlr.press/v139/mel21a/mel21a.pdf}
}

@book{mingo2017free,
	title        = {Free probability and random matrices},
	author       = {Mingo, James A and Speicher, Roland},
	year         = {2017},
	publisher    = {Springer},
	volume       = {35}
}

@article{misiakiewicz2024non,
	title        = {A non-asymptotic theory of Kernel Ridge Regression: deterministic equivalents, test error, and {GCV} estimator},
	author       = {Misiakiewicz, Theodor and Saeed, Basil},
	year         = {2024},
	journal      = {arXiv preprint arXiv:2403.08938}
}

@inproceedings{montanari2022universality,
	title        = {Universality of empirical risk minimization},
	author       = {Montanari, Andrea and Saeed, Basil N.},
	year         = {2022},
	month        = {02--05 Jul},
	booktitle    = {Proceedings of Thirty Fifth Conference on Learning Theory},
	publisher    = {PMLR},
	series       = {Proceedings of Machine Learning Research},
	volume       = {178},
	pages        = {4310--4312},
	url          = {https://proceedings.mlr.press/v178/montanari22a.html},
	editor       = {Loh, Po-Ling and Raginsky, Maxim},
	pdf          = {https://proceedings.mlr.press/v178/montanari22a/montanari22a.pdf}
}

@article{mudelsee2014climate,
	title        = {Climate Time Series Analysis: Classical Statistical and Bootstrap Methods},
	author       = {Mudelsee, M},
	year         = {2014},
	journal      = {Atmospheric and Oceanographic Sciences Library},
	volume       = {51},
	pages        = {1--454}
}

@book{nica2006lectures,
	title        = {Lectures on the combinatorics of free probability},
	author       = {Nica, Alexandru and Speicher, Roland},
	year         = {2006},
	publisher    = {Cambridge University Press},
	volume       = {13}
}

@article{noschese2013tridiagonal,
	title        = {Tridiagonal {T}oeplitz matrices: properties and novel applications},
	author       = {Noschese, Silvia and Pasquini, Lionello and Reichel, Lothar},
	year         = {2013},
	journal      = {Numerical Linear Algebra with Applications},
	volume       = {20},
	number       = {2},
	pages        = {302--326},
	doi          = {https://doi.org/10.1002/nla.1811},
	url          = {https://onlinelibrary.wiley.com/doi/abs/10.1002/nla.1811},
	eprint       = {https://onlinelibrary.wiley.com/doi/pdf/10.1002/nla.1811}
}

@article{opsomer2001nonparametric,
	title        = {{Nonparametric Regression with Correlated Errors}},
	author       = {Jean Opsomer and Yuedong Wang and Yuhong Yang},
	year         = {2001},
	journal      = {Statistical Science},
	publisher    = {Institute of Mathematical Statistics},
	volume       = {16},
	number       = {2},
	pages        = {134 -- 153},
	doi          = {10.1214/ss/1009213287},
	url          = {https://doi.org/10.1214/ss/1009213287},
	keywords     = {adaptive estimation, kernel regression, smoothing parameter selection, splines, wavelet regression}
}

@inproceedings{patil2024asymptotically,
	title        = {Asymptotically Free Sketched Ridge Ensembles: Risks, Cross-Validation, and Tuning},
	author       = {Pratik Patil and Daniel LeJeune},
	year         = {2024},
	booktitle    = {The Twelfth International Conference on Learning Representations},
	url          = {https://openreview.net/forum?id=i9Vs5NGDpk}
}

@article{patil2024ood,
	title        = {Optimal Ridge Regularization for Out-of-Distribution Prediction},
	author       = {Pratik Patil and Jin-Hong Du and Ryan J. Tibshirani},
	year         = {2024},
	journal      = {arXiv},
	url          = {https://arxiv.org/abs/2404.01233},
	eprint       = {2404.01233},
	archiveprefix = {arXiv},
	primaryclass = {math.ST}
}

@book{potters2020first,
	title        = {A first course in random matrix theory: for physicists, engineers and data scientists},
	author       = {Potters, Marc and Bouchaud, Jean-Philippe},
	year         = {2020},
	publisher    = {Cambridge University Press}
}

@article{rabinowicz2022cross,
	title        = {Cross-validation for correlated data},
	author       = {Rabinowicz, Assaf and Rosset, Saharon},
	year         = {2022},
	journal      = {Journal of the American Statistical Association},
	publisher    = {Taylor \& Francis},
	volume       = {117},
	number       = {538},
	pages        = {718--731}
}

@article{spigler2020asymptotic,
	title        = {Asymptotic learning curves of kernel methods: empirical data versus teacher--student paradigm},
	author       = {Spigler, Stefano and Geiger, Mario and Wyart, Matthieu},
	year         = {2020},
	journal      = {Journal of Statistical Mechanics: Theory and Experiment},
	publisher    = {IOP Publishing},
	volume       = {2020},
	number       = {12},
	pages        = {124001}
}

@article{t1973planar,
	title        = {A planar diagram theory for strong interactions},
	author       = {'t Hooft, Gerardus},
	year         = {1973},
	journal      = {Nucl. Phys. B},
	publisher    = {CM-P00060268},
	volume       = {72},
	number       = {CERN-TH-1786},
	pages        = {461--473}
}

@article{tripuraneni2021covariate,
	title        = {Covariate shift in high-dimensional random feature regression},
	author       = {Tripuraneni, Nilesh and Adlam, Ben and Pennington, Jeffrey},
	year         = {2021},
	journal      = {arXiv preprint arXiv:2111.08234}
}

@book{voiculescu1992free,
	title        = {Free random variables},
	author       = {Voiculescu, Dan V and Dykema, Ken J and Nica, Alexandru},
	year         = {1992},
	publisher    = {American Mathematical Society}
}

@book{voiculescu1997free,
	title        = {Free probability theory},
	author       = {Voiculescu, Dan V},
	year         = {1997},
	publisher    = {American Mathematical Soc.},
	volume       = {12}
}

@article{wang1998smoothing,
	title        = {Smoothing Spline Models with Correlated Random Errors},
	author       = {Yuedong Wang},
	year         = {1998},
	journal      = {Journal of the American Statistical Association},
	publisher    = {ASA Website},
	volume       = {93},
	number       = {441},
	pages        = {341--348},
	doi          = {10.1080/01621459.1998.10474115},
	url          = {https://www.tandfonline.com/doi/abs/10.1080/01621459.1998.10474115},
	eprint       = {https://www.tandfonline.com/doi/pdf/10.1080/01621459.1998.10474115}
}

@article{weingarten1978asymptotic,
	title        = {Asymptotic behavior of group integrals in the limit of infinite rank},
	author       = {Weingarten, Don},
	year         = {1978},
	journal      = {Journal of Mathematical Physics},
	publisher    = {American Institute of Physics},
	volume       = {19},
	number       = {5},
	pages        = {999--1001}
}

@article{williams2021statistical,
	title        = {Statistical neuroscience in the single trial limit},
	author       = {Alex H. Williams and Scott W. Linderman},
	year         = {2021},
	journal      = {Current Opinion in Neurobiology},
	volume       = {70},
	pages        = {193--205},
	doi          = {https://doi.org/10.1016/j.conb.2021.10.008},
	issn         = {0959-4388},
	url          = {https://www.sciencedirect.com/science/article/pii/S0959438821001203},
	note         = {Computational Neuroscience}
}

@article{xu2012leavesubjectout,
	title        = {{Asymptotic optimality and efficient computation of the leave-subject-out cross-validation}},
	author       = {Ganggang Xu and Jianhua Z. Huang},
	year         = {2012},
	journal      = {The Annals of Statistics},
	publisher    = {Institute of Mathematical Statistics},
	volume       = {40},
	number       = {6},
	pages        = {3003 -- 3030},
	doi          = {10.1214/12-AOS1063},
	url          = {https://doi.org/10.1214/12-AOS1063},
	keywords     = {cross-validation, generalized estimating equations, multiple smoothing parameters, penalized splines, working correlation matrices}
}

@inproceedings{xu2018optimal,
	title        = {Optimal Tuning for Divide-and-conquer Kernel Ridge Regression with Massive Data},
	author       = {Xu, Ganggang and Shang, Zuofeng and Cheng, Guang},
	year         = {2018},
	month        = {10--15 Jul},
	booktitle    = {Proceedings of the 35th International Conference on Machine Learning},
	publisher    = {PMLR},
	series       = {Proceedings of Machine Learning Research},
	volume       = {80},
	pages        = {5483--5491},
	url          = {https://proceedings.mlr.press/v80/xu18f.html},
	editor       = {Dy, Jennifer and Krause, Andreas},
	pdf          = {http://proceedings.mlr.press/v80/xu18f/xu18f.pdf}
}

@article{xu2019distributed,
	title        = {Distributed Generalized Cross-Validation for Divide-and-Conquer Kernel Ridge Regression and Its Asymptotic Optimality},
	author       = {Ganggang Xu and Zuofeng Shang and Guang Cheng and},
	year         = {2019},
	journal      = {Journal of Computational and Graphical Statistics},
	publisher    = {ASA Website},
	volume       = {28},
	number       = {4},
	pages        = {891--908},
	doi          = {10.1080/10618600.2019.1586714},
	url          = {https://doi.org/10.1080/10618600.2019.1586714},
	eprint       = {https://doi.org/10.1080/10618600.2019.1586714}
}

@inproceedings{zavatone2023learning,
	title        = {Learning curves for deep structured {G}aussian feature models},
	author       = {Zavatone-Veth, Jacob A and Pehlevan, Cengiz},
	year         = {2023},
	booktitle    = {Advances in Neural Information Processing Systems}
}

\end{document}